\documentclass[letterpaper]{article} 
\usepackage{aaai23}  
\usepackage{times}  
\usepackage{helvet}  
\usepackage{courier}  
\usepackage[hyphens]{url}  
\usepackage{graphicx} 
\urlstyle{rm} 
\usepackage{natbib}  
\usepackage{caption} 
\frenchspacing  
\setlength{\pdfpagewidth}{8.5in}  
\setlength{\pdfpageheight}{11in}  
%
\usepackage{algorithm}
\usepackage{algorithmic}

\usepackage{amsmath}
\usepackage{amssymb}
\usepackage{mathtools}
\usepackage{amsthm}
\usepackage{bm}
\usepackage{enumerate}
\usepackage{booktabs}

\DeclareMathOperator{\Sym}{Sym}
\DeclareMathOperator{\erf}{erf}
\DeclareMathOperator{\rank}{rank}

\theoremstyle{plain}

\newtheorem{proposition}{Proposition}

\newtheorem{corollary}{Corollary}
\theoremstyle{definition}

\newtheorem{example}{Example}
\theoremstyle{remark}
\newtheorem{remark}{Remark}

%
\usepackage{newfloat}
\usepackage{listings}
\DeclareCaptionStyle{ruled}{labelfont=normalfont,labelsep=colon,strut=off} 
\lstset{%
	basicstyle={\footnotesize\ttfamily},
	numbers=left,numberstyle=\footnotesize,xleftmargin=2em,
	aboveskip=0pt,belowskip=0pt,%
	showstringspaces=false,tabsize=2,breaklines=true}
\floatstyle{ruled}
\newfloat{listing}{tb}{lst}{}
\floatname{listing}{Listing}
%
\pdfinfo{
/TemplateVersion (2023.1)
}

\setcounter{secnumdepth}{0} 

%


\title{Knowledge Graph Embedding by Normalizing Flows}
\author {
    Changyi Xiao\textsuperscript{\rm 1},
    Xiangnan He\textsuperscript{\rm 1}\thanks{Corresponding author.},
    Yixin Cao\textsuperscript{\rm 2}
}
\affiliations {
    \textsuperscript{\rm 1}School of Data Science, University of Science and Technology of China\\
    \textsuperscript{\rm 2}School of Computing and Information System, Singapore Management University\\
    changyi@mail.ustc.edu.cn,
    \{xiangnanhe, caoyixin2011\}@gmail.com
}

\usepackage{bibentry}

\begin{document}

\maketitle

\begin{abstract}
A key to knowledge graph embedding (KGE) is to choose a proper representation space, e.g., point-wise Euclidean space and complex vector space. In this paper, we propose a unified perspective of embedding and introduce uncertainty into KGE from the view of group theory. Our model can incorporate existing models (i.e., generality), ensure the computation is tractable (i.e., efficiency) and enjoy the expressive power of complex random variables (i.e., expressiveness). The core idea is that we embed entities/relations as elements of a symmetric group, i.e., permutations of a set. Permutations of different sets can reflect different properties of embedding. And the group operation of symmetric groups is easy to compute. In specific, we show that the embedding of many existing models, point vectors, can be seen as elements of a symmetric group. To reflect uncertainty, we first embed entities/relations as permutations of a set of random variables. A permutation can transform a simple random variable into a complex random variable for greater expressiveness, called a normalizing flow. We then define scoring functions by measuring the similarity of two normalizing flows, namely NFE. We construct several instantiating models and prove that they are able to learn logical rules. Experimental results demonstrate the effectiveness of introducing uncertainty and our model. The code is available at \url{https://github.com/changyi7231/NFE}.
\end{abstract}

\section{Introduction}
A knowledge graph (KG) contains a large number of triplets with form $(h, r, t)$, where $h$ is a head entity, $r$ is a relation and $t$ is a tail entity. Existing KGs often suffer from the incompleteness problem. To complement the KGs, knowledge graph embedding (KGE) models map entities and relations into low-dimensional distributed representations and define scoring functions to measure the likelihood of the triplets.

A key to KGE is to choose a proper representation space such that the embedding can make good use of the properties of the space \cite{ji2021survey}. For example, point-wise Euclidean space is widely applied for representing entities/relations due to its simplicity yet effectiveness\cite{yang2014embedding}. Complex vector space is then proposed to learn sophisticated logical rules \cite{sun2018rotate,toutanova2015representing}. Moreover, the usage of quaternion space brings more degree of freedom of rotation than complex space \cite{zhang2019quaternion}. Besides, hyperbolic space is used to capture hierarchical logical rules \cite{chami2020low}. Motivated by the group theory, non-Abelian group is exploited to find the hidden group algebraic structure of relations \cite{yang2020nage}. However, the above works hardly consider the uncertainty information of embedding, which limit the expressiveness.

To reflect uncertainty, a common method is to embed entities/relations as random variables \cite{he2015learning}. However, it has two drawbacks. First, it is difficult to parameterize a complex random variable directly due to the difficulty of computing the partition function \cite{goodfellow2016deep}. Second, it is intractable to compute the probability density function (PDF) of the sum of two random variables due to the difficulty of computing the convolution of two PDFs \cite{bertsekas2008introduction}.

In this paper, we propose a unified perspective of embedding and introduce uncertainty into KGE from the view of group theory. Our model makes the embedding more general and ensures the computation is tractable. We embed entities/relations as elements of a symmetric group, i.e., permutations of a set. (A permutation of a set $X$ is an invertible function from $X$ to $X$.) On one hand, since point vectors can be seen as elements of a symmetric group, our proposed embedding is a generalization of point vectors. Moreover, Cayley's theorem \cite{hungerford2012abstract}, every group is isomorphic to a subgroup of a symmetric group, also shows the generality of our proposed embedding. On the other hand, the group operation of symmetric groups, the composition of functions, is easy to compute. We can embed entities/relations as permutations of different sets to reflect different properties of embedding. For example, if the set is a vector space (or a random variable space), then the permutations of that set reflect the certainty (or uncertainty) property.

In specific, to reflect uncertainty, we embed entities/relations as permutations of a set of random variables. The permutation or invertible function can transform a simple random variable into a complex random variable for greater expressiveness, which is called a normalizing flow \cite{papamakarios2021normalizing,kobyzev2020normalizing}. We can easily parameterize a complex invertible function (instead of a complex random variable), and compute the PDF of a random variable obtained by an invertible function acting on a simple random variable \cite{papamakarios2021normalizing,kobyzev2020normalizing}. Our method can be seen as a generalized reparameterization method \cite{kingma2013auto,rezende2014stochastic}. We then define the scoring function by measuring the similarity of two normalizing flows, namely Normalizing Flows Embedding (NFE). Finally, we instantiate several NFE models by choosing different invertible functions. We further prove that NFE is able to learn logical rules.

The main contributions of this paper are listed below:
\begin{enumerate}[1.]
\item We propose a unified perspective of embedding from the view of group theory, which offers a rigorous theoretical understanding of KGE.
\item We proposed NFE to introduce uncertainty into KGE, which enjoys generality, efficiency and expressiveness.
\item Experimental results demonstrate the effectiveness of introducing uncertainty into KGE and our model.
\end{enumerate}

\section{Background}
In this section, we introduce the related background of our model, knowledge graph embedding, normalizing flows and group theory.

\subsection{Knowledge Graph Embedding}
\label{section:2.1}
Let $\mathcal{E}$ and $\mathcal{R}$ denote the sets of entities and relations, respectively. A KG contains a set of triplets $\mathcal{F}=\{(h,r,t)\}\subset \mathcal{E}\times \mathcal{R}\times \mathcal{E}$. KGE aims at learning the distributed representations or embeddings for entities and relations. It defines a scoring function $f(h,r,t)$ to measure the likelihood of a triplet $(h,r,t)$ based on the embeddings $(\bm{h}, \bm{r}, \bm{t})$. Existing KGE models include translation-based models, multiplicative models and so on \cite{zhang2021neural}.

Translation-based models learn embeddings by translating a head entity to a tail entity through a relation. TransE \cite{bordes2013translating}, a representative model of translation-based models, defines the scoring function as the negative distance between $\bm{h}+\bm{r}$ and $\bm{t}$, i.e., $$f(\bm{h}, \bm{r},\bm{t})=-\|\bm{h}+\bm{r}-\bm{t}\|$$ where $\bm{h}, \bm{r}, \bm{t}\in \mathbb{R}^{n}$ and $\|\cdot\|$ is a norm of a vector.

Multiplicative models measure the likelihood of a triplet by product-based similarity of entities and relations. DistMult \cite{yang2014embedding}, a representative model of multiplicative models, defines the scoring function as the inner product of $\bm{h}$, $\bm{r}$ and $\bm{t}$, i.e., $$f(\bm{h}, \bm{r}, \bm{t})=\langle \bm{h}, \bm{r}, \bm{t}\rangle:=\sum_{i=1}^{n}\bm{h}_{i}\bm{r}_{i}\bm{t}_{i}$$
where $\bm{h}, \bm{r}, \bm{t}\in \mathbb{R}^{n}$ and $\langle \cdot,\cdot,\cdot \rangle$ is the inner product of three vectors.

\subsection{Normalizing Flows}
\label{section:2.2}
A normalizing flow is a sequence of invertible and differentiable functions, which transforms a simple probability distribution ($e.g.$, a standard normal) into a more complex distribution \cite{papamakarios2021normalizing,kobyzev2020normalizing}. Let $\bm{Z}$ be a random variable in $\mathbb{R}^n$ with a known and tractable PDF $p_{\bm{Z}}(\bm{z})$ and $\bm{X}=g(\bm{Z})$ be an invertible function which transforms $\bm{Z}$ into $\bm{X}$. The PDF of the random variable $\bm{X}$ follows
\begin{align}
p_{\bm{X}}(\bm{x})=p_{\bm{Z}}(g^{-1}(\bm{x}))|\det(\frac{\partial g^{-1}(\bm{x})}{\partial \bm{x}})|
\label{equation:1}
\end{align}
where $g^{-1}$ is the inverse of $g$ and $\det(\frac{\partial g^{-1}(\bm{x})}{\partial \bm{x}})$ is the determinant of the Jacobian of $g^{-1}$ evaluated at $\bm{x}$. We next show an example of normalizing flows.
\begin{example}[Pushing uniform to normal]
\label{example:1}
Let $z\sim U[0,1]$ be uniformly distributed and $x\sim \mathcal{N}(\mu,\sigma^{2})$ be normally distributed. The invertible function
$$x=S(z)=\mu+\sqrt{2}\sigma\cdot \erf^{-1}(2z-1)$$
pushes $z$ into $x$, where $\erf(z)=\frac{2}{\sqrt{\pi}}\int_{0}^{z}e^{-s^2}\mathrm{d}s$ is the error function.
\end{example}

\subsection{Group Theory}
\label{section:2.3}
A group is a set $G$ together with a group operation on $G$, here denoted $\cdot$, that combines any two elements $a$ and $b$ to form an element of $G$, denoted $a\cdot b$, such that the following three requirements are satisfied:
\begin{enumerate}
    \item \textbf{Associativity} $\forall a,b,c\in G, (a\cdot b)\cdot c=a\cdot (b\cdot c)$.
    \item \textbf{Identity element} $\exists e\in G$ such that $\forall a\in G, e\cdot a=a$ and $a\cdot e=a$.
    \item \textbf{Inverse element} $\forall a\in G,\exists b\in G$ such that $a\cdot b=e$ and $b\cdot a=e$.
\end{enumerate}
A permutation group is a group $G$ whose elements are permutations of a given set $X$ and whose group operation is the composition of functions in $G$, where permutations are invertible functions from the set $X$ to itself. The group of all permutations of a set $X$ is the symmetric group of $X$, denoted as $\Sym(X)$. The term permutation group thus means a subgroup of the symmetric group. By Cayley's theorem \cite{hungerford2012abstract}, every group is isomorphic to a permutation group, which characterizes group structure as the structure of a set of invertible functions.

\section{Normalizing Flows Embedding}
In this section, we first propose a unified perspective of embedding and extend to the general form of normalizing flows embedding (NFE). We then define the concrete forms of NFE and prove that they are able to learn logical rules. Finally, we show the loss function.

\subsection{Unified Perspective of Embedding}
\label{section:3.1}
Most KGE models embed entities/relations as point vectors in a low-dimensional space. We propose a unified perspective of embedding from the view of symmetric groups and show that point vectors can be seen as elements of a symmetric group. We take the representative KGE models, TransE and DistMult, as examples to illustrate. We show the results for other models in Appendix B.
\paragraph{TransE}
Let $G=X=\mathbb{R}^{n}$, TransE embeds every entity/relation as a vector in $G$. We show that every vector can correspond to a permutation of $X$, i.e., an element of the symmetric group $\Sym(X)$. We define a map $\alpha$ from $G$ to $\Sym(X)$:
\begin{align*}
    &\alpha: G\rightarrow \Sym(X)\\
    &\alpha(\bm{g})=f_{\bm{g}}
\end{align*}
where $\bm{g}\in G$ and the definition of $f_{\bm{g}}$ is as follows:
\begin{align*}
    &f_{\bm{g}}: X\rightarrow X\\
    &f_{\bm{g}}(\bm{x})=\bm{g}+\bm{x}
\end{align*}
where $\bm{x}\in X$. The image of $\alpha$ is a permutation group, i.e., a subgroup of $\Sym(X)$. For every triplet $(\bm{h}, \bm{r}, \bm{t})$, we have:
$$f_{\bm{h}}(\bm{x})=\bm{h}+\bm{x}, f_{\bm{r}}(\bm{x})=\bm{r}+\bm{x}, f_{\bm{t}}(\bm{x})=\bm{t}+\bm{x}$$
then TransE can be represented as:
\begin{align*}
f(\bm{h}, \bm{r}, \bm{t})&=-\|\bm{h}+\bm{r}-\bm{t}\| \\
                         &=-\|\bm{r}+(\bm{h}+\bm{0})-(\bm{t}+\bm{0})\| \\
                         &= D(f_{\bm{r}}(f_{\bm{h}}(\bm{x_{0}})), f_{\bm{t}}(\bm{x_{0}})) \\
                         &= D(f_{\bm{r}}\circ f_{\bm{h}}(\bm{x_{0}}), f_{\bm{t}}(\bm{x_{0}}))
\end{align*}
where $\bm{x}_{0}=\bm{0}\in X$, $\circ$ denotes functional composition and $D(\bm{a},\bm{b})=-\|\bm{a}-\bm{b}\|, \bm{a}, \bm{b}\in \mathbb{R}^{n}$. $D(\cdot,\cdot)$ is a similarity metric, which outputs a real value.
\paragraph{DistMult}
Let $G=(\mathbb{R}\backslash\{0\})^n$ and $X=\mathbb{R}^{n}$, DistMult embeds every entity/relation as a vector in $G$. We can similarly define a map $\alpha$ from $G$ to $\Sym(X)$:
\begin{align*}
    &\alpha: G\rightarrow \Sym(X)\\
    &\alpha(\bm{g})=f_{\bm{g}}
\end{align*}
where $\bm{g}\in G$ and the definition of $f_{\bm{g}}$ is as follows:
\begin{align*}
    &f_{\bm{g}}: X\rightarrow X\\
    &f_{\bm{g}}(\bm{x})=\bm{g}\odot\bm{x}
\end{align*}
where $\bm{x}\in X$ and $\odot$ is Hadamard product. The image of $\alpha$ is another subgroup of $\Sym(X)$.
For every triplet $(\bm{h}, \bm{r}, \bm{t})$, we have:
$$f_{\bm{h}}(\bm{x})=\bm{h}\odot\bm{x}, f_{\bm{r}}(\bm{x})=\bm{r}\odot\bm{x}, f_{\bm{t}}(\bm{x})=\bm{t}\odot\bm{x}$$
then DistMult can be represented as:
\begin{align*}
f(\bm{h}, \bm{r}, \bm{t})&=\langle \bm{h}, \bm{r}, \bm{t} \rangle \\
                         &=\langle \bm{r}\odot(\bm{h}\odot \bm{1}), \bm{t}\odot \bm{1} \rangle \\
                         &= D(f_{\bm{r}}(f_{\bm{h}}(\bm{x_{0}})), f_{\bm{t}}(\bm{x_{0}})) \\
                         &= D(f_{\bm{r}}\circ f_{\bm{h}}(\bm{x_{0}}), f_{\bm{t}}(\bm{x_{0}}))
\end{align*}
where $\bm{x}_{0}=\bm{1}\in X$ and  $D(\bm{a},\bm{b})=\bm{a}^{T}\bm{b}, \bm{a}, \bm{b}\in \mathbb{R}^{n}$.
\paragraph{Unified Representation}
Therefore, we get a unified representation of TransE and DistMult from the view of symmetric groups:
\begin{align}
  f(\bm{h}, \bm{r}, \bm{t})= D(f_{\bm{r}}\circ f_{\bm{h}}(\bm{x_{0}}), f_{\bm{t}}(\bm{x_{0}}))
  \label{equation:2}
\end{align}
The unified representation is defined by measuring the similarity of $f_{\bm{r}}\circ f_{\bm{h}}$ evaluated at $\bm{x_{0}}$ and $f_{\bm{t}}$ evaluated at $\bm{x_{0}}$. It is composed of three parts, the initial object $\bm{x}_{0}\in X$, the permutations $\{f_{\bm{h}}, f_{\bm{r}}, f_{\bm{t}}\}$ and the similarity metric $D(\cdot,\cdot)$. Thus, we can define scoring functions in terms of permutations of a set.

TransE and DistMult embed entities/relations into different subgroup of the same symmetric group $\Sym(\mathbb{R}^n)$ to get different scoring functions. Since every point vector corresponds to an element of $\Sym(\mathbb{R}^n)$, the elements of $\Sym(\mathbb{R}^n)$ can reflect the certainty property of point vectors. Thus, we can reflect different properties of embedding and define different scoring functions by choosing different symmetric groups. Next, we introduce the uncertainty of embedding by choosing a suitable symmetric group.

\subsection{General Form of NFE}
To introduce uncertainty, we let $X$ be the set of random variables on $\mathbb{R}^n$ and embed entities/relations as elements of $\Sym(X)$. For every triplet $(\bm{h}, \bm{r}, \bm{t})$, we have the corresponding permutations or invertible functions $\{f_{\bm{h}}, f_{\bm{r}}, f_{\bm{t}}\}$. Our NFE is defined in the form of Eq.(\ref{equation:2}), where $\bm{x}_{0} \in X$ is a random variable and $D(\cdot,\cdot)$ is a similarity metric between two random variables or two PDFs. $f_{\bm{r}}\circ f_{\bm{h}}(\bm{x_{0}})$ or $f_{\bm{t}}(\bm{x_{0}})$ is a normalizing flow, which transforms a random variable $\bm{x}_{0}$ into another random variable, this reflects uncertainty information of  $\{f_{\bm{h}}, f_{\bm{r}}, f_{\bm{t}}\}$. Thus, NFE is to measure the similarity of two normalizing flows. We can easily compute the PDFs of $f_{\bm{r}}\circ f_{\bm{h}}(\bm{x_{0}})$ and $f_{\bm{t}}(\bm{x_{0}})$ by Eq.(\ref{equation:1}), then the general form of NFE is represented as
\begin{align}
f(\bm{h}, \bm{r}, \bm{t})= D(&f_{\bm{r}}\circ f_{\bm{h}}(\bm{x_{0}}), f_{\bm{t}}(\bm{x_{0}}))\notag\\
=D(&p_{\bm{x}_{0}}(f_{\bm{h}}^{-1}\circ f_{\bm{r}}^{-1}(\bm{x}))|\det(\frac{\partial f_{\bm{h}}^{-1}\circ f_{\bm{r}}^{-1}(\bm{x})}{\partial \bm{x}})|,\notag\\
&p_{\bm{x}_{0}}(f_{\bm{t}}^{-1}(\bm{x}))|\det(\frac{\partial f_{\bm{t}}^{-1}(\bm{x})}{\partial \bm{x}})|)
\label{equation:3}
\end{align}
In next section, we define concrete $\bm{x}_0$, $\{f_{\bm{h}}, f_{\bm{r}}, f_{\bm{t}}\}$ and $D(\cdot,\cdot)$ to get the concrete forms of NFE.

\paragraph{Comparison of KG2E and NFE}
KG2E \cite{he2015learning} embeds entities/relations as random variables and defines the scoring function as $f(\bm{h}, \bm{r}, \bm{t})=D(\bm{h}-\bm{t},\bm{r})$, where $\bm{h},\bm{r}$ and $\bm{t}$ are random variables with normal distributions. However, it has two drawbacks to represent entities/relations as random variables to model uncertainty. First, it is difficult to parameterize a complex random variable directly due to the difficulty of computing the partition function \cite{goodfellow2016deep}. We get a complex random variable by using a complex invertible function to act on a simple random variable. It is easy to parameterize a complex invertible function and compute the PDF of a random variable obtained by an invertible function acting on a simple random variable by Eq.(\ref{equation:1}) \cite{papamakarios2021normalizing,kobyzev2020normalizing}. Thus, our method can be seen as a generalized reparameterization method \cite{kingma2013auto,rezende2014stochastic}. Second, KG2E involves computing the PDF of the sum/difference of two random variables, i.e., the PDF of $\bm{h}-\bm{t}$, which is intractable in most cases due to the difficulty of computing the convolution of two PDFs \cite{bertsekas2008introduction}. For example, if we embed a head entity $\bm{h}$ and a relation $\bm{r}$ as random variables with Beta distribution, then it is hard to compute the PDF of $\bm{h}-\bm{t}$. In contrast, Eq.(\ref{equation:3}) requires computing the PDF of $f_{\bm{r}}\circ f_{\bm{h}}(\bm{x}_0)$, which is still a normalizing flow because a composition of invertible functions remains invertible. It is easy to compute the composition of $f_{\bm{h}}$ and $f_{\bm{r}}$ and the PDF of $f_{\bm{r}}\circ f_{\bm{h}}(\bm{x}_0)$ by Eq.(\ref{equation:1}). In conclusion, NFE is more general and computationally simple.

\subsection{Concrete Form of NFE}
\label{section:3.2}
Based on the general form of NFE, Eq.(\ref{equation:3}), we define the concrete initial random variables $\bm{x}_0$, invertible functions $\{f_{\bm{h}}, f_{\bm{r}}, f_{\bm{t}}\}$ and similarity metrics $D(\cdot,\cdot)$ to get the concrete forms of NFE.

For the initial random variable $\bm{x}_0$, we can choose it to be a simple random variable for the convenience of computing Eq.(\ref{equation:3}), such as a random variable with uniform distribution $U[-\sqrt{3},\sqrt{3}]^n$ or a random variable with standard normal distribution $\mathcal{N}(\bm{0},\bm{I})$.

The invertible functions $f_{\bm{g}}$ can be chosen as $f_{\bm{g}}(\bm{x})=\bm{g}+\bm{x}$ as in TransE or $f_{\bm{g}}(\bm{x})=\bm{g}\odot\bm{x}$ as in DistMult. We use the composition of the two functions:
\begin{align}
f_{\bm{g}}(\bm{x})=\bm{g}_{\sigma}\odot\bm{x}+\bm{g}_{\mu}
\label{equation:4}
\end{align}
where $\bm{g}_{\sigma}\in \mathbb{R}^n$, the entries of $\bm{g}_{\sigma}$ are non-zero, and $\bm{g}_{\mu}\in \mathbb{R}^n$. Thus, for every triplet $(\bm{h}, \bm{r}, \bm{t})$, we have that
\begin{align}
&f_{\bm{h}}(\bm{x})=\bm{h}_{\sigma}\odot\bm{x}+\bm{h}_{\mu},f_{\bm{r}}(\bm{x})=\bm{r}_{\sigma}\odot\bm{x}+\bm{r}_{\mu}\notag\\
&f_{\bm{r}}\circ f_{\bm{h}}(\bm{x})=\bm{r}_{\sigma}\odot\bm{h}_{\sigma}\odot\bm{x}+\bm{r}_{\sigma}\odot\bm{h}_{\mu}+\bm{r}_{\mu}\notag\\
&f_{\bm{t}}(\bm{x})=\bm{t}_{\sigma}\odot\bm{x}+\bm{t}_{\mu}
\label{equation:5}
\end{align}
We denote the PDF of $f_{\bm{r}}\circ f_{\bm{h}}(\bm{x}_0)$ as $p_{\bm{r}\bm{h}}(\bm{x}_0)$ and the PDF of $f_{\bm{t}}(\bm{x}_0)$ as $q_{\bm{t}}(\bm{x}_0)$. If $\bm{x}_{0}\sim \mathcal{N}(\bm{0},\bm{I})$ and $f_{\bm{g}}$ is a linear (affine) function $f_{\bm{g}}(\bm{x})=\bm{g}_{\sigma}\odot\bm{x}+\bm{g}_{\mu}$, then $p_{\bm{r}\bm{h}}(\bm{x}_0)$ and $q_{\bm{t}}(\bm{x}_0)$ are still normal distributions. A more expressive choice of $f_{\bm{g}}$ than Eq.(\ref{equation:4}) can be piecewise linear functions:
\begin{align}
   f_{\bm{g}}(\bm{x})_i=
   \begin{cases}
   \bm{g}_{\sigma_{1}i}\odot\bm{x}_{i}+\bm{g}_{\mu i} & \text{if }\bm{x}_{i}\leq 0\\
   \bm{g}_{\sigma_{2}i}\odot\bm{x}_{i}+\bm{g}_{\mu i} & \text{if }\bm{x}_{i}>0
   \end{cases}
   \label{equation:6}
\end{align}
where $\bm{g}_{\sigma_{1}},\bm{g}_{\sigma_{2}},\bm{g}_{\mu}\in \mathbb{R}^n$. For simplicity, we denote Eq.(\ref{equation:6}) as
\begin{align*}
   f_{\bm{g}}(\bm{x})=
   \begin{cases}
   \bm{g}_{\sigma_{1}}\odot\bm{x}+\bm{g}_{\mu} & \text{if }\bm{x}\leq \bm{0}\\
   \bm{g}_{\sigma_{2}}\odot\bm{x}+\bm{g}_{\mu} & \text{if }\bm{x}>\bm{0}
   \end{cases}
\end{align*}
To ensure Eq.(\ref{equation:6}) is invertible, we need to constraint $\bm{g}_{\sigma_{1}}\odot \bm{g}_{\sigma_{2}}> \bm{0}$. Since the composition of two piecewise linear functions with two pieces is a piecewise linear function with three pieces, we still implement $f_{\bm{r}}$ as a linear function. This ensures that for any $f_{\bm{h}}(\bm{x})$ and $f_{\bm{r}}(\bm{x})$, there exists a $f_{\bm{t}}(\bm{x})$ such that $f_{\bm{t}}(\bm{x})=f_{\bm{r}}\circ f_{\bm{h}}(\bm{x})$ for any $\bm{x}$. Thus, for every triplet $(\bm{h}, \bm{r}, \bm{t})$, we have that
\begin{align}
    &f_{\bm{h}}(\bm{x})=
   \begin{cases}
   \bm{h}_{\sigma_{1}}\odot\bm{x}+\bm{h}_{\mu} & \text{if }\bm{x}\leq \bm{0}\\
   \bm{h}_{\sigma_{2}}\odot\bm{x}+\bm{h}_{\mu} & \text{if }\bm{x}>\bm{0}
   \end{cases},\notag\\
   &f_{\bm{r}}(\bm{x})=\bm{r}_{\sigma}\odot\bm{x}_{i}+\bm{r}_{\mu},\notag\\
   &f_{\bm{r}}\circ f_{\bm{h}}(\bm{x})=
   \begin{cases}
   \bm{r}_{\sigma}\odot(\bm{h}_{\sigma_{1}}\odot\bm{x}+\odot\bm{h}_{\mu})+\bm{r}_{\mu} & \text{if }\bm{x}\leq \bm{0}\\
   \bm{r}_{\sigma}\odot(\bm{h}_{\sigma_{2}}\odot\bm{x}+\odot\bm{h}_{\mu})+\bm{r}_{\mu} & \text{if }\bm{x}>\bm{0}
   \end{cases}, \notag\\
   &f_{\bm{t}}(\bm{x})=
   \begin{cases}
   \bm{t}_{\sigma_{1}}\odot\bm{x}+\bm{t}_{\mu} & \text{if }\bm{x}\leq \bm{0}\\
   \bm{t}_{\sigma_{2}}\odot\bm{x}+\bm{t}_{\mu} & \text{if }\bm{x}>\bm{0}
   \end{cases}
   \label{equation:7}
\end{align}

The similarity metric $D(\cdot,\cdot)$ can be chosen as the negative Kullback–Leibler (KL) divergence between the PDFs, $p(\bm{x})$ and $q(\bm{x})$,
\begin{align}
D(p,q)=-KL(p,q)=-\int p(\bm{x})\log\frac{p(\bm{x})}{q(\bm{x})}\mathrm{d}\bm{x}
\label{equation:8}
\end{align}
or negative Wasserstein distance between the PDFs \cite{peyre2019computational}, $p(\bm{x})$ and $q(\bm{y})$,
\begin{align}
D(p,q)&=-W(p,q)\notag\\
&=-\inf_{\gamma\in \Gamma(p,q)}\iint \|\bm{x}-\bm{y}\|_2^2\gamma(\bm{x},\bm{y})\mathrm{d}\bm{x}\mathrm{d}\bm{y}
\label{equation:9}
\end{align}
where $\Gamma$ denotes the set of all joint distributions of $(\bm{x},\bm{y})$. Wasserstein distance is still valid if the support sets of $p(\bm{x})$ and $q(\bm{y})$ are not overlapped, while the KL divergence is not valid. For example, $W(p(\bm{x}),q(\bm{y}))=4$ and $KL(p(\bm{x}),q(\bm{y}))=\infty$ if $\bm{x}\sim U[0,1]$ and $\bm{y}\sim U[2,3]$. Therefore, we use Wasserstein distance instead of KL divergence. However, Wasserstein distance is difficult to compute in most cases \cite{peyre2019computational}. It has a tractable solution when $n=1$:$$W(p,q)=\int_0^1(F^{-1}(z)-G^{-1}(z))^2\mathrm{d}z$$
where $F^{-1}(z)$ and $G^{-1}(z)$ are the inverse cumulative distribution function of $p(\bm{x})$ and $q(\bm{y})$, respectively. Our idea is to decompose the $n$-dimensional Wasserstein distance into a sum of 1-dimensional Wasserstein distances. We have the following proposition to realize it.
\begin{proposition}
\label{proposition:1}
For two PDFs, $p(\bm{x})$ and $q(\bm{y})$, $W(p,q)=\sum_{i=1}^{n}W(p_i,q_i)$ iff $p(\bm{x})$ and $q(\bm{y})$ share the same copula, where $p_i$ and $q_i$ are the marginal distributions of $\bm{x}_{i}$ and $\bm{y}_{i}$, respectively \cite{cuestaalbertos1993optimal}.
\end{proposition}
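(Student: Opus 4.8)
The plan is to prove the unconditional lower bound $W(p,q)\ge\sum_{i=1}^{n}W(p_i,q_i)$ first, then characterize exactly when it is attained. First I would use that the cost splits coordinatewise, $\|\bm{x}-\bm{y}\|_2^2=\sum_{i=1}^{n}(x_i-y_i)^2$: for any coupling $\gamma\in\Gamma(p,q)$, its $i$-th bivariate marginal $\gamma_i$ lies in $\Gamma(p_i,q_i)$, so $\iint\|\bm{x}-\bm{y}\|_2^2\,\gamma=\sum_i\iint(x_i-y_i)^2\,\gamma_i\ge\sum_i W(p_i,q_i)$; taking the infimum over $\gamma$ gives the bound. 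Hence $W(p,q)=\sum_i W(p_i,q_i)$ if and only if there is a coupling $\gamma^\star\in\Gamma(p,q)$ all of whose coordinate marginals $\gamma^\star_i$ are \emph{optimal} couplings of $(p_i,q_i)$. For the forward implication one takes an optimal plan $\gamma^\star$ (it exists because $\|\bm{x}-\bm{y}\|_2^2$ is continuous and nonnegative) and notes that $\sum_i\iint(x_i-y_i)^2\,\gamma^\star_i=\sum_i W(p_i,q_i)$ with each summand $\ge W(p_i,q_i)$ forces each summand to equal $W(p_i,q_i)$; the reverse implication is immediate.

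Next I would invoke the one-dimensional theory. Since $p$ and $q$ admit densities, all marginals $p_i,q_i$ are absolutely continuous, hence atomless, with continuous CDFs $F_i,G_i$. For the quadratic cost on $\mathbb{R}$ with atomless marginals the optimal coupling is \emph{unique} and equals the monotone (quantile) coupling, i.e.\ the law of $\big(F_i^{-1}(U),G_i^{-1}(U)\big)$ with $U\sim U[0,1]$, equivalently the law of $\big(X_i,G_i^{-1}(F_i(X_i))\big)$ with $X_i\sim p_i$. Therefore "$\gamma^\star_i$ optimal for every $i$" is equivalent to "$Y_i=G_i^{-1}(F_i(X_i))$ holds $\gamma^\star$-a.s.\ for every $i$", and it remains to show that such a joint coupling $\gamma^\star\in\Gamma(p,q)$ exists if and only if $p$ and $q$ share the same copula.

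For the "if" direction, let $C$ be the common copula and let $\bm{U}=(U_1,\dots,U_n)$ have joint CDF $C$, so each $U_i\sim U[0,1]$; set $X_i=F_i^{-1}(U_i)$, $Y_i=G_i^{-1}(U_i)$. By inverse-transform sampling and Sklar's theorem (using continuity of the marginals) $\bm{X}\sim p$ and $\bm{Y}\sim q$, and $F_i(X_i)=U_i$ a.s., so $Y_i=G_i^{-1}(F_i(X_i))$ a.s., giving a coupling of the required type and hence equality. For the "only if" direction, from such a $\gamma^\star$ we get a.s.\ $G_i(Y_i)=G_i\big(G_i^{-1}(F_i(X_i))\big)=F_i(X_i)$ for every $i$. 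The copula of $p$ is by definition the joint law of $\big(F_1(X_1),\dots,F_n(X_n)\big)$ under $\bm{X}\sim p$ and the copula of $q$ is the joint law of $\big(G_1(Y_1),\dots,G_n(Y_n)\big)$ under $\bm{Y}\sim q$; under $\gamma^\star$ these two random vectors coincide a.s.\ and carry the correct marginals, so the two copulas are equal.

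I expect the main obstacle to be the one-dimensional rigidity step — that optimality of $\gamma^\star_i$ forces $Y_i=G_i^{-1}(F_i(X_i))$ almost surely — which rests on uniqueness of the optimal quadratic coupling on $\mathbb{R}$ for atomless marginals and on controlling the measure-zero sets where the generalized inverses and the identities $F_i\circ F_i^{-1}=\mathrm{id}$, $F_i^{-1}\circ F_i=\mathrm{id}$ fail; once the marginals are known to be absolutely continuous this is standard and is exactly the content cited as \cite{cuestaalbertos1993optimal}. Everything else — the coordinatewise lower bound, existence of an optimal plan, and Sklar's theorem — is routine.
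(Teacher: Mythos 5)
Your proof is correct, but it is not the route the paper takes: the paper offers no argument of its own for this proposition and simply defers to Theorem 2.9 of \cite{cuestaalbertos1993optimal}, adding only the definition of a copula needed to read off Corollary 1. What you have written is essentially a self-contained reconstruction of that cited result for the quadratic cost on $\mathbb{R}^n$: the coordinatewise splitting of $\|\bm{x}-\bm{y}\|_2^2$ gives the unconditional superadditivity bound $W(p,q)\ge\sum_i W(p_i,q_i)$; attainment of an optimal plan plus this bound reduces equality to the existence of a coupling whose bivariate coordinate marginals are all optimal; uniqueness of the one-dimensional quadratic-cost optimal coupling for atomless marginals identifies this with the a.s.\ relations $Y_i=G_i^{-1}(F_i(X_i))$; and Sklar's theorem (with continuity of the marginal CDFs, which follows because $p$ and $q$ have densities) converts that into the statement that $p$ and $q$ share a copula, in both directions. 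The trade-off is clear: your argument makes the proposition verifiable inside the paper using only standard optimal-transport facts (existence of minimizers, monotone rearrangement in one dimension) and Sklar's theorem, whereas the citation is shorter and the theorem of \cite{cuestaalbertos1993optimal} is stated in greater generality. Two small caveats you should make explicit if this were to replace the citation: the argument tacitly assumes $p$ and $q$ have finite second moments so that all the Wasserstein quantities are finite and the ``each summand must be tight'' step is meaningful, and the a.s.\ identities $F_i\circ F_i^{-1}=\mathrm{id}$ and $G_i\circ G_i^{-1}=\mathrm{id}$ should be justified on the range $(0,1)$ of $F_i(X_i)$, which you correctly note is where absolute continuity of the marginals is used.
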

See Appendix C for the proofs. Thus, we can get the following corollary.
\begin{corollary}
\label{corollary:1}
For two PDFs, $p(\bm{x})$ and $q(\bm{y})$, $W(p,q)=\sum_{i=1}^{n}W(p(\bm{x}_i),q(\bm{y}_i))$ if $p(\bm{x})=\prod_{i=1}^{n}p(\bm{x}_i)$ and $q(\bm{y})=\prod_{i=1}^{n}q(\bm{y}_i)$.
\end{corollary}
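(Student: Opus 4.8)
The plan is to reduce Corollary~\ref{corollary:1} to Proposition~\ref{proposition:1} by computing the copula of a product density. First I would recall the notion of a copula via Sklar's theorem: for a random vector $\bm{x}$ with joint CDF $F$ and marginal CDFs $F_1,\dots,F_n$, the copula $C$ is the joint CDF of the transformed vector $(F_1(\bm{x}_1),\dots,F_n(\bm{x}_n))$, i.e.\ the unique function with $F(\bm{x})=C(F_1(\bm{x}_1),\dots,F_n(\bm{x}_n))$. Since $p$ and $q$ are genuine PDFs, all marginal CDFs are continuous, so the copula is well defined and unique on $[0,1]^n$.

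Second, I would show that if $p(\bm{x})=\prod_{i=1}^{n}p(\bm{x}_i)$ then the components $\bm{x}_1,\dots,\bm{x}_n$ are mutually independent, hence $F_1(\bm{x}_1),\dots,F_n(\bm{x}_n)$ are independent and each $F_i(\bm{x}_i)$ is uniform on $[0,1]$; therefore the joint CDF of the transformed vector is $(u_1,\dots,u_n)\mapsto \prod_{i=1}^{n}u_i$, the independence copula. The identical argument applied to $q(\bm{y})=\prod_{i=1}^{n}q(\bm{y}_i)$ shows its copula is also the independence copula. Thus $p$ and $q$ share the same copula, and Proposition~\ref{proposition:1} gives $W(p,q)=\sum_{i=1}^{n}W(p_i,q_i)$ with $p_i=p(\bm{x}_i)$ and $q_i=q(\bm{y}_i)$, which is exactly the claim.

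There is essentially no hard step here: all the real content lives in Proposition~\ref{proposition:1}, and the corollary is just the observation that a density which factorizes over coordinates has the independence copula. The one point that deserves a line of care — rather than a genuine obstacle — is the continuity/uniqueness clause in Sklar's theorem, which is what lets us say ``the'' copula is the independence copula on all of $[0,1]^n$; this is automatic once $p$ and $q$ are assumed to admit densities, so the invocation of Proposition~\ref{proposition:1} is legitimate.
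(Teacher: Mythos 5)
Your proof is correct and follows essentially the same route as the paper: both deduce from the factorized densities that the coordinates are independent, hence $p$ and $q$ each have the independence copula $\prod_{i=1}^{n}u_i$, and then invoke Proposition~\ref{proposition:1}. Your added remark on Sklar's theorem and the continuity/uniqueness of the copula is a minor refinement the paper leaves implicit, not a different argument.
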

We design proper scoring functions such that $p_{\bm{r}\bm{h}}(\bm{x}_0)$ and $q_{\bm{t}}(\bm{x}_0)$ satisfy the conditions in Corollary \ref{corollary:1}. In summary, we can define the concrete forms of NFE. We have the following propositions.
\begin{proposition}
\label{proposition:2}
If $\bm{x}_0\sim U[-\sqrt{3},\sqrt{3}]^n$ or $\bm{x}_0\sim \mathcal{N}(\bm{0},\bm{I})$, the invertible functions are Eq.(\ref{equation:5}) and similarity metric is Eq.(\ref{equation:9}), then the scoring function is
\begin{align}
f(\bm{h}, \bm{r}, \bm{t})=&-\|\bm{r}_{\sigma}\odot\bm{h}_{\mu}+\bm{r}_{\mu}-\bm{t}_{\mu}\|_2^2\notag\\
&-\||\bm{r}_{\sigma}\odot\bm{h}_{\sigma}|-|\bm{t}_{\sigma}|\|_2^2
\label{equation:10}
\end{align}
\end{proposition}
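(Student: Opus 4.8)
The plan is to turn the $n$-dimensional Wasserstein distance in Eq.~(\ref{equation:9}) into a sum of one-dimensional Wasserstein distances via Corollary~\ref{corollary:1}, evaluate each one-dimensional term in closed form from the quantile representation, and reassemble the pieces into Eq.~(\ref{equation:10}).

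First I would record two facts about the base variable $\bm{x}_0$: in either admissible case ($U[-\sqrt{3},\sqrt{3}]^n$ or $\mathcal{N}(\bm{0},\bm{I})$) its coordinates are independent, each symmetric about $0$ with mean $0$ and variance $1$ (for the uniform case $\mathrm{Var}=(2\sqrt{3})^2/12=1$). Since $f_{\bm{r}}\circ f_{\bm{h}}$ and $f_{\bm{t}}$ in Eq.~(\ref{equation:5}) are coordinatewise affine maps, the laws $p_{\bm{rh}}(\bm{x}_0)$ and $q_{\bm{t}}(\bm{x}_0)$ are product distributions and hence share the (independence) copula; Corollary~\ref{corollary:1} then gives $W(p_{\bm{rh}},q_{\bm{t}})=\sum_{i=1}^{n}W(p(\bm{x}_i),q(\bm{y}_i))$.

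Next I would evaluate the $i$-th term. Its two marginals are affine images of $\bm{x}_{0i}$, namely $a\bm{x}_{0i}+b$ with $a=\bm{r}_{\sigma i}\bm{h}_{\sigma i}$, $b=\bm{r}_{\sigma i}\bm{h}_{\mu i}+\bm{r}_{\mu i}$, and $c\bm{x}_{0i}+d$ with $c=\bm{t}_{\sigma i}$, $d=\bm{t}_{\mu i}$. Because $\bm{x}_{0i}$ is symmetric about $0$, its quantile function $F^{-1}$ satisfies $F^{-1}(1-z)=-F^{-1}(z)$, and a short case split on the sign of the scaling factor shows the quantile function of $a\bm{x}_{0i}+b$ is $|a|F^{-1}(z)+b$, and similarly $|c|F^{-1}(z)+d$ for the other marginal. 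Substituting into the $n=1$ formula $W(p,q)=\int_0^1(F^{-1}_p(z)-F^{-1}_q(z))^2\mathrm{d}z$ yields $\int_0^1\big((|a|-|c|)F^{-1}(z)+(b-d)\big)^2\mathrm{d}z$; expanding the square and using $\int_0^1 F^{-1}(z)\,\mathrm{d}z=\mathbb{E}[\bm{x}_{0i}]=0$ and $\int_0^1 F^{-1}(z)^2\,\mathrm{d}z=\mathbb{E}[\bm{x}_{0i}^2]=1$ collapses it to $(|a|-|c|)^2+(b-d)^2$.

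Finally, summing over $i$, the $(b-d)^2$ terms assemble into $\|\bm{r}_{\sigma}\odot\bm{h}_{\mu}+\bm{r}_{\mu}-\bm{t}_{\mu}\|_2^2$ and the $(|a|-|c|)^2$ terms into $\||\bm{r}_{\sigma}\odot\bm{h}_{\sigma}|-|\bm{t}_{\sigma}|\|_2^2$, so $f(\bm{h},\bm{r},\bm{t})=D(p_{\bm{rh}},q_{\bm{t}})=-W(p_{\bm{rh}},q_{\bm{t}})$ equals Eq.~(\ref{equation:10}). The only genuinely delicate point is the quantile function of an affine map with a possibly negative scaling factor, where the symmetry of the chosen $\bm{x}_0$ is exactly what makes the absolute value $|a|$ appear; everything else is the two-line moment computation above plus bookkeeping of the sums.
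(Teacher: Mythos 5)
Your proof is correct, and while it shares the paper's overall skeleton (reduce the $n$-dimensional Wasserstein distance to a sum of one-dimensional ones via Corollary~\ref{corollary:1}, then evaluate each term from the quantile formula), your one-dimensional computation is genuinely different and more economical. The paper handles the two base distributions separately: for the uniform case it integrates the squared difference of two explicit linear quantile functions, and for the Gaussian case it first derives the antiderivatives of $\erf^{-1}(x)$ and $\erf^{-1}(x)^2$ and then integrates against the quantile functions $\mu+\sqrt{2}\sigma\erf^{-1}(2z-1)$; in both cases it identifies the pushforward laws of $f_{\bm{r}}\circ f_{\bm{h}}(\bm{x}_0)$ and $f_{\bm{t}}(\bm{x}_0)$ explicitly as uniform or normal families with scale $|\bm{r}_{\sigma}\odot\bm{h}_{\sigma}|$ and $|\bm{t}_{\sigma}|$. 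You instead observe that each marginal is a location--scale image $aF^{-1}(z)+b$ of the base quantile (with $|a|$ appearing by the symmetry $F^{-1}(1-z)=-F^{-1}(z)$, which correctly handles negative scale factors), and then the whole integral collapses via the two moments $\int_0^1F^{-1}(z)\,\mathrm{d}z=0$ and $\int_0^1F^{-1}(z)^2\,\mathrm{d}z=1$. This treats both the uniform and Gaussian cases in a single stroke, avoids the $\erf^{-1}$ calculus entirely, and shows the formula Eq.~(\ref{equation:10}) actually holds for any product base distribution with symmetric, mean-zero, unit-variance coordinates; what the paper's case-by-case route buys in exchange is the explicit parametric form of the two flows (recovering the known Gaussian Fr\'echet-distance formula), which supports the later interpretation of the two terms as mean and standard-deviation discrepancies.
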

\begin{proposition}
If $\bm{x}_0\sim U[-\sqrt{3},\sqrt{3}]^n$, the invertible functions are  Eq.(\ref{equation:7}) and similarity metric is Eq.(\ref{equation:9}), then the scoring function is 
\begin{align}
&f(\bm{h}, \bm{r},\bm{t})=-\|\bm{r}_{\sigma}\odot\bm{h}_{\mu}+\bm{r}_{\mu}-\bm{t}_{\mu}\|_2^2\notag\\
&-\frac{1}{2}\||\bm{r}_{\sigma}\odot\bm{h}_{\sigma_{1}}|-|\bm{t}_{\sigma_{1}}|\|_2^2\notag\\
&-\frac{1}{2}\||\bm{r}_{\sigma}\odot\bm{h}_{\sigma_{2}}|-|\bm{t}_{\sigma_{2}}|\|_2^2-\sqrt{\frac{3}{4}}(\bm{r}_{\sigma}\odot\bm{h}_{\mu}+\bm{r}_{\mu}-\bm{t}_{\mu})^T\notag\\
&(|\bm{r}_{\sigma}\odot\bm{h}_{\sigma_{2}}|-|\bm{r}_{\sigma}\odot\bm{h}_{\sigma_{1}}|+|\bm{t}_{\sigma_{1}}|-|\bm{t}_{\sigma_{2}}|)
\label{equation:11}
\end{align}
\end{proposition}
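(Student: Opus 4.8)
The plan is to reduce the $n$-dimensional Wasserstein distance in $f(\bm h,\bm r,\bm t)=-W(p_{\bm{rh}},q_{\bm t})$ to a sum of one-dimensional ones and then to evaluate each scalar term using the explicit quantile of a piecewise-uniform law. First I would check the factorization hypothesis of Corollary~\ref{corollary:1}: since $\bm x_0\sim U[-\sqrt3,\sqrt3]^n$ has independent components and each map in Eq.~(\ref{equation:7}) acts coordinatewise, the $i$-th component of $f_{\bm r}\circ f_{\bm h}(\bm x_0)$ (resp.\ $f_{\bm t}(\bm x_0)$) depends only on $\bm x_{0,i}$, so $p_{\bm{rh}}$ and $q_{\bm t}$ are both product densities. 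Corollary~\ref{corollary:1} then gives $W(p_{\bm{rh}},q_{\bm t})=\sum_{i=1}^n W\!\big(p_{\bm{rh}}(\bm x_i),q_{\bm t}(\bm x_i)\big)$, reducing everything to one scalar computation repeated per coordinate.

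Next, for a fixed coordinate I would identify the two scalar distributions. A $U[-\sqrt3,\sqrt3]$ variable (mean $0$, variance $1$) pushed through a two-piece affine bijection, $x\mapsto ax+m$ for $x\le0$ and $x\mapsto bx+m$ for $x>0$ with $ab>0$, is an equal-weight mixture of two uniforms supported on the intervals of lengths $\sqrt3|a|$ and $\sqrt3|b|$ abutting $m$; inverting its CDF gives a piecewise-linear quantile equal to $m-\sqrt3\,\ell\,(1-2z)$ on $[0,\tfrac12]$ and $m+\sqrt3\,r\,(2z-1)$ on $[\tfrac12,1]$, with $(\ell,r)=(|a|,|b|)$ or $(|b|,|a|)$ according to the common sign of $a,b$. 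Reading Eq.~(\ref{equation:7}), $p_{\bm{rh}}(\bm x_i)$ has center $(\bm r_\sigma\odot\bm h_\mu+\bm r_\mu)_i$ and slopes $(\bm r_\sigma\odot\bm h_{\sigma_1})_i,(\bm r_\sigma\odot\bm h_{\sigma_2})_i$, and $q_{\bm t}(\bm x_i)$ has center $(\bm t_\mu)_i$ and slopes $(\bm t_{\sigma_1})_i,(\bm t_{\sigma_2})_i$.

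Then I would substitute into $W(p,q)=\int_0^1(F^{-1}(z)-G^{-1}(z))^2\mathrm{d}z$, split the integral at $z=\tfrac12$, change variables $u=1-2z$ and $v=2z-1$ on the two halves, and compute the resulting quadratics $\tfrac12\int_0^1(A\mp Bu)^2\mathrm{d}u$ using $\int_0^1u\,\mathrm{d}u=\tfrac12$ and $\int_0^1u^2\,\mathrm{d}u=\tfrac13$. Collecting the constant, linear and quadratic parts in $z$ yields, per coordinate, the squared mean difference, one half of each of the two squared scale differences, and a cross term with prefactor $\sqrt3/2=\sqrt{3/4}$ pairing the mean difference with the signed difference of the scale differences; summing over $i$ and negating reproduces Eq.~(\ref{equation:11}).

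The delicate point I expect is the sign bookkeeping of the scale parameters. Because $U[-\sqrt3,\sqrt3]$ is symmetric under $x\mapsto-x$, an affine pushforward depends only on the absolute value of its slope — which is exactly why Proposition~\ref{proposition:2} is sign-free — but for a two-piece map this reflection swaps the two pieces while negating both slopes, so the pushforward is invariant under $(a,b)\mapsto(-b,-a)$ and the left/right assignment of $|a|,|b|$ in the quantile flips with the common sign. The proof therefore has to fix the sign convention implicit in Eq.~(\ref{equation:7}) — taking the $\sigma$-scales positive, so that the $\sigma_1$-slope governs the $x\le0$ branch and hence the $[0,\tfrac12]$ part of the quantile in both $p_{\bm{rh}}$ and $q_{\bm t}$, which renders the absolute values in Eq.~(\ref{equation:11}) harmless — after which the remaining integration is routine; one should also record that the 1-D Wasserstein formula stays valid even when the supports of $p_{\bm{rh}}$ and $q_{\bm t}$ do not overlap, which is what licenses its use here.
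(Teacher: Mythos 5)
Your proposal is correct and follows essentially the same route as the paper's proof: reduce to one dimension via Corollary~\ref{corollary:1}, write the two-piece-uniform quantiles explicitly, split $\int_0^1\bigl(F^{-1}(z)-G^{-1}(z)\bigr)^2\,\mathrm{d}z$ at $z=\tfrac12$, and collect the mean, scale and cross terms, with the sign convention you flag corresponding exactly to the paper's restriction $\bm{r}_{\sigma}\odot\bm{h}_{\sigma_{1}}\geq 0$, $\bm{r}_{\sigma}\odot\bm{h}_{\sigma_{2}}\geq 0$, $\bm{t}_{\sigma_{1}}\geq 0$, $\bm{t}_{\sigma_{2}}\geq 0$. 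Your change of variables $u=1-2z$, $v=2z-1$ is only a cosmetic difference from the paper's direct expansion of the same integral.
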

\begin{proposition}
If $\bm{x}_0\sim \mathcal{N}(\bm{0},\bm{I})$, the invertible functions are  Eq.(\ref{equation:7}) and similarity metric is Eq.(\ref{equation:9}), then the scoring function is 
\begin{align}
&f(\bm{h}, \bm{r},\bm{t})=-\|\bm{r}_{\sigma}\odot\bm{h}_{\mu}+\bm{r}_{\mu}-\bm{t}_{\mu}\|_2^2\notag\\
&-\frac{1}{2}\||\bm{r}_{\sigma}\odot\bm{h}_{\sigma_{1}}|-|\bm{t}_{\sigma_{1}}|\|_2^2\notag\\
&-\frac{1}{2}\||\bm{r}_{\sigma}\odot\bm{h}_{\sigma_{2}}|-|\bm{t}_{\sigma_{2}}|\|_2^2-\sqrt{\frac{2}{\pi}}(\bm{r}_{\sigma}\odot\bm{h}_{\mu}+\bm{r}_{\mu}-\bm{t}_{\mu})^T\notag\\
&(|\bm{r}_{\sigma}\odot\bm{h}_{\sigma_{2}}|-|\bm{r}_{\sigma}\odot\bm{h}_{\sigma_{1}}|+|\bm{t}_{\sigma_{1}}|-|\bm{t}_{\sigma_{2}}|)
\label{equation:12}
\end{align}
\end{proposition}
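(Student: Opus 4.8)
The plan is to reduce the $n$-dimensional Wasserstein distance in Eq.(\ref{equation:9}) to a sum of one-dimensional Wasserstein distances via Corollary~\ref{corollary:1}, and then evaluate each summand in closed form using the quantile representation together with elementary truncated-Gaussian moments.

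First I would check the hypothesis of Corollary~\ref{corollary:1}. The invertible maps in Eq.(\ref{equation:7}) are diagonal --- each output coordinate is a function of the corresponding input coordinate only --- and $\bm{x}_0\sim\mathcal{N}(\bm{0},\bm{I})$ has independent coordinates, so $f_{\bm{r}}\circ f_{\bm{h}}(\bm{x}_0)$ and $f_{\bm{t}}(\bm{x}_0)$ have product densities. Hence $W(p_{\bm{r}\bm{h}},q_{\bm{t}})=\sum_{i=1}^{n}W\!\big(p_{\bm{r}\bm{h}}(\bm{x}_i),q_{\bm{t}}(\bm{y}_i)\big)$, and it suffices to compute, for a single coordinate (index suppressed), the one-dimensional distance between $A:=f_{\bm{r}}\circ f_{\bm{h}}(Z)$ and $B:=f_{\bm{t}}(Z)$ with $Z\sim\mathcal{N}(0,1)$. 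Reading off Eq.(\ref{equation:7}), $A=a_1Z+b$ on $\{Z\le 0\}$ and $a_2Z+b$ on $\{Z>0\}$ with $a_1=\bm{r}_{\sigma}\bm{h}_{\sigma_1}$, $a_2=\bm{r}_{\sigma}\bm{h}_{\sigma_2}$, $b=\bm{r}_{\sigma}\bm{h}_{\mu}+\bm{r}_{\mu}$, while $B=c_1Z+d$ on $\{Z\le 0\}$ and $c_2Z+d$ on $\{Z>0\}$ with $c_1=\bm{t}_{\sigma_1}$, $c_2=\bm{t}_{\sigma_2}$, $d=\bm{t}_{\mu}$. Note that keeping $f_{\bm{r}}$ affine is precisely what forces the breakpoints of both $A$ and $B$ to sit at $Z=0$, which is what makes the rest closed-form.

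Next I would apply the one-dimensional formula $W(p,q)=\int_0^1(F^{-1}(u)-G^{-1}(u))^2\,\mathrm{d}u$ stated just before the proposition. Under the natural convention that the scale parameters are positive (so that, via the invertibility constraint $\bm{g}_{\sigma_1}\odot\bm{g}_{\sigma_2}>\bm{0}$, $A$ and $B$ are increasing functions of $Z$), the quantile functions are $F^{-1}=T_A\circ\Phi^{-1}$ and $G^{-1}=T_B\circ\Phi^{-1}$, where $T_A,T_B$ denote the two piecewise-affine maps and $\Phi$ is the standard normal CDF; changing variables $u=\Phi(z)$ gives $W(A,B)=\mathbb{E}\big[(T_A(Z)-T_B(Z))^2\big]$ (equivalently, $(T_A(Z),T_B(Z))$ is the comonotone, hence optimal, coupling). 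Because the breakpoints coincide, $T_A(Z)-T_B(Z)=(a_1-c_1)Z+(b-d)$ on $\{Z\le 0\}$ and $(a_2-c_2)Z+(b-d)$ on $\{Z>0\}$.

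Finally I would expand this expectation over the two half-lines with the truncated moments $\mathbb{P}(Z\le 0)=\mathbb{P}(Z>0)=\tfrac12$, $\mathbb{E}[Z\,\mathbf{1}_{Z>0}]=-\mathbb{E}[Z\,\mathbf{1}_{Z\le 0}]=\tfrac{1}{\sqrt{2\pi}}$, $\mathbb{E}[Z^2\mathbf{1}_{Z\le 0}]=\mathbb{E}[Z^2\mathbf{1}_{Z>0}]=\tfrac12$, obtaining $(b-d)^2+\tfrac12(a_1-c_1)^2+\tfrac12(a_2-c_2)^2+\tfrac{2}{\sqrt{2\pi}}(b-d)\big[(a_2-c_2)-(a_1-c_1)\big]$; using $\tfrac{2}{\sqrt{2\pi}}=\sqrt{\tfrac{2}{\pi}}$, substituting $a_1,a_2,b,c_1,c_2,d$ back, summing over the $n$ coordinates, and negating yields exactly Eq.(\ref{equation:12}). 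The routine part is this Gaussian half-moment algebra; the step that needs real care is the sign handling --- showing the pushforward laws (and hence the $W$-distances) depend on the slopes only through $|\bm{r}_{\sigma}\odot\bm{h}_{\sigma_1}|$, $|\bm{r}_{\sigma}\odot\bm{h}_{\sigma_2}|$, $|\bm{t}_{\sigma_1}|$, $|\bm{t}_{\sigma_2}|$ (so that the absolute values in Eq.(\ref{equation:12}) are legitimate) and that $A,B$ are genuinely comonotone so the diagonal coupling is optimal. As a consistency check I would rerun the computation with $Z\sim U[-\sqrt{3},\sqrt{3}]$, where $\tfrac{1}{\sqrt{2\pi}}$ is replaced by $\tfrac{\sqrt{3}}{4}$ and $\sqrt{\tfrac{2}{\pi}}$ by $\sqrt{\tfrac{3}{4}}$ while the quadratic terms are unchanged because both base laws have unit variance, recovering Eq.(\ref{equation:11}).
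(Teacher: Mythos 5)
Your proposal is correct and follows essentially the same route as the paper: reduce to one-dimensional Wasserstein distances via Corollary \ref{corollary:1}, push $\bm{x}_0$ through the piecewise-linear maps (with the same nonnegativity restriction on $\bm{r}_{\sigma}\odot\bm{h}_{\sigma_1}$, $\bm{r}_{\sigma}\odot\bm{h}_{\sigma_2}$, $\bm{t}_{\sigma_1}$, $\bm{t}_{\sigma_2}$ that the paper imposes), and evaluate the one-dimensional quantile formula in closed form. The only difference is cosmetic: you evaluate that integral by changing variables back to $Z$ (the comonotone coupling) and using Gaussian half-moments, whereas the paper substitutes the inverse CDFs explicitly and integrates via the antiderivatives of $\erf^{-1}(x)$ and $\erf^{-1}(x)^2$ --- the same computation in different clothing.
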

The first term of Eq.(\ref{equation:10}) is to measure the difference of the mean of $p_{\bm{rh}}(\bm{x}_0)$ and $p_{\bm{t}}(\bm{x}_0)$, the second term is to measure the difference of the standard deviation of $p_{\bm{rh}}(\bm{x}_0)$ and $p_{\bm{t}}(\bm{x}_0)$. If $\bm{h}_{\sigma}=\bm{r}_{\sigma}=\bm{t}_{\sigma}=\bm{1}$, Eq.(\ref{equation:10}) reduces to $f(\bm{h}, \bm{r}, \bm{t})=-\|\bm{h}_{\mu}+\bm{r}_{\mu}-\bm{t}_{\mu}\|_2^2$, which is the same as TransE, and the second term of Eq.(\ref{equation:10}) is equal to 0, i.e., the standard deviations of $p_{\bm{r}\bm{h}}(\bm{x}_0)$ and $q_{\bm{t}}(\bm{x}_0)$ are the same.
If $\bm{h}_{\mu}=\bm{r}_{\mu}=\bm{t}_{\mu}=\bm{0}$, Eq.(\ref{equation:10}) reduces to $f(\bm{h}, \bm{r}, \bm{t})=-\||\bm{r}_{\sigma}\odot\bm{h}_{\sigma}|-|\bm{t}_{\sigma}|\|_2^2$. Eq.(\ref{equation:11}) and Eq.(\ref{equation:12}) are similar, the only difference is the coefficient of the fourth terms, one is $-\sqrt{\frac{3}{4}}$, the other is $-\sqrt{\frac{2}{\pi}}$. If $\bm{h}_{\sigma_{1}}=\bm{h}_{\sigma_{2}}$ and $\bm{t}_{\sigma_{1}}=\bm{t}_{\sigma_{2}}$, Eq.(\ref{equation:11}) or Eq.(\ref{equation:12}) reduces to Eq.(\ref{equation:10}).

If we choose $\bm{x}_0$ to be a random variable with Dirac delta distribution (i.e. a point vector), then Eq.(\ref{equation:3}) do not model uncertainty. Thus, NFE can be seen as a generalization of conventional models. We have the following proposition to illustrate this:
\begin{proposition}
\label{proposition:5}
Let $k>0$, if $\bm{x}_{k}\sim U[-\frac{\sqrt{3}}{k},\frac{\sqrt{3}}{k}]^n$ or $\bm{x}_{k}\sim \mathcal{N}(\bm{0},\frac{\bm{I}}{k^2})$, the invertible functions are  Eq.(\ref{equation:5}) and similarity metric is Eq.(\ref{equation:9}), denote the scoring function as $f_k(\bm{h}, \bm{r},\bm{t})$, then $\bm{x}_k$ tends to a random variable with Dirac distribution as $k$ tends to infinity and
\begin{align}
&\lim_{k\rightarrow \infty}f_k(\bm{h}, \bm{r},\bm{t})\notag\\
=&\lim_{k\rightarrow \infty}-\|\bm{r}_{\sigma}\odot\bm{h}_{\mu}+\bm{r}_{\mu}-\bm{t}_{\mu}\|_2^2-\frac{1}{k^2}\||\bm{r}_{\sigma}\odot\bm{h}_{\sigma}|-|\bm{t}_{\sigma}|\|_2^2\notag\\
=&-\|\bm{r}_{\sigma}\odot\bm{h}_{\mu}+\bm{r}_{\mu}-\bm{t}_{\mu}\|_2^2
\label{equation:13}
\end{align}
\end{proposition}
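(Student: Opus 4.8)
There are two things to show: that $\bm{x}_k$ converges to a Dirac distribution, and the displayed limit of $f_k$. The first assertion I would dispatch directly. Writing $\bm{x}_k\stackrel{d}{=}\tfrac1k\bm{x}_0$ with $\bm{x}_0\sim U[-\sqrt3,\sqrt3]^n$ (resp.\ $\bm{x}_0\sim\mathcal{N}(\bm{0},\bm{I})$), in the uniform case the support $[-\tfrac{\sqrt3}{k},\tfrac{\sqrt3}{k}]^n$ shrinks to $\{\bm{0}\}$, and in the Gaussian case every coordinate has variance $1/k^2\to0$, so Chebyshev's inequality gives $\Pr(\|\bm{x}_k\|_\infty>\varepsilon)\to0$ for every $\varepsilon>0$. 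In either case the law of $\bm{x}_k$ converges weakly to the point mass at $\bm{0}$, i.e.\ a Dirac distribution.

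For the limit of $f_k$, the plan is to reduce to Proposition~\ref{proposition:2} by a change of embeddings rather than redoing any integrals. Since $\bm{x}_k\stackrel{d}{=}\tfrac1k\bm{x}_0$, Eq.(\ref{equation:5}) gives $f_{\bm{r}}\circ f_{\bm{h}}(\bm{x}_k)\stackrel{d}{=}\tfrac1k(\bm{r}_\sigma\odot\bm{h}_\sigma)\odot\bm{x}_0+(\bm{r}_\sigma\odot\bm{h}_\mu+\bm{r}_\mu)$ and $f_{\bm{t}}(\bm{x}_k)\stackrel{d}{=}\tfrac1k\bm{t}_\sigma\odot\bm{x}_0+\bm{t}_\mu$. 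Hence, introducing the reparametrized embeddings $\bm{h}'=(k\bm{h}_\mu,\bm{h}_\sigma)$, $\bm{r}'=(\bm{r}_\mu,\tfrac1k\bm{r}_\sigma)$, $\bm{t}'=(\bm{t}_\mu,\tfrac1k\bm{t}_\sigma)$ — all admissible since $k>0$ keeps the $\sigma$-entries nonzero — one checks that $f_{\bm{r}'}\circ f_{\bm{h}'}(\bm{x}_0)\stackrel{d}{=}f_{\bm{r}}\circ f_{\bm{h}}(\bm{x}_k)$ and $f_{\bm{t}'}(\bm{x}_0)\stackrel{d}{=}f_{\bm{t}}(\bm{x}_k)$, so $f_k(\bm{h},\bm{r},\bm{t})$ equals the Proposition~\ref{proposition:2} scoring function evaluated at $(\bm{h}',\bm{r}',\bm{t}')$. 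Substituting into Eq.(\ref{equation:10}) and using $\tfrac1k(\bm{r}_\sigma\odot k\bm{h}_\mu)=\bm{r}_\sigma\odot\bm{h}_\mu$ and $|\tfrac1k\bm{v}|=\tfrac1k|\bm{v}|$ collapses the first term to $-\|\bm{r}_\sigma\odot\bm{h}_\mu+\bm{r}_\mu-\bm{t}_\mu\|_2^2$ and the second to $-\tfrac1{k^2}\||\bm{r}_\sigma\odot\bm{h}_\sigma|-|\bm{t}_\sigma|\|_2^2$. Letting $k\to\infty$ removes the second term and yields Eq.(\ref{equation:13}).

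If a self-contained derivation is preferred over the reparametrization, the same conclusion follows by repeating the argument behind Proposition~\ref{proposition:2}: the coordinates of $\bm{x}_k$ are i.i.d.\ and Eq.(\ref{equation:5}) acts coordinatewise, so Corollary~\ref{corollary:1} splits $W$ into one-dimensional pieces, and each one-dimensional Wasserstein distance $\int_0^1(F^{-1}(z)-G^{-1}(z))^2\,\mathrm{d}z$ between affine transforms (with means $\mu_1,\mu_2$ and scales $s_1,s_2$) of the mean-zero, unit-variance base law $B$ equals $(\mu_1-\mu_2)^2+(s_1-s_2)^2$, using $\int_0^1 B^{-1}(z)\,\mathrm{d}z=0$ and $\int_0^1 (B^{-1}(z))^2\,\mathrm{d}z=1$. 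The one point requiring care in either route is the appearance of the absolute values: a negative entry of $\bm{g}_\sigma$ reflects the base variable, but since $U[-\sqrt3,\sqrt3]$ and $\mathcal{N}(0,1)$ are symmetric about $0$ the law is unchanged, so the effective scale in the Wasserstein formula is $|\bm{g}_\sigma|$ rather than $\bm{g}_\sigma$; I would state this symmetry observation explicitly. I expect this bookkeeping to be the only nonroutine part — the limit itself is just $1/k^2\to0$.
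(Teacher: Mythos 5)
Your proof is correct and follows essentially the same route as the paper's: observe that $k\bm{x}_k$ has the standard base law, absorb the factor $1/k$ into the $\sigma$-parameters so that Proposition~\ref{proposition:2} applies, read off the $\tfrac{1}{k^2}$ coefficient on the second term of Eq.(\ref{equation:10}), and let $k\to\infty$. Your additions (the explicit weak-convergence argument for the Dirac limit and the remark on absolute values for negative $\sigma$-entries) are sound but do not change the underlying argument, which the paper carries out via the same rescaling.
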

Proposition \ref{proposition:5} shows that $1/k^2$ in $f_k(\bm{h}, \bm{r},\bm{t})$ reflects the degree to which $f_k(\bm{h}, \bm{r},\bm{t})$ focuses on uncertainty. Higher value of $1/k^2$ indicates $f_k(\bm{h}, \bm{r},\bm{t})$ focusing more on uncertainty. If $1/k^2=0$, the second term of $f_k(\bm{h}, \bm{r},\bm{t})$ is dropped. Thus, the second term of $f_k(\bm{h}, \bm{r},\bm{t})$ or the second term of Eq.(\ref{equation:10}) is to model uncertainty. Eq.(\ref{equation:11}) or Eq.(\ref{equation:12}) has the similar result as Eq.(\ref{equation:10}). In conclusion, our proposed model is more general than conventional models.

\subsection{Logical Rules}
\label{section:3.3}
The inductive ability of a scoring function is reflected in its ability to learn logical rules \cite{zhang2021neural}. The symmetry, antisymmetry, inverse and composition rules are defined as follows:\\
\textbf{Symmetry Rules}: A relation $r$ is symmetric if $ \forall h,t, (h,r,t)\in \mathcal{F} \rightarrow (t,r,h)\in \mathcal{F}$.\\
\textbf{Antisymmetry Rules}: A relation $r$ is antisymmetric if $ \forall h,t, (h,r,t)\in \mathcal{F} \rightarrow (t,r,h)\notin \mathcal{F}$.\\
\textbf{Inverse Rules}: A relation $r_{1}$ is inverse to a relation $r_{2}$ if $ \forall h,t, (h,r_{1},t)\in \mathcal{F} \rightarrow (t,r_{2},h)\in \mathcal{F}$.\\
\textbf{Composition Rules}: A relation $r_{3}$ is the composition of a relation $r_{1}$ and a relation $r_{2}$ if $\forall h,\widetilde{t},t, (h,r_{1},\widetilde{t})\in \mathcal{F}\wedge (\widetilde{t},r_{2},t)\in \mathcal{F} \rightarrow (h,r_{3},t)\in \mathcal{F}$.\\
We have the following proposition about our proposed scoring functions and logical rules.
\begin{proposition}
\label{proposition:6}
Scoring functions Eq.(\ref{equation:10}), Eq.(\ref{equation:11}) and Eq.(\ref{equation:12}) can learn symmetry, antisymmetry, inverse and composition rules.
\end{proposition}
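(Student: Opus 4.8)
The plan is to use the ``realizing embedding'' strategy: for each rule I would exhibit a choice of relation embeddings under which the rule holds, and for antisymmetry additionally check that the model does not force the reversed triple to maximal score; I would argue at the level of the induced distributions rather than the closed-form scores. The starting observation is that, in every one of the settings that produce Eq.~(\ref{equation:10}), Eq.~(\ref{equation:11}) or Eq.~(\ref{equation:12}), the score equals $f(\bm h,\bm r,\bm t)=-W\bigl(\mathrm{law}(f_{\bm r}\circ f_{\bm h}(\bm x_0)),\,\mathrm{law}(f_{\bm t}(\bm x_0))\bigr)$ with $W$ the discrepancy of Eq.~(\ref{equation:9}); since $W$ is symmetric and vanishes exactly when its two arguments coincide, $f(\bm h,\bm r,\bm t)$ attains its maximum $0$ precisely when $f_{\bm r}\circ f_{\bm h}(\bm x_0)$ and $f_{\bm t}(\bm x_0)$ have the same law. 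Two elementary facts then carry the argument: (i) if $X\stackrel{d}{=}Y$ then $\psi(X)\stackrel{d}{=}\psi(Y)$ for any fixed measurable $\psi$, in particular for the invertible map of a relation; and (ii) the relation maps $\bm x\mapsto\bm g_\sigma\odot\bm x+\bm g_\mu$ with the entries of $\bm g_\sigma$ nonzero---which is exactly the admissible family, since $f_{\bm r}$ is affine in all three instantiations---form a group under composition, with identity $(\bm 1,\bm 0)$, inverse $(\bm 1/\bm g_\sigma,\,-\bm g_\mu/\bm g_\sigma)$, and product $(\bm g_{2\sigma}\odot\bm g_{1\sigma},\ \bm g_{2\sigma}\odot\bm g_{1\mu}+\bm g_{2\mu})$.

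I would then dispatch the four rules. \emph{Symmetry}: take $\bm r_\sigma=\bm 1$, $\bm r_\mu=\bm 0$, so $f_{\bm r}=\mathrm{id}$ and $f(\bm h,\bm r,\bm t)=-W(\mathrm{law}(f_{\bm h}(\bm x_0)),\mathrm{law}(f_{\bm t}(\bm x_0)))=f(\bm t,\bm r,\bm h)$ by symmetry of $W$; the rule holds identically. \emph{Antisymmetry}: take $\bm r_\sigma=\bm 1$, $\bm r_\mu\neq\bm 0$, so $f_{\bm r}$ is a nontrivial translation; if both $f(\bm h,\bm r,\bm t)=0$ and $f(\bm t,\bm r,\bm h)=0$ then $f_{\bm t}(\bm x_0)\stackrel{d}{=}f_{\bm h}(\bm x_0)+\bm r_\mu$ and $f_{\bm h}(\bm x_0)\stackrel{d}{=}f_{\bm t}(\bm x_0)+\bm r_\mu$, hence $f_{\bm h}(\bm x_0)\stackrel{d}{=}f_{\bm h}(\bm x_0)+2\bm r_\mu$, and taking coordinatewise expectations forces $\bm r_\mu=\bm 0$, a contradiction; the realization is non-vacuous because a tail with $\bm t_\mu=\bm h_\mu+\bm r_\mu$ and all other parameters equal to those of the head gives $f_{\bm r}\circ f_{\bm h}=f_{\bm t}$ pointwise, hence $f(\bm h,\bm r,\bm t)=0$. \emph{Inverse}: given $\bm r_1$, choose $\bm r_2$ with $f_{\bm r_2}=f_{\bm r_1}^{-1}$ via (ii); then $f(\bm h,\bm r_1,\bm t)=0$ says $f_{\bm r_1}\circ f_{\bm h}(\bm x_0)\stackrel{d}{=}f_{\bm t}(\bm x_0)$, and applying $f_{\bm r_2}$ to both sides via (i) gives $f_{\bm h}(\bm x_0)\stackrel{d}{=}f_{\bm r_2}\circ f_{\bm t}(\bm x_0)$, i.e., $f(\bm t,\bm r_2,\bm h)=0$. \emph{Composition}: given $\bm r_1,\bm r_2$, choose $\bm r_3$ with $f_{\bm r_3}=f_{\bm r_2}\circ f_{\bm r_1}$ (admissible, as its scale vector $\bm r_{2\sigma}\odot\bm r_{1\sigma}$ has nonzero entries); from $f(\bm h,\bm r_1,\widetilde{\bm t})=0$, apply $f_{\bm r_2}$ to obtain $f_{\bm r_2}\circ f_{\bm r_1}\circ f_{\bm h}(\bm x_0)\stackrel{d}{=}f_{\bm r_2}\circ f_{\widetilde{\bm t}}(\bm x_0)$, which by $f(\widetilde{\bm t},\bm r_2,\bm t)=0$ equals $f_{\bm t}(\bm x_0)$ in law, so $f_{\bm r_3}\circ f_{\bm h}(\bm x_0)\stackrel{d}{=}f_{\bm t}(\bm x_0)$, i.e., $f(\bm h,\bm r_3,\bm t)=0$.

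Every step above applies verbatim to Eq.~(\ref{equation:10}), Eq.~(\ref{equation:11}) and Eq.~(\ref{equation:12}), because the head and tail maps enter only through the single evaluations $f_{\bm h}(\bm x_0)$ and $f_{\bm t}(\bm x_0)$, so the two-piece form of the entity maps in Eq.~(\ref{equation:7}) is irrelevant, while the relation map is affine throughout. I expect the only delicate points to be bookkeeping: verifying that the realizing relation embeddings remain admissible (nonzero scale entries, affine) and that the ``$f=0\Rightarrow$ same law'' direction is applied correctly in the antisymmetry, inverse and composition cases. As an independent check, each of the four claims can also be obtained by substituting the chosen embeddings into the closed forms Eq.~(\ref{equation:10})--(\ref{equation:12}) and verifying that the score is $0$ (respectively, negative), which reduces to $|\bm a\odot\bm b|=|\bm a|\odot|\bm b|$ together with the observation that the cross term in Eq.~(\ref{equation:11})/(\ref{equation:12}) vanishes whenever $\bm r_\sigma\odot\bm h_\mu+\bm r_\mu-\bm t_\mu=\bm 0$.
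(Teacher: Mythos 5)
Your proposal is correct, but it runs along a genuinely different track from the paper's proof. The paper works purely with the closed-form score: it identifies ``$(h,r,t)$ true'' with the exact conditions $\bm{r}_{\sigma}\odot\bm{h}_{\mu}+\bm{r}_{\mu}=\bm{t}_{\mu}$ and $|\bm{r}_{\sigma}\odot\bm{h}_{\sigma}|=|\bm{t}_{\sigma}|$, substitutes these for the premise and conclusion triples of each rule, and reads off the induced algebraic constraints on the relation embeddings (e.g.\ $\bm{r}_{\sigma}\odot\bm{r}_{\sigma}=\bm{1}$, $\bm{r}_{\sigma}\odot\bm{r}_{\mu}+\bm{r}_{\mu}=\bm{0}$ for symmetry, $\bm{r}_{1\sigma}\odot\bm{r}_{2\sigma}=\bm{r}_{3\sigma}$, $\bm{r}_{2\sigma}\odot\bm{r}_{1\mu}+\bm{r}_{2\mu}=\bm{r}_{3\mu}$ for composition); it handles Eq.(\ref{equation:11}) and Eq.(\ref{equation:12}) only through the degenerate reduction $\bm{h}_{\sigma_1}=\bm{h}_{\sigma_2}$, $\bm{t}_{\sigma_1}=\bm{t}_{\sigma_2}$ back to Eq.(\ref{equation:10}). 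You instead argue one level up: the score is $-W$ of the two pushforward laws, $W$ vanishes exactly at equality of laws, and the affine relation maps form a group, so you exhibit explicit witnesses (identity/involution for symmetry, a nontrivial translation for antisymmetry, the group inverse for inverse, the group composite for composition) and propagate ``equal in law'' through invertible maps. What your route buys is a uniform treatment of all three scoring functions --- the two-piece entity maps of Eq.(\ref{equation:7}) enter only through $f_{\bm{h}}(\bm{x}_0)$ and $f_{\bm{t}}(\bm{x}_0)$, so no reduction to the one-piece case is needed --- and a cleaner formalization of ``can learn'' as existence of relation embeddings under which premise scores at the maximum force the conclusion score to the maximum. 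What the paper's route buys is the explicit relation-level identities themselves, which it later uses as empirical statistics (the histograms of $|\bm{r}_{\sigma}\odot\bm{r}_{\sigma}-\bm{1}|$ etc.\ in the appendix), and it keeps every distribution identified with the parameter triple appearing in the closed form. The only care your version needs, which you already flag, is that identifying the score with $-W$ of the laws in the piecewise case uses the same sign restriction ($\bm{r}_{\sigma}\odot\bm{h}_{\sigma_1}\geq 0$, etc.) under which the paper derives Eq.(\ref{equation:11}) and Eq.(\ref{equation:12}), and that both proofs share the convention that truth of a triple means the score attains its maximum value $0$.
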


\subsection{Loss Function}
\label{section:3.4}
We use the same loss function, binary classification loss function with reciprocal learning, as in \citep{dettmers2018convolutional}. For every triplet $(h,r,t)$, our loss function is
\begin{align*}
    \ell(h,r,t)=\sum_{t^{'}\in\mathcal{E}}\log(1+\exp(-y_{t^{'}}(\gamma-f(h,r,t^{'}))))
\end{align*}
where $\gamma$ is a fixed margin and $y_{t^{'}}=1$ if $t^{'}=t$, otherwise $y_{t^{'}}=-1$.

\begin{figure*}[t]
\begin{center}
\centerline{\includegraphics[width=0.9\textwidth]{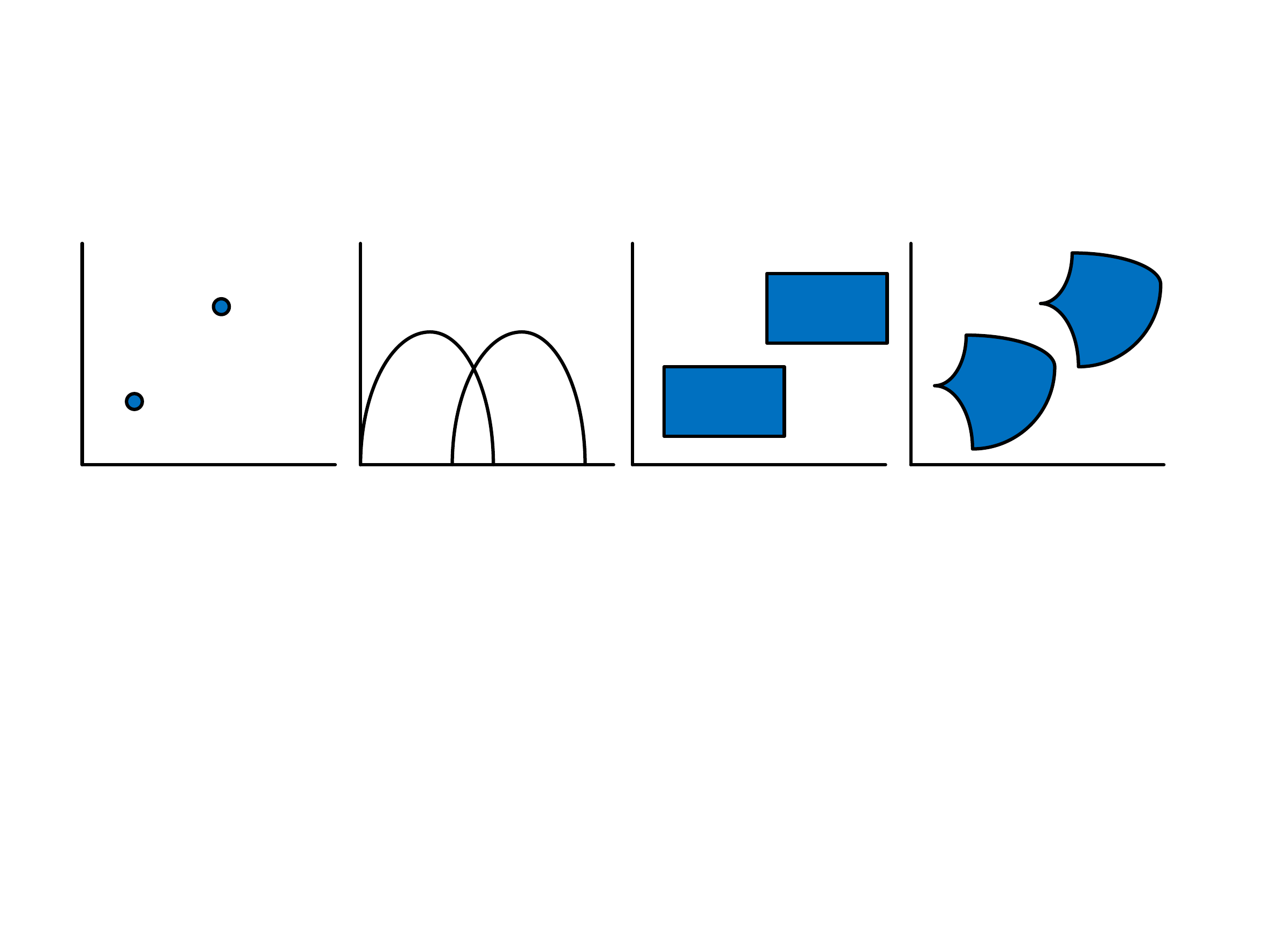}}
\caption{We define permutations $\alpha$ of different sets $X$ with the same form, $\alpha(\bm{g})(\bm{x})=\bm{g}+\bm{x}$, which translates one object to another. From left to right, the first corresponds to the translation of points, the second corresponds to the translation of random variables, the third corresponds to the translation of hyper-rectangles, and the fourth corresponds to the translation of manifolds.}
\label{figure:1}
\end{center}
\end{figure*}

\section{Discussion}
\label{section:6}
\paragraph{Normalizing Flows}
We implement the invertible functions as linear functions or piecewise linear functions, which are spline functions. To construct more expressive invertible functions for better performance, piecewise-quadratic functions \cite{durkan2019neural} or cubic splines \cite{durkan2019neural} or even invertible neural networks \cite{huang2018neural} can be an option. In addition to designing invertible functions, \citet{dinh2019rad} generalize Eq.(\ref{equation:1}) to piecewise invertible functions and \citet{grathwohl2018ffjord} propose a continuous version of normalizing flows. This is also a potential research direction.

\paragraph{Normalizing Flows Embedding}
The similarity of two random variables can be measured by $f$-divergence \cite{gibbs2002choosing} or Wasserstein distance. However, these metrics all involve computing a definite integral, which may have no closed form. This may limit the introduction of uncertainty. Since the 1-dimensional Wasserstein distance of two piecewise linear distributions always has a closed form, one solution is to approximate any distribution with a piecewise linear distribution \cite{petersen2018optimal}. For example, we can approximate a normal distribution with a triangular distribution. In order to compute the Wasserstein distance efficiently, we decompose $n$-dimensional Wasserstein distance into a sum of 1-dimensional Wasserstein distance by Corollary \ref{corollary:1}, a sufficient condition of Proposition \ref{proposition:1}. We can design other invertible functions such that $p_{\bm{rh}}(\bm{x}_0)$ and $p_{\bm{t}}(\bm{x}_0)$ share the same copula. We choose the initial random variable $\bm{x}_0$ as a continuous random variable, $\bm{x}_0$ can also be a discrete distribution. How to choose a suitable $\bm{x}_0$ is also worth exploring.


\paragraph{Symmetric Groups}
The view of symmetric groups gives us a unified perspective of embedding. We unify embedding as permutations of a set. On one hand, we can easily parameterize a complex permutation and obtain a complex object by leveraging a permutation to act on a simple object. On the other hand, the group operation of symmetric groups, the composition of functions, is easy to compute. To introduce different properties, we can choose symmetric groups of different sets. For example, the set of points, random variables, hyper-rectangles \cite{abboud2020boxe}, manifolds \cite{xiao2016one} and groups \cite{ebisu2018toruse}. Figure \ref{figure:1} shows an example.

\begin{table*}[t]
\begin{center}
\begin{small}
\begin{tabular}{llllllllllll}
\toprule
\multicolumn{3}{c}{}
&\multicolumn{3}{c}{\bf WN18RR}&\multicolumn{3}{c}{\bf FB15k-237}  &\multicolumn{3}{c}{\bf YAGO3-10}\\
\cmidrule(r){4-6}\cmidrule(r){7-9}\cmidrule(r){10-12}\\
\multicolumn{3}{c}{} &MRR &H@1 &H@10 &MRR &H@1 &H@10 &MRR &H@1 &H@10\\
\midrule
\multicolumn{3}{l}{TransE} &0.218  &0.036  &0.506  &0.335  &0.240  &0.526  &0.539  &0.455  &0.691 \\
\multicolumn{3}{l}{RotatE} &0.476  &0.428  &0.571  &0.338  &0.241  &0.533  & 0.495 &0.402 &0.670\\
\midrule
\multicolumn{3}{l}{DistMult} &0.396  &0.379  &0.427  &0.289  &0.206  &0.452  &0.536  &0.471  &0.652\\
\multicolumn{3}{l}{ComplEx} &0.428  &0.440  &0.410  &0.247  &0.158  &0.510  &0.360  &0.260  &0.550 \\
\multicolumn{3}{l}{QuatE} &\underline{0.481}  &0.436  &0.564  &0.311  &0.221  &0.495 &\multicolumn{1}{c}{---}  &\multicolumn{1}{c}{---} &\multicolumn{1}{c}{---}\\
\multicolumn{3}{l}{TuckER} &0.470  &\underline{0.443}  &0.526  &\textbf{0.358}  &\textbf{0.266}  &\textbf{0.544} &\multicolumn{1}{c}{---}  &\multicolumn{1}{c}{---} &\multicolumn{1}{c}{---}\\
\midrule
\multicolumn{3}{l}{ConvE} &0.430 &0.440 &0.520  &0.325  &0.237 &0.501 &0.440 &0.350 &0.620\\
\multicolumn{3}{l}{HypER} &0.465  &0.436  &0.522 &0.341  &0.252  &0.520  &0.533  &0.455  &0.678 \\
\midrule
\multicolumn{3}{l}{DihEdral} &0.480 &\textbf{0.452} &0.536  &0.325  &0.237  &0.501  &0.440  &0.350  &0.620  \\
\multicolumn{3}{l}{NagE} &0.477 &0.432 &\underline{0.574}  &0.340  &0.244 &0.530 &\multicolumn{1}{c}{---}  &\multicolumn{1}{c}{---} &\multicolumn{1}{c}{---}\\
\midrule
\multicolumn{3}{l}{NFE-2} &0.476 &0.431  &0.569  &0.352  &0.256  &0.542  &\underline{0.563}&0.489 &\textbf{0.699}\\
\multicolumn{3}{l}{NFE-1} &\textbf{0.483}  &0.438  &\textbf{0.576}  &\underline{0.355}  &\underline{0.261} &\underline{0.543}  &\textbf{0.570}  &\textbf{0.498}  &\underline{0.697}\\
\multicolumn{3}{l}{NFE-w/o-uncertainty} &0.475  &0.430  &0.568  &0.352  &0.257  &0.542  &\underline{0.563}  &\underline{0.490}  &0.693 \\
\multicolumn{3}{l}{NFE-1-$\mu$} &0.218  &0.036  &0.506  &0.335  &0.240  &0.526  &0.539  &0.455  &0.691\\
\multicolumn{3}{l}{NFE-1-$\sigma$} &0.286  &0.126  &0.522  &0.342  &0.247  &0.531  &0.519  &0.429  &0.676\\
\bottomrule
\end{tabular}
\end{small}
\end{center}
\caption{Knowledge graph completion results on WN18RR, FB15k-237 and YGAO3-10 datasets.}
\label{table:1}
\end{table*}

\begin{table*}[t]
\begin{center}
\begin{small}
\begin{tabular}{llllllllllll}
\toprule
$1/k^2$ &0 &1/16 &1/8 &1/4 &1/2 &1 &2 &4 &8 &16\\
\midrule
MRR   &0.475 &0.478 &0.478 &0.478 &0.479 &0.483 &0.483 &0.483 &0.482 &0.482\\
H@1   &0.430 &0.433 &0.433 &0.433 &0.434 &0.438 &0.438 &0.436 &0.436 &0.436\\
H@10  &0.568 &0.572 &0.573 &0.571 &0.573 &0.576 &0.575 &0.575 &0.572 &0.572\\
\bottomrule
\end{tabular}
\end{small}
\end{center}
\caption{The performance of NFE-1 with different $1/k^2$ on WN18RR dataset. Higher value of $1/k^2$ indicates the model focusing more on uncertainty.}
\label{table:2}
\end{table*}

\section{Related Work}
\paragraph{Group Embedding}
TorusE \cite{ebisu2018toruse} defines embedding in an $n$-dimensional torus space which is a compact Lie group. MobiusE \cite{chen2021mobiuse} embeds entities/relations to the surface of a Mobius ring. \citet{cai2019group} show that logical rules have natural correspondence to  group representation theory. DihEdral \cite{xu2019relation} models the relations with the representation of dihedral group. NagE \cite{yang2020nage} finds the hidden group algebraic structure of relations and embeds relations as group elements. NFE embeds entities/relations as elements of a permutation group.
\paragraph{Uncertainty}
To model uncertainty in KGs, KG2E \cite{he2015learning} represents entities/relations as random variables with multivariate normal distributions. TransG \cite{xiao2016transg} embeds entities as random variables with normal distributions and draws a mixture of normal distribution for relation embedding to handle multiple semantic issue. NFE introduces uncertainty by embedding entities/relations as permutations of a set of random variables.
\paragraph{Normalizing Flows}
Normalizing Flows should satisfy two conditions in order to be practical, the invertible function has tractable inverse and the determinant of Jacobian is easy to compute. A basic form of normalizing flows can be element-wise invertible functions. NICE \cite{dinh2014nice} and RealNVP \cite{dinh2016density} utilize affine functions to construct coupling layers. \citet{muller2019neural} propose a powerful generalization of affine functions, based on monotonic piecewise polynomials. \citet{ziegler2019latent} introduce an invertible non-linear squared function. \citet{durkan2019neural} model the coupling function as a monotone rational-quadratic spline. \citet{jaini2019sum} propose a strictly increasing polynomial and prove such polynomials can approximate any strictly monotonic univariate continuous function.

\section{Experiments}
In this section, we first introduce the experimental settings and compare NFE with existing models. We then show the effectiveness of introducing uncertainty. Finally, we conduct ablation studies. Please see Appendix D for more experimental details.

\subsection{Experimental Settings}
\label{section:5.1}
\paragraph{Datasets}
We evaluate our model on three popular knowledge graph completion datasets, WN18RR \cite{dettmers2018convolutional}, FB15k-237 \cite{toutanova2015representing} and YAGO3-10 \cite{dettmers2018convolutional}.
\paragraph{Evaluation Metric}
We use the filtered MR, MRR and Hits@N (H@N) \citep{bordes2013translating} as evaluation metrics and choose the hyper-parameters with the best filtered MRR on the validation set.
\paragraph{Baselines}
We compare the performance of NFE with several translational models, including  TransE \citep{bordes2013translating}, RotatE \citep{sun2018rotate}, bilinear models, including DistMult \citep{yang2014embedding}, ComplEx \citep{toutanova2015representing}, QuatE \citep{zhang2019quaternion}, TuckER \citep{balavzevic2019tucker}, neural networks models, including ConvE \citep{dettmers2018convolutional}, HypER \citep{balavzevic2019hypernetwork}, and group embedding models, including DihEdral \cite{xu2019relation}, NagE \cite{yang2020nage}.

\subsection{Results}
\label{section:5.2}
Due to the great generality, our proposed NFE is able to have different instantiations, Eq.(\ref{equation:10}) and Eq.(\ref{equation:11}) and Eq.(\ref{equation:12}). We denote Eq.(\ref{equation:10}) as NFE-1. Eq.(\ref{equation:11}) and Eq.(\ref{equation:12}) achieve almost same result, we denote them as NFE-2. We compare the performance of NFE-1 and NFE-2 with baseline models. See Table \ref{table:1} for the results. NFE-1 and NFE-2 achieve state-of-the-art performance on three datasets, especially on YAGO3-10, which is the largest dataset. NFE-2 is slightly inferior to NFE-1. Although NFE-2 is more expressive than NFE-1, NFE-2 may be more difficult to optimize. NFE-1 are derived from TransE and DistMult, but NFE-1 outperforms TransE and DistMult significantly on all metrics on three datasets. NFE-1 is better than neural networks models, ConvE and HypER, and group embedding models, DihEdral and NagE. The results show the effectiveness of our model.

\subsection{Uncertainty}
\label{section:5.3}
Here we focus on the NFE-w/o-uncertainty in Table \ref{table:1} and Table \ref{table:2}. Proposition \ref{proposition:5} shows that NFE-1 can be seen as a generalization of existing models to model uncertainty. NFE-1 can reduces to Eq.(\ref{equation:13}), which do not model uncertainty. We denote Eq.(\ref{equation:13}) as NFE -w/o-uncertainty. Table \ref{table:1} shows that NFE-1 achieves better performance than NFE-w/o-uncertainty on all metrics on three datasets. Thus, the results show the effectiveness of introducing uncertainty.

Proposition \ref{proposition:5} shows that $1/k^2$ in Eq.(\ref{equation:13}) reflects the degree to which Eq.(\ref{equation:13}) focuses on uncertainty. Higher value of $1/k^2$ indicates the model focusing more on uncertainty. Table \ref{table:2} shows the performance of Eq.(\ref{equation:13}) with different $1/k^2$ on WN18RR dataset. The results show that the performance of Eq.(\ref{equation:13}) generally gets worse as $1/k^2$ gets smaller. This also shows the effectiveness of introducing uncertainty.

\subsection{Ablation Study}
\label{section:5.4}
The invertible functions of NFE-1 is Eq.(\ref{equation:4}), the composition of function of TransE and function of DistMult. We conduct ablation studies to analyze the performance of NFE-1 only using one of the functions. We denote Eq.(\ref{equation:10}) using function of TransE as NFE-1-$\mu$, Eq.(\ref{equation:10}) using function of DistMult as NFE-1-$\sigma$. Table \ref{table:1} shows that the performance of NFE-1 is better than NFE-1-$\mu$ and NFE-1-$\sigma$. The reason is that NFE-1 is more expressive than NFE-1-$\mu$ and NFE-1-$\sigma$.

\section{Conclusion}
We propose a unified perspective of embedding and introduce uncertainty into KGE from the view of group theory. We embed entities/relations as elements of a  symmetric group to introduce uncertainty. Based on the embedding, NFE is defined by measuring the similarity of two normalizing flows. Experiments verify the effectiveness of NFE.
\section*{Acknowledgements}
This work is supported by the National Key Research and Development Program of China (2020AAA0106000), the National Natural Science Foundation of China (U19A2079), and the CCCD Key Lab of Ministry of Culture and Tourism.

\bibliography{aaai23}

\appendix

\onecolumn
\setcounter{proposition}{0}
\setcounter{lemma}{0}

\section*{Appendix}
The Appendix is structured as follows:
\begin{enumerate}
    \item In Appendix Remarks, we show some remarks of our model.
    \item In Appendix Unified Representation, we show the models that can be represented as the unified representation.
    \item In Appendix Proofs, we show the proofs.
    \item In Appendix Experimental Details, we show the experimental details.
\end{enumerate}

\section{Remarks}
\label{appendix:1}
\begin{remark}[Correspondence of group notions and the unified representation]
Group theory is a language to describe the symmetries. We define a map from an group element $\bm{g}$ to an invertible function $f_{\bm{g}}$. For every $f_{\bm{g}}$, $\bm{x}$ and $f_{\bm{g}}(\bm{x})$ are symmetrical about $f_{\bm{g}}$, $i.e.$, $\bm{x}$ and $f_{\bm{g}}(\bm{x})$ belong to the same orbit.

A group is a set equipped with a binary operation, in such a way that the operation is closed and associative, an identity element exists and every element has an inverse. The properties of groups have some correspondence to Eq.(\ref{equation:2}). The closure property is to ensure that for any $f_{\bm{h}}(\bm{x})$ and $f_{\bm{r}}(\bm{x})$ there exists a $f_{\bm{t}}(\bm{x})$ such that $f_{\bm{t}}(\bm{x})=f_{\bm{r}}\circ f_{\bm{h}}(\bm{x})$ for any $\bm{x}$. The associativity corresponds to the functional composition of $f_{\bm{r}}$ and $f_{\bm{h}}$. The existence of identity element corresponds to the existence of an identity map. The existence of inverse element is ensure that the functions $f_{\bm{h}},f_{\bm{r}}$ and $f_{\bm{t}}$ invertible. In the original definition of DistMult, it is still valid if the entries of $\bm{g}$ has zero entries. We can additionally define the group action $\alpha$ for vectors with zero entries. In other words, we do not need to constraint the function $f_{\bm{g}}$ invertible.
\end{remark}
\begin{remark}[Measurement of  uncertainty]
The uncertainty of a continuous random variable $\bm{Y}\sim p(\bm{y})$ can be measured by the differential entropy: $$H(\bm{Y})=-\int p(\bm{y})\log p(\bm{y})\mathrm{d}\bm{y}$$
Eq.(\ref{equation:3}) involves four random variables, $\bm{x}_0$, $f_{\bm{h}}(\bm{x}_0)$, $f_{\bm{r}}\circ f_{\bm{h}}(\bm{x}_0)$ and $f_{\bm{t}}(\bm{x}_0)$. Thus, $H(f_{\bm{h}}(\bm{x}_0))-H(\bm{x}_0)$,$H(f_{\bm{r}}\circ f_{\bm{h}}(\bm{x}_0))-H(f_{\bm{h}}(\bm{x}_0))$ and $H(f_{\bm{t}}(\bm{x}_0))-H(\bm{x}_0)$ measure the amount of change of uncertainty.
\end{remark}
\begin{remark}[Eq.(\ref{equation:5}) or Eq.(\ref{equation:7}) for non-invertible functions]
Eq.(\ref{equation:1}) is only valid for invertible functions. Thus, we need to constraint the entries of $\bm{g}$ of Eq.(\ref{equation:4}) non-zero. To ensure Eq.(\ref{equation:5}) invertible, we need to constraint the entries of $\bm{h}_{\sigma},\bm{r}_{\sigma}$ and $\bm{t}_{\sigma}$ non-zero. If $\bm{x}_0\sim \mathcal{N}(\bm{0},\bm{I})$, the invertible functions are Eq.(\ref{equation:5}) and similarity metric is KL divergence, then the scoring function is $\infty$ if there exists entries of $\bm{h}_{\sigma}$ or $\bm{r}_{\sigma}$ or $\bm{t}_{\sigma}$ is zero. While the Wasserstein distance is still valid when the entries of $\bm{g}$ are zero. Thus, another benefit of Wasserstein distance over KL divergence is that Wasserstein distance is more numerically stable.
\end{remark}
\begin{remark}[KL divergence similarity metric]
If $\bm{x}_0\sim U[-\sqrt{3},\sqrt{3}]^n$, the invertible functions is Eq.(\ref{equation:5}) or Eq.(\ref{equation:7}) and similarity metric is Eq.(\ref{equation:8}), then the scoring function is not valid. If $\bm{x}_0\sim \mathcal{N}(\bm{0},\bm{I})$, the invertible functions is Eq.(\ref{equation:5}), the similarity metric is Eq.(\ref{equation:8}), then the scoring function is
\begin{align}
    f(\bm{h}, \bm{r}, \bm{t})=\sum_{i=1}^{n}\log\frac{|\bm{t}_{\sigma}|_i}{|\bm{r}_{\sigma}|_i\odot|\bm{h}_{\sigma}|_i}+\frac{|\bm{r}_{\sigma}|_i^2\odot|\bm{h}_{\sigma}|_i^2+(\bm{r}_{\sigma i}\odot\bm{h}_{\mu i}+\bm{r}_{\mu i}-\bm{t}_{\mu i})^2}{2|\bm{t}_{\sigma}|_i^2}-\frac{n}{2}
    \label{equation:15}
\end{align}
If $\bm{x}_0\sim \mathcal{N}(\bm{0},\bm{I})$, the invertible functions is Eq.(\ref{equation:7}), the similarity metric is Eq.(\ref{equation:8}), then the scoring function has no closed form.
\end{remark}

\section{Unified Representation}
\label{appendix:2}
\begin{figure}[t]
\begin{center}
\centerline{\includegraphics[width=0.5\textwidth]{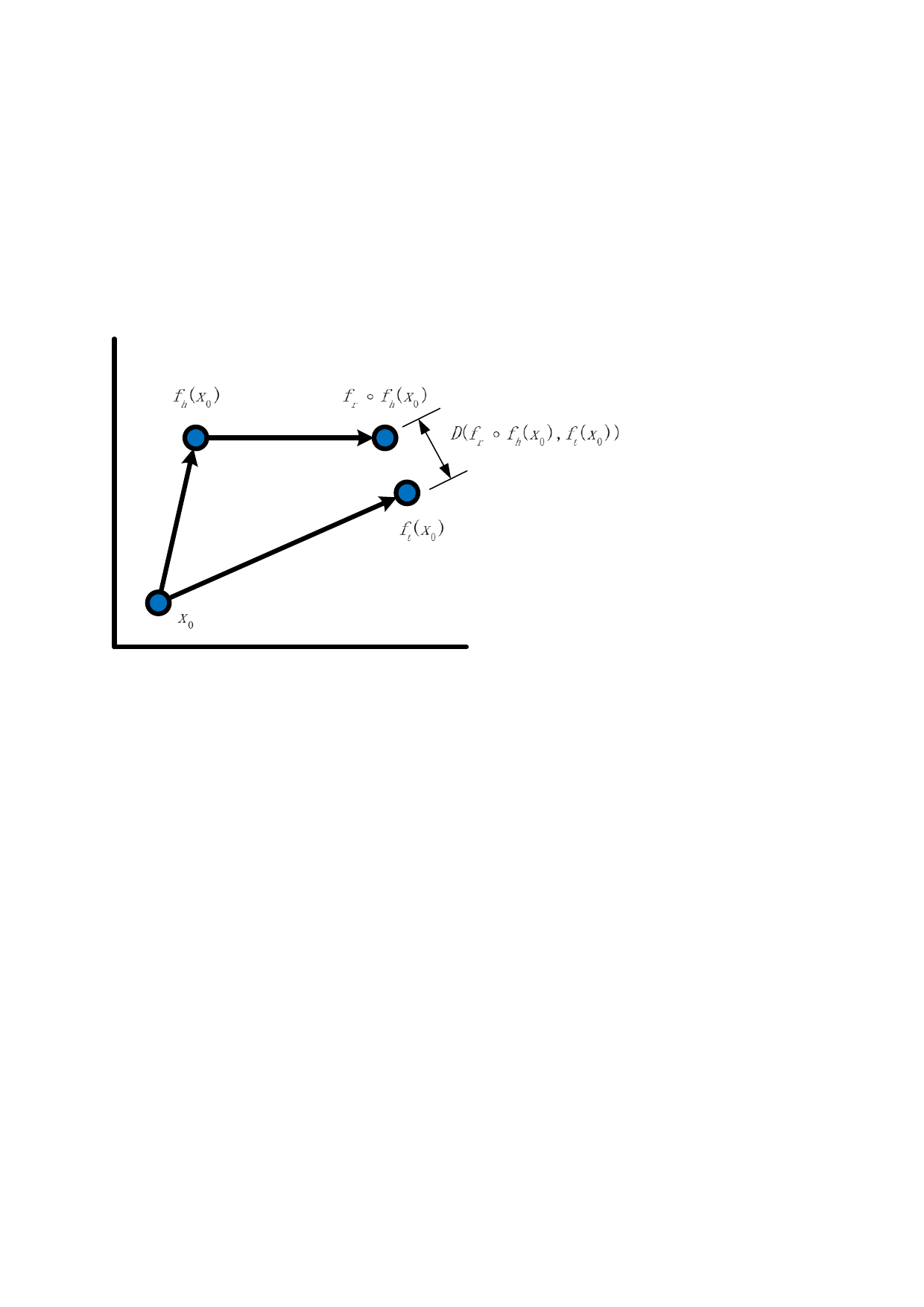}}
\caption{An illustration of Eq.(\ref{equation:2}).}
\label{figure:2}
\end{center}
\end{figure}
We have proposed a unify representation of TransE and DistMult, Eq.(\ref{equation:2}). Eq.(\ref{equation:2}) can be illustrated by Figure \ref{figure:2}.  Eq.(\ref{equation:2}) is composed of three parts, the initial object $\bm{x}_{0}\in X$, the invertible functions $(f_{\bm{h}}, f_{\bm{r}},f_{\bm{t}})$ on a set $X$ and the similarity metric $D(\cdot,\cdot)$. In addtion to TransE and DistMult, we show other models that can be represented as Eq.(\ref{equation:2}), RotatE \citep{sun2018rotate}, TorusE \cite{ebisu2018toruse}, RESCAL \cite{nickel2011three}, ComplEx \citep{toutanova2015representing}. We summarize in Table \ref{table:3}.

The scoring function of RotatE is defined as:
$$f(\bm{h},\bm{r},\bm{t})=-\|\bm{h}\odot\bm{r}-\bm{t}\|_2^2,\bm{h},\bm{r},\bm{t}\in \mathbb{C}^n,|\bm{r}|=1$$
The scoring function of TorusE is defined as:
$$f(\bm{h},\bm{r},\bm{t})=-\|[\bm{h}]+[\bm{r}]-[\bm{t}]\|_2^2,[\bm{h}],[\bm{r}],[\bm{t}]\in \mathbb{T}^n$$
The scoring function of RESCAL is defined as:
$$f(\bm{h},\bm{R},\bm{t})=\bm{h}^{T}\bm{R}\bm{t},\bm{h},\bm{t}\in \mathbb{R}^n,\bm{R}\in \mathbb{R}^{n\times n}$$
The scoring function of ComplEx is defined as:
$$f(\bm{h},\bm{r},\bm{t})=\text{Re}(\langle \bm{h}, \bm{r}, \bm{t} \rangle),\bm{h},\bm{r},\bm{t}\in \mathbb{C}^n$$

\citet{balavzevic2019tucker} show that DistMult, ComplEx and QuatE are special cases of TuckER. The scoring function of TuckER is defined as:

\begin{align*}
    f(\bm{h},\bm{r},\bm{t})= \sum_{i=1}^{n}\sum_{j=1}^{n}\sum_{k=1}^{n}\bm{W}_{ijk}\bm{h}_{i}\bm{r}_{j}\bm{t}_{k}
\end{align*}
where $\bm{W}\in \mathbb{R}^{n\times n\times n}$ is the weight tensor. We can similarly generalize Eq.(\ref{equation:2}) to the form
\begin{align*}
  f(\bm{h}, \bm{r}, \bm{t})= \sum_{i=1}^{n}\sum_{j=1}^{n}\sum_{k=1}^{n}\bm{W}_{ijk}D(f_{\bm{r}_{j}}\circ f_{\bm{h}_{i}}(\bm{x_{0}}), f_{\bm{t}_{k}}(\bm{x_{0}}))
\end{align*}

\begin{table}[t]
\caption{Examples of the unify representation, Eq.(\ref{equation:2})}
\label{table:3}
\begin{center}
\begin{small}
\begin{tabular}{llllll}
\toprule
Model &$\bm{x}_0$ &$f_{\bm{h}}(\bm{x})$ &$f_{\bm{r}}(\bm{x})$ &$f_{\bm{t}}(\bm{x})$ &$D(\cdot,\cdot)$\\
\midrule
TransE    &$\bm{0}\in\mathbb{R}^n$ &$\bm{h}+\bm{x}$ &$\bm{r}+\bm{x}$ &$\bm{t}+\bm{x}$ &$-\|\cdot-\cdot\|$\\
RotatE    &$\bm{1}\in \mathbb{C}^n$ &$\bm{h}\odot\bm{x}$ &$\bm{r}\odot\bm{x}$ &$\bm{t}\odot\bm{x}$ &$-\|\cdot-\cdot\|$\\
TorusE &$[\bm{0}]\in\mathbb{T}^n$ &$[\bm{h}]+[\bm{x}]$ &$[\bm{r}]+[\bm{x}]$ &$[\bm{t}]+[\bm{x}]$ &$-\|\cdot-\cdot\|$\\
DistMult &$\bm{1}\in\mathbb{R}^n$ &$\bm{h}\odot\bm{x}$ &$\bm{r}\odot\bm{x}$ &$\bm{t}\odot\bm{x}$ &$\langle \cdot,\cdot \rangle$\\
RESCAL &$\bm{1}\in\mathbb{R}^n$ &$\bm{h}\odot\bm{x}$ &$\bm{r}^{T}\bm{x}$ &$\bm{t}\odot\bm{x}$ &$\langle \cdot,\cdot \rangle$\\
ComplEx  &$\bm{1}\in \mathbb{C}^n$ &$\bm{h}\odot\bm{x}$ &$\bm{r}\odot\bm{x}$ &$\bm{t}^{*}\odot\bm{x}$ &\text{Re}($\langle \cdot,\cdot \rangle$)\\
NFE-1 &$\mathcal{N}(\bm{0},\bm{I})$ &$\bm{h}_{\sigma}\odot\bm{x}+\bm{h}_{\mu}$ &$\bm{r}_{\sigma}\odot\bm{x}+\bm{r}_{\mu}$ &$\bm{t}_{\sigma}\odot\bm{x}+\bm{t}_{\mu}$ &Eq.(\ref{equation:10})\\
NFE-2 &$\mathcal{N}(\bm{0},\bm{I})$ &$
   \bm{h}_{\sigma_{1}}\odot\bm{x}+\bm{h}_{\mu}, \bm{x}\leq \bm{0}$ & $\bm{r}_{\sigma}\odot\bm{x}+\bm{r}_{\mu}$
   &$\bm{t}_{\sigma_{1}}\odot\bm{x}+\bm{t}_{\mu}, \bm{x}\leq \bm{0}$ &Eq.(\ref{equation:11})\\
   & &$\bm{h}_{\sigma_{2}}\odot\bm{x}+\bm{h}_{\mu}, \bm{x}> \bm{0}$
   & &$\bm{t}_{\sigma_{2}}\odot\bm{x}+\bm{t}_{\mu}, \bm{x}> \bm{0}$ &\\
\bottomrule
\end{tabular}
\end{small}
\end{center}
\end{table}

\section{Proofs}
\label{appendix:3}
\begin{proposition}
For two PDFs, $p(\bm{x})$ and $q(\bm{y})$, $W(p,q)=\sum_{i=1}^{n}W(p_i,q_i)$ iff $p(\bm{x})$ and $q(\bm{y})$ share the same copula, where $p_i$ and $q_i$ are the marginal distributions of $\bm{x}_{i}$ and $\bm{y}_{i}$, respectively \citep{cuestaalbertos1993optimal}.
\end{proposition}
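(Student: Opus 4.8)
The plan is to reduce the $n$-dimensional optimal transport problem to a statement about couplings that respect the marginal CDFs, and then invoke the characterization of optimal transport in one dimension. First I would recall the structure: any joint distribution $\gamma$ of $(\bm{x},\bm{y})$ with marginals $p$ and $q$ induces, by looking at each coordinate pair $(\bm{x}_i,\bm{y}_i)$, a coupling $\gamma_i$ of the one-dimensional marginals $p_i$ and $q_i$. Since the cost $\|\bm{x}-\bm{y}\|_2^2=\sum_{i=1}^n(\bm{x}_i-\bm{y}_i)^2$ is separable, for any admissible $\gamma$ we get $\iint\|\bm{x}-\bm{y}\|_2^2\,\gamma=\sum_{i=1}^n\iint(\bm{x}_i-\bm{y}_i)^2\,\gamma_i\ge\sum_{i=1}^n W(p_i,q_i)$, because each $\gamma_i$ is an admissible coupling for the $i$-th one-dimensional problem. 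Taking the infimum over $\gamma$ yields the inequality $W(p,q)\ge\sum_i W(p_i,q_i)$ with no assumptions at all.

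The content is therefore the equality case. The key fact about one-dimensional quadratic optimal transport is that the unique optimal coupling is the \emph{monotone} (comonotone) one: $\bm{y}_i=G_i^{-1}(F_i(\bm{x}_i))$, i.e.\ the coupling obtained by pushing a single uniform $U$ through $(F_i^{-1}(U),G_i^{-1}(U))$. So to achieve equality in the display above, we need a single joint distribution $\gamma$ whose $i$-th coordinate marginal $\gamma_i$ is \emph{simultaneously} this monotone coupling for every $i$. Concretely, take $U\sim U[0,1]^?$ — more precisely a single scalar $U\sim U[0,1]$ — and set $\bm{x}=(F_1^{-1}(U),\dots,F_n^{-1}(U))$, $\bm{y}=(G_1^{-1}(U),\dots,G_n^{-1}(U))$; the resulting $\gamma$ is admissible iff $\bm{x}\sim p$ and $\bm{y}\sim q$, which is exactly the requirement that $p$ and $q$ have the same copula (the comonotone copula is not forced; rather, $p$'s copula must equal $q$'s copula, and then $\gamma$ built from that common copula $C$ via $\bm{x}_i=F_i^{-1}(U_i)$, $\bm{y}_i=G_i^{-1}(U_i)$ with $(U_1,\dots,U_n)\sim C$ works). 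I would then argue: if $p,q$ share copula $C$, this $\gamma$ is admissible and attains $\sum_i W(p_i,q_i)$, giving $W(p,q)\le\sum_i W(p_i,q_i)$, hence equality. Conversely, if equality holds, any optimal $\gamma$ must have each $\gamma_i$ equal to the monotone coupling (by uniqueness of the 1-D optimizer, at least on the continuous part), which forces both $\bm{x}_i=F_i^{-1}(U)$ and $\bm{y}_i=G_i^{-1}(U)$ for a common $U$, and unwinding this shows $p$ and $q$ have the same copula.

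The main obstacle is the converse direction and the uniqueness/regularity subtleties behind it: establishing that equality forces each coordinate coupling to be the monotone one requires the uniqueness of one-dimensional quadratic optimal plans (which is standard for absolutely continuous marginals but needs care for atoms), and then one must check that gluing these coordinate-wise monotone couplings back together actually pins down the full copula of $\gamma$ and transfers it to both $p$ and $q$ consistently. I would handle this by working with the quantile (inverse-CDF) representation throughout, citing \citet{cuestaalbertos1993optimal} for the precise optimal-transport-with-fixed-copula result, and restricting to the absolutely continuous setting (which is all that is needed for the PDFs appearing in NFE) so that inverse CDFs are well defined and the monotone coupling is genuinely unique. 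The forward direction (constructing $\gamma$ from the shared copula) and the universal inequality are routine; everything delicate is concentrated in the "only if."
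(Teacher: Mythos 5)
Your proposal is essentially correct, but it takes a different route from the paper in the sense that the paper does not prove this proposition at all: it simply defers to Theorem~2.9 of \citet{cuestaalbertos1993optimal} and only adds a definition of copulas to motivate Corollary~1. Your sketch is the standard self-contained argument behind that theorem: the universal lower bound $W(p,q)\ge\sum_i W(p_i,q_i)$ from separability of the quadratic cost plus coordinate-wise marginalization of any coupling, the ``if'' direction by building an admissible plan from the shared copula $C$ via $\bm{x}_i=F_i^{-1}(U_i)$, $\bm{y}_i=G_i^{-1}(U_i)$ with $(U_1,\dots,U_n)\sim C$ so that every coordinate pair is the comonotone (hence optimal) one-dimensional coupling, and the ``only if'' direction by uniqueness of the one-dimensional monotone plan. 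What your approach buys is transparency about exactly which hypotheses matter (absolutely continuous marginals for uniqueness of the 1-D optimizer, existence of an optimal plan, which you should note follows from finite second moments and lower semicontinuity of the cost), whereas the paper's citation buys brevity and covers the general case including atoms, which the reference handles with care you would otherwise have to reproduce. Two small points to tighten in your converse: the equality case forces, for each coordinate separately, $F_i(\bm{x}_i)=G_i(\bm{y}_i)$ almost surely under the optimal plan --- there is a per-coordinate common uniform $U_i$, not a single scalar $U$ shared across all coordinates as your phrasing suggests --- and the conclusion then follows because the copula of $p$ is the law of $(F_1(\bm{x}_1),\dots,F_n(\bm{x}_n))$ and the copula of $q$ is the law of $(G_1(\bm{y}_1),\dots,G_n(\bm{y}_n))$, which coincide as laws of an almost surely equal random vector.
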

See Theorem 2.9 of \citep{cuestaalbertos1993optimal} for the proof. A copula is a multivariate cumulative distribution function for which the marginal probability distribution of each variable is uniform on the interval $[0.1]$. Copulas are used to describe/model the dependence (inter-correlation) between random variables. The copula of $(X_1,X_2,\cdots,X_n)$ is defined as the joint cumulative distribution function of $(U_1,U_2,\cdots,U_n)$:
$C(U_1,U_2,\cdots,U_n)=\text{Pr}(U_1\leq u_1,U_2\leq u_2,\cdots,U_n\leq u_n)$. If $X_1,X_2,\cdots,X_n$ are independent, then $C(U_1,U_2,\cdots,U_n)=\prod_{i=1}^{n}U_i$. Thus, we have the following corollary.
\begin{corollary}
For two PDFs, $p(\bm{x})$ and $q(\bm{y})$, $W(p,q)=\sum_{i=1}^{n}W(p(\bm{x}_i),q(\bm{y}_i))$ if $p(\bm{x})=\prod_{i=1}^{n}p(\bm{x}_i)$ and $q(\bm{y})=\prod_{i=1}^{n}q(\bm{y}_i)$.
\end{corollary}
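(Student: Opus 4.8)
The plan is to derive the Corollary directly from Proposition~\ref{proposition:1} by showing that the product (independence) structure is a special case of the ``same copula'' hypothesis. First I would recall Sklar's theorem: any joint PDF $p(\bm{x})$ decomposes via a copula $C_p$ together with its marginals $p(\bm{x}_i)$, and similarly $q(\bm{y})$ decomposes via $C_q$ and $q(\bm{y}_i)$. The key observation is that when $p(\bm{x}) = \prod_{i=1}^{n} p(\bm{x}_i)$, the coordinates $\bm{x}_i$ are mutually independent, so the cumulative distribution function factorizes, $F_{\bm{x}}(\bm{x}) = \prod_{i=1}^n F_{\bm{x}_i}(\bm{x}_i)$, which by definition means the copula of $p$ is the independence (product) copula $C(u_1,\dots,u_n) = \prod_{i=1}^n u_i$. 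The same argument applied to $q$ shows $C_q$ is also the independence copula. Hence $C_p = C_q$, i.e., $p(\bm{x})$ and $q(\bm{y})$ share the same copula.

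With that established, I would simply invoke Proposition~\ref{proposition:1}: since the two PDFs share a common copula, $W(p,q) = \sum_{i=1}^n W(p_i, q_i)$, where $p_i$ and $q_i$ are the marginals of $\bm{x}_i$ and $\bm{y}_i$. Under the independence assumption these marginals are exactly the factors $p(\bm{x}_i)$ and $q(\bm{y}_i)$ appearing in the statement, so the conclusion $W(p,q) = \sum_{i=1}^n W(p(\bm{x}_i), q(\bm{y}_i))$ follows immediately. This is really a one-line corollary once the copula identification is made.

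The only mild subtlety — and the place I would be most careful — is verifying that independence is genuinely equivalent to having the product copula, and that the notion of ``sharing the same copula'' in Proposition~\ref{proposition:1} is being applied correctly to two distributions whose marginals differ (the copula is a statement purely about the dependence structure, not the marginals, so two distributions with completely different marginals can still share the product copula). I do not anticipate a real obstacle here; it is a routine consequence of Sklar's theorem, and the computation is essentially the factorization of the CDF. A remark explaining the independence/product-copula correspondence — as the surrounding text already does — suffices in place of a formal argument.
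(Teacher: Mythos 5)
Your argument is correct and matches the paper's own justification: the appendix likewise notes that independence forces the product copula $C(U_1,\dots,U_n)=\prod_{i=1}^{n}U_i$ for both $p$ and $q$, so they share the same copula and Proposition~\ref{proposition:1} yields the sum decomposition. Your additional care about the copula being a statement purely about dependence structure (independent of the marginals) is sound and consistent with how the paper applies the proposition.
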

\begin{proposition}
If $\bm{x}_0\sim U[-\sqrt{3},\sqrt{3}]^n$ or $\bm{x}_0\sim \mathcal{N}(\bm{0},\bm{I})$, the invertible functions are Eq.(\ref{equation:5}) and similarity metric is Eq.(\ref{equation:9}), then the scoring function is
\begin{align*}
f(\bm{h}, \bm{r}, \bm{t})=-\|\bm{r}_{\sigma}\odot\bm{h}_{\mu}+\bm{r}_{\mu}-\bm{t}_{\mu}\|_2^2-\||\bm{r}_{\sigma}\odot\bm{h}_{\sigma}|-|\bm{t}_{\sigma}|\|_2^2
\end{align*}
\end{proposition}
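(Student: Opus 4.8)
The plan is to reduce the $n$-dimensional Wasserstein distance to a sum of one-dimensional Wasserstein distances via Corollary \ref{corollary:1}, and then evaluate each one-dimensional term in closed form using the quantile formula stated just before Proposition \ref{proposition:1}. First I would compute the two normalizing flows from Eq.(\ref{equation:5}): $f_{\bm r}\circ f_{\bm h}(\bm x_0)$ is the coordinate-wise affine image $(\bm r_\sigma\odot\bm h_\sigma)\odot\bm x_0 + (\bm r_\sigma\odot\bm h_\mu+\bm r_\mu)$ of $\bm x_0$, and $f_{\bm t}(\bm x_0)=\bm t_\sigma\odot\bm x_0+\bm t_\mu$. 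In both stated cases $\bm x_0$ has i.i.d.\ coordinates, and since each map acts coordinate-wise, the PDFs $p_{\bm{rh}}$ and $q_{\bm t}$ are products of their marginals; hence Corollary \ref{corollary:1} applies and $W(p_{\bm{rh}},q_{\bm t})=\sum_{i=1}^n W(p_{\bm{rh},i},q_{\bm t,i})$.

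For the $i$-th term, write $a_i:=(\bm r_\sigma\odot\bm h_\sigma)_i$, $b_i:=(\bm r_\sigma\odot\bm h_\mu+\bm r_\mu)_i$, $c_i:=(\bm t_\sigma)_i$, $d_i:=(\bm t_\mu)_i$, and let $\Phi^{-1}$ be the quantile function of the base distribution ($U[-\sqrt3,\sqrt3]$ or $\mathcal N(0,1)$). The key observation is that both base distributions are symmetric about $0$ with strictly increasing continuous CDF, so $\Phi^{-1}(1-z)=-\Phi^{-1}(z)$; it follows that the quantile function of $a_iX+b_i$ equals $b_i+|a_i|\,\Phi^{-1}(z)$ irrespective of the sign of $a_i$ (the entries of $\bm r_\sigma\odot\bm h_\sigma$ are nonzero by the constraint after Eq.(\ref{equation:4})), and likewise for $c_iX+d_i$. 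Substituting into the quantile formula gives
\begin{align*}
W(p_{\bm{rh},i},q_{\bm t,i})&=\int_0^1\bigl((b_i-d_i)+(|a_i|-|c_i|)\Phi^{-1}(z)\bigr)^2\,\mathrm{d}z\\
&=(b_i-d_i)^2+(|a_i|-|c_i|)^2,
\end{align*}
where I used $\int_0^1\Phi^{-1}(z)\,\mathrm{d}z=\mathbb{E}[X]=0$ and $\int_0^1\Phi^{-1}(z)^2\,\mathrm{d}z=\mathbb{E}[X^2]=1$, both holding because the base distributions are centered with unit variance. Summing over $i$ and negating, and recognizing $b_i-d_i$ as the $i$-th entry of $\bm r_\sigma\odot\bm h_\mu+\bm r_\mu-\bm t_\mu$ and $|a_i|-|c_i|$ as the $i$-th entry of $|\bm r_\sigma\odot\bm h_\sigma|-|\bm t_\sigma|$, yields Eq.(\ref{equation:10}).

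The moment integrals and the bookkeeping of the composition $f_{\bm r}\circ f_{\bm h}$ (already recorded in Eq.(\ref{equation:5})) are routine, so I would not expand them. The main obstacle is the sign handling in the quantile functions: one must check that a negative scaling coefficient reflects the quantile function and that the symmetry identity $\Phi^{-1}(1-z)=-\Phi^{-1}(z)$ is exactly what produces the absolute values in the statement while simultaneously killing the cross term, leaving the clean $(|a_i|-|c_i|)^2$. A secondary point to verify explicitly is the hypothesis of Corollary \ref{corollary:1}, namely that a coordinate-wise affine image of a product distribution is again a product distribution; this is immediate but should be stated.
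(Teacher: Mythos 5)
Your proof is correct, and it shares the paper's skeleton---reduce the $n$-dimensional Wasserstein distance to a sum of one-dimensional ones via Corollary~\ref{corollary:1} (after checking the product structure of $p_{\bm{rh}}$ and $q_{\bm t}$) and evaluate each one-dimensional term with the quantile formula---but the way you evaluate the one-dimensional terms is genuinely different and more unified. The paper treats the two base distributions separately: in the uniform case it expands the quadratic in $z$ and integrates the polynomial directly, and in the Gaussian case it first derives the antiderivatives of $\erf^{-1}(x)$ and $\erf^{-1}(x)^2$ and then integrates the quantile difference term by term, in each case arriving at $(\mu_1-\mu_2)^2+(\sigma_1-\sigma_2)^2$ before assembling the scoring function. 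You instead exploit only three facts about the base law: symmetry about $0$ (so the quantile of $a_iX+b_i$ is $b_i+|a_i|\Phi^{-1}(z)$ regardless of the sign of $a_i$, which is exactly where the absolute values in Eq.(\ref{equation:10}) come from), zero mean (which kills the cross term via $\int_0^1\Phi^{-1}(z)\,\mathrm{d}z=\mathbb{E}[X]=0$), and unit variance (which gives $\int_0^1\Phi^{-1}(z)^2\,\mathrm{d}z=1$), so a single computation covers both hypotheses---indeed any centered, unit-variance, symmetric base. Your route also makes explicit the sign bookkeeping that the paper handles only implicitly when it asserts $f_{\bm r}\circ f_{\bm h}(\bm x_0)\sim\mathcal N(\bm r_\sigma\odot\bm h_\mu+\bm r_\mu,|\bm r_\sigma\odot\bm h_\sigma|)$. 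What the paper's heavier explicit integrations buy is reusable machinery: the $\erf^{-1}$ antiderivatives and piecewise quantile computations are exactly what is needed again in the proofs of the later propositions for the piecewise-linear flows of Eq.(\ref{equation:7}), where the two halves carry different scales and half-range integrals of $\Phi^{-1}$ and $\Phi^{-1}{}^2$ appear, so your full-moment shortcut would no longer suffice without being upgraded to half-moments. As you note yourself, do state explicitly that a coordinate-wise affine image of a product distribution is again a product distribution, so that the hypothesis of Corollary~\ref{corollary:1} is verified.
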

\begin{proof}
We first prove the case of $\bm{x}_0\sim U[-\sqrt{3},\sqrt{3}]^n$.
We show that for two 1-dimensional normal distribution $p(x)=U[a_{1},b_{1}]=U[\mu_{1}-\sqrt{3}\sigma_{1},\mu_{1}+\sqrt{3}\sigma_{1}]$ and $q(x)=U[a_{2},b_{2}]=U[\mu_{2}-\sqrt{3}\sigma_{2},\mu_{2}+\sqrt{3}\sigma_{2}]$, the Wasserstein distance is equal to $W(p,q)=\|\mu_{1}-\mu_{2}\|^2+\|\sigma_{1}-\sigma_{2}\|^2$.
1-dimensional Wasserstein distance is equal to
$$W(p,q)=\int_{0}^{1}|F^{-1}(z)-G^{-1}(z)|\mathrm{d}z$$
where $F^{-1}(z)$ and $G^{-1}(z)$ are the inverse cumulative distribution function of $p(\bm{x})$ and $q(\bm{x})$, $i.e.$, $F^{-1}(z)=a_{1}+(b_{1}-a_{1})z$ and $G^{-1}(z)=a_{2}+(b_{2}-a_{2})z$. Thus we have that
\begin{align*}
    W(p,q)=&\int_{0}^{1}|F^{-1}(z)-G^{-1}(z)|^2\mathrm{d}z\\
    =&\int_{0}^{1}((a_{1}-a_{2})+z(b_{1}-b_{2}-a_{1}+a_{2}))^2\mathrm{d}z\\
    =&\int_{0}^{1}(a_{1}-a_{2})^2\mathrm{d}z+\int_{0}^{1}z^2(b_{1}-b_{2}-a_{1}+a_{2})^2\mathrm{d}z\\
    +&\int_{0}^{1}2(a_{1}-a_{2})(b_{1}-b_{2}-a_{1}+a_{2})z\mathrm{d}z\\
    =&(a_{1}-a_{2})^2+\frac{1}{3}(b_{1}-b_{2}-a_{1}+a_{2})^2
    +(a_{1}-a_{2})(b_{1}-b_{2}-a_{1}+a_{2})\\
    =&\frac{1}{3}((a_{1}-a_{2})^2+(b_{1}-b_{2})^2+(a_{1}-a_{2})(b_{1}-b_{2}))\\
    =&(\mu_{1}-\mu_{2})^2+(\sigma_{1}-\sigma_{2})^2
\end{align*}
For two uniform distribution $U[\bm{\mu}_1-\sqrt{3}\bm{\sigma}_1,\bm{\mu}_1+\sqrt{3}\bm{\sigma}_1]$ and $U[\bm{\mu}_2-\sqrt{3}\bm{\sigma}_2,\bm{\mu}_2+\sqrt{3}\bm{\sigma}_2]$ which satisfy $p(\bm{x})=\prod_{i=1}^{n}p(\bm{x}_i)$ and $q(\bm{x})=\prod_{i=1}^{n}q(\bm{x}_i)$, by Corollary \ref{corollary:1}, the Wasserstein distance is equal to $$W(p(\bm{x}),q(\bm{x}))=\sum_{i=1}^{n}W(p(\bm{x}_i),q(\bm{x}_i))=\|\bm{\mu}_{1}-\bm{\mu}_{2}\|^2+\|\bm{\sigma_{1}}-\bm{\sigma_{2}}\|^2$$
This result is same the result in \cite{dowson1982frechet}. For every triplet $(\bm{h}, \bm{r}, \bm{t})$, we have that
\begin{align*}
f_{\bm{h}}(\bm{x})=\bm{h}_{\sigma}\odot\bm{x}+\bm{h}_{\mu},f_{\bm{r}}(\bm{x})=\bm{r}_{\sigma}\odot\bm{x}+\bm{r}_{\mu},f_{\bm{t}}(\bm{x})=\bm{t}_{\sigma}\odot\bm{x}+\bm{t}_{\mu}\\
f_{\bm{r}}\circ f_{\bm{h}}(\bm{x})=\bm{r}_{\sigma}\odot\bm{h}_{\sigma}\odot\bm{x}+\bm{r}_{\sigma}\odot\bm{h}_{\mu}+\bm{r}_{\mu},f_{\bm{t}}(\bm{x})=\bm{t}_{\sigma}\odot\bm{x}+\bm{t}_{\mu}
\end{align*}
Since $\bm{x}_0\sim U[-\sqrt{3},\sqrt{3}]^n$, by Eq.(\ref{equation:1}), we have that $f_{\bm{r}}\circ f_{\bm{h}}(\bm{x}_{0})\sim U[\bm{r}_{\sigma}\odot\bm{h}_{\mu}+\bm{r}_{\mu}-\sqrt{3}|\bm{r}_{\sigma}\odot\bm{h}_{\sigma}|,\bm{r}_{\sigma}\odot\bm{h}_{\mu}+\bm{r}_{\mu}+\sqrt{3}|\bm{r}_{\sigma}\odot\bm{h}_{\sigma}|],f_{\bm{t}}(\bm{x}_{0})\sim U[\bm{t}_{\mu}-\sqrt{3}|\bm{t}_{\sigma_{1}}|,\bm{t}_{\mu}+\sqrt{3}|\bm{t}_{\sigma_{1}}|]$. Then the scoring function is equal to
$$f(\bm{h}, \bm{r}, \bm{t})=-W(f_{\bm{r}}\circ f_{\bm{h}}(\bm{x_{0}}), f_{\bm{t}}(\bm{x_{0}}))=-\|\bm{r}_{\sigma}\odot\bm{h}_{\mu}+\bm{r}_{\mu}-\bm{t}_{\mu}\|_2^2-\||\bm{r}_{\sigma}\odot\bm{h}_{\sigma}|-|\bm{t}_{\sigma}|\|_2^2$$
\end{proof}

We then prove the case of $\bm{x}_0\sim \mathcal{N}(\bm{0},\bm{I})$. We first compute the indefinite integral of $\int\erf^{-1}(x)\mathrm{d}x$ and $\int\erf^{-1}(x)^2\mathrm{d}x$:
\begin{align*}
\int\erf^{-1}(x)\mathrm{d}x=&\int\frac{2}{\sqrt{\pi}}e^{-\erf^{-1}(x)^2}\erf^{-1}(x)\frac{\sqrt{\pi}}{2}e^{\erf^{-1}(x)^2}\mathrm{d}x\\
=&\int\frac{2}{\sqrt{\pi}}e^{-\erf^{-1}(x)^2}\erf^{-1}(x)\mathrm{d}\erf^{-1}(x)\\
=&\int-\frac{1}{\sqrt{\pi}}\mathrm{d}e^{-\erf^{-1}(x)^2}\\
=&-\frac{1}{\sqrt{\pi}}e^{-\erf^{-1}(x)^2}
\end{align*}
\begin{align*}
\int\erf^{-1}(x)^2\mathrm{d}x=&\int\frac{1}{\sqrt{\pi}}\erf^{-1}(x)e^{-\erf^{-1}(x)^2}2\erf^{-1}(x)\frac{\sqrt{\pi}}{2}e^{\erf^{-1}(x)^2}\mathrm{d}x\\
=&\int-\frac{1}{\sqrt{\pi}}\erf^{-1}(x)\mathrm{d}e^{-\erf^{-1}(x)^2}\\
=&-\frac{1}{\sqrt{\pi}}\erf^{-1}(x)e^{-\erf^{-1}(x)^2}+\int\frac{1}{\sqrt{\pi}}e^{-\erf^{-1}(x)^2}\mathrm{d}\erf^{-1}(x)\\
=&-\frac{1}{\sqrt{\pi}}\erf^{-1}(x)e^{-\erf^{-1}(x)^2}+\int\frac{1}{\sqrt{\pi}}e^{-\erf^{-1}(x)^2}\frac{\sqrt{\pi}}{2}e^{\erf^{-1}(x)^2}\mathrm{d}x\\
=&\frac{x}{2}-\frac{1}{\sqrt{\pi}}\erf^{-1}(x)e^{-\erf^{-1}(x)^2}
\end{align*}
Then we show that for two 1-dimensional normal distribution $p(x)=\mathcal{N}(\mu_{1},\sigma_{1})$ and $q(x)=\mathcal{N}(\mu_{2},\sigma_{2})$, the Wasserstein distance is equal to $W(p,q)=(\mu_{1}-\mu_{2})^2+(\sigma_{1}-\sigma_{2})^2$.
The 1-dimensional Wasserstein distance is equal to
$$W(p,q)=\int_{0}^{1}|F^{-1}(z)-G^{-1}(z)|\mathrm{d}z$$
where $F^{-1}(z)$ and $G^{-1}(z)$ are the inverse cumulative distribution function of $p(x)$ and $q(x)$, $i.e.$, $F^{-1}(z)=\mu_{1}+\sqrt{2}\sigma_{1}\erf^{-1}(2z-1)$ and $G^{-1}(z)=\mu_{2}+\sqrt{2}\sigma_{2}\erf^{-1}(2z-1)$. Therefore
\begin{align*}
    W(p,q)=&\int_{0}^{1}|F^{-1}(z)-G^{-1}(z)|^2\mathrm{d}z\\
    =&\int_{0}^{1}((\mu_{1}-\mu_{2})+\sqrt{2}\erf^{-1}(2z-1)(\sigma_{1}-\sigma_{2}))^2\mathrm{d}z\\
    =&\int_{0}^{1}(\mu_{1}-\mu_{2})^2\mathrm{d}z+\int_{0}^{1}2\erf^{-1}(2z-1)^2(\sigma_{1}-\sigma_{2})^2\mathrm{d}z\\
    +&\int_{0}^{1}2\sqrt{2}(\mu_{1}-\mu_{2})\erf^{-1}(2z-1)(\sigma_{1}-\sigma_{2})\mathrm{d}z\\
    =&(\mu_{1}-\mu_{2})^2+(\sigma_{1}-\sigma_{2})^2\int_{-1}^{1}\erf^{-1}(z)^2\mathrm{d}z
    +\sqrt{2}(\mu_{1}-\mu_{2})(\sigma_{1}-\sigma_{2})\int_{-1}^{1}\erf^{-1}(z)\mathrm{d}z\\
    =&(\mu_{1}-\mu_{2})^2+(\sigma_{1}-\sigma_{2})^2
\end{align*}
For two normal distribution $p(\bm{x})=\mathcal{N}(\bm{\mu}_{1},\bm{\sigma_{1}})$ and $q(\bm{x})=\mathcal{N}(\bm{\mu}_{2},\bm{\sigma_{2}})$ which satisfy $p(\bm{x})=\prod_{i=1}^{n}p(\bm{x}_i)$ and $q(\bm{x})=\prod_{i=1}^{n}q(\bm{x}_i)$, by Corollary \ref{corollary:1}, the Wasserstein distance is equal to $$W(p(\bm{x}),q(\bm{x}))=\sum_{i=1}^{n}W(p(\bm{x}_i),q(\bm{x}_i))=\|\bm{\mu}_{1}-\bm{\mu}_{2}\|^2+\|\bm{\sigma_{1}}-\bm{\sigma_{2}}\|^2$$
This result is same the result in \cite{dowson1982frechet}. For every triplet $(\bm{h}, \bm{r}, \bm{t})$, we have that
\begin{align*}
f_{\bm{h}}(\bm{x})=\bm{h}_{\sigma}\odot\bm{x}+\bm{h}_{\mu},f_{\bm{r}}(\bm{x})=\bm{r}_{\sigma}\odot\bm{x}+\bm{r}_{\mu},f_{\bm{t}}(\bm{x})=\bm{t}_{\sigma}\odot\bm{x}+\bm{t}_{\mu}\\
f_{\bm{r}}\circ f_{\bm{h}}(\bm{x})=\bm{r}_{\sigma}\odot\bm{h}_{\sigma}\odot\bm{x}+\bm{r}_{\sigma}\odot\bm{h}_{\mu}+\bm{r}_{\mu},f_{\bm{t}}(\bm{x})=\bm{t}_{\sigma}\odot\bm{x}+\bm{t}_{\mu}
\end{align*}
Since $\bm{x}_{0}\sim \mathcal{N}(\bm{0},\bm{I})$, by Eq.(\ref{equation:1}), we have that $f_{\bm{r}}\circ f_{\bm{h}}(\bm{x}_{0})\sim\mathcal{N}(\bm{r}_{\sigma}\odot\bm{h}_{\mu}+\bm{r}_{\mu},|\bm{r}_{\sigma}\odot\bm{h}_{\sigma}|),f_{\bm{t}}(\bm{x}_{0})\sim\mathcal{N}(\bm{t}_{\mu},|\bm{t}_{\sigma}|)$. Then the scoring function is equal to
$$f(\bm{h}, \bm{r}, \bm{t})=-W(f_{\bm{r}}\circ f_{\bm{h}}(\bm{x_{0}}), f_{\bm{t}}(\bm{x_{0}}))=-\|\bm{r}_{\sigma}\odot\bm{h}_{\mu}+\bm{r}_{\mu}-\bm{t}_{\mu}\|_2^2-\||\bm{r}_{\sigma}\odot\bm{h}_{\sigma}|-|\bm{t}_{\sigma}|\|_2^2$$

\begin{proposition}
If $\bm{x}_0\sim U[-\sqrt{3},\sqrt{3}]^n$, the invertible functions are  Eq.(\ref{equation:7}) and similarity metric is Eq.(\ref{equation:9}), then the scoring function is 
\begin{align*}
f(\bm{h}, \bm{r},\bm{t})=&-\|\bm{r}_{\sigma}\odot\bm{h}_{\mu}+\bm{r}_{\mu}-\bm{t}_{\mu}\|_2^2-\frac{1}{2}\||\bm{r}_{\sigma}\odot\bm{h}_{\sigma_{1}}|-|\bm{t}_{\sigma_{1}}|\|_2^2-\frac{1}{2}\||\bm{r}_{\sigma}\odot\bm{h}_{\sigma_{2}}|-|\bm{t}_{\sigma_{2}}|\|_2^2\notag\\
&-\sqrt{\frac{3}{4}}(\bm{r}_{\sigma}\odot\bm{h}_{\mu}+\bm{r}_{\mu}-\bm{t}_{\mu})^T(|\bm{r}_{\sigma}\odot\bm{h}_{\sigma_{2}}|-|\bm{r}_{\sigma}\odot\bm{h}_{\sigma_{1}}|+|\bm{t}_{\sigma_{1}}|-|\bm{t}_{\sigma_{2}}|)
\end{align*}
\end{proposition}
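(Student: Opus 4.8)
The plan is to reduce the $n$-dimensional Wasserstein distance to a sum of one-dimensional ones, as prepared by Corollary \ref{corollary:1}, and then evaluate each one-dimensional term explicitly, in the same spirit as the proof of Proposition \ref{proposition:2}. First, since the maps in Eq.(\ref{equation:7}) act coordinate-wise and $\bm x_0\sim U[-\sqrt3,\sqrt3]^n$ has independent coordinates, the push-forward laws $p_{\bm{rh}}(\bm x_0)$ and $q_{\bm t}(\bm x_0)$ are products of their one-dimensional marginals; Corollary \ref{corollary:1} then gives $W(p_{\bm{rh}},q_{\bm t})=\sum_{i=1}^n W(p_i,q_i)$, so it suffices to compute each one-dimensional term. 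Moreover, because $f_{\bm r}$ is affine, $f_{\bm r}\circ f_{\bm h}$ is again a two-piece affine map whose kink stays at $\bm 0$ (this is exactly the middle line of Eq.(\ref{equation:7})), so the breakpoint of $Y:=f_{\bm r}\circ f_{\bm h}(x_0)$, like that of $Z:=f_{\bm t}(x_0)$, occurs at $x_0=0$.

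Second, fix a coordinate and write $x:=x_0\sim U[-\sqrt3,\sqrt3]$. Then $Y$ is a mixture, with weights $\tfrac12,\tfrac12$, of the uniform law on the image of $[-\sqrt3,0]$ and the uniform law on the image of $[0,\sqrt3]$ under $f_{\bm r}\circ f_{\bm h}$; likewise for $Z$. Hence the quantile functions $F_Y^{-1},F_Z^{-1}$ are piecewise linear in $z$ with a single kink at $z=\tfrac12$, and $W(p_i,q_i)=\int_0^1 (F_Y^{-1}(z)-F_Z^{-1}(z))^2\,\mathrm dz=\int_0^{1/2}(\cdots)^2\,\mathrm dz+\int_{1/2}^1(\cdots)^2\,\mathrm dz$, each piece being a degree-two polynomial integral of precisely the kind already evaluated in the proof of Proposition \ref{proposition:2}. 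Equivalently: invertibility forces $f_{\bm h},f_{\bm r},f_{\bm t}$ to be strictly monotone and $x$ is symmetric about $0$, so one may take all slopes positive, making $Y$ and $Z$ both non-decreasing in $x$; then pairing $Y$ and $Z$ through the common $x$ is the optimal (comonotone) coupling and $W(p_i,q_i)=\mathbb E[(Y-Z)^2]$.

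Third, I would split $\mathbb E[(Y-Z)^2]=\tfrac12\mathbb E[(Y-Z)^2\mid x\le 0]+\tfrac12\mathbb E[(Y-Z)^2\mid x>0]$. On $\{x\le 0\}$ one has $x\sim U[-\sqrt3,0]$, so $\mathbb E[x]=-\tfrac{\sqrt3}{2}$ and $\mathbb E[x^2]=1$, and $Y-Z=(\bm r_{\sigma}\bm h_{\sigma_1}-\bm t_{\sigma_1})\,x+(\bm r_{\sigma}\bm h_{\mu}+\bm r_{\mu}-\bm t_{\mu})$ is affine in $x$; symmetrically on $\{x>0\}$ with $\bm\sigma_2$ and $\mathbb E[x]=+\tfrac{\sqrt3}{2}$. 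Expanding the squares and substituting these two moments, the $x^2$-terms contribute $\tfrac12\||\bm r_{\sigma}\odot\bm h_{\sigma_1}|-|\bm t_{\sigma_1}|\|_2^2+\tfrac12\||\bm r_{\sigma}\odot\bm h_{\sigma_2}|-|\bm t_{\sigma_2}|\|_2^2$, the constant terms contribute $\|\bm r_{\sigma}\odot\bm h_{\mu}+\bm r_{\mu}-\bm t_{\mu}\|_2^2$, and the two cross terms (carrying $-\tfrac{\sqrt3}{2}$ from the $\{x\le 0\}$ half and $+\tfrac{\sqrt3}{2}$ from the $\{x>0\}$ half) combine into the single term with coefficient $\sqrt{3/4}$ multiplying $(\bm r_{\sigma}\odot\bm h_{\mu}+\bm r_{\mu}-\bm t_{\mu})^{T}(|\bm r_{\sigma}\odot\bm h_{\sigma_2}|-|\bm r_{\sigma}\odot\bm h_{\sigma_1}|+|\bm t_{\sigma_1}|-|\bm t_{\sigma_2}|)$. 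Summing over coordinates and using $D=-W$ yields Eq.(\ref{equation:11}).

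The step I expect to be the main obstacle is the bookkeeping that produces the absolute values in the statement: one must rewrite each uniform piece in the canonical form $\text{mean}\pm\sqrt3\cdot|\text{scale}|$ while tracking which of $\bm\sigma_1,\bm\sigma_2$ lands on which side of the kink after any sign flip, and verify that $f_{\bm r}\circ f_{\bm h}$ keeps its kink at $x=0$ so that the half $\{x\le 0\}$ maps to exactly one affine piece of both $Y$ and $Z$. Once the slopes are arranged positive, the remainder is the same bounded-degree integration over $[0,\tfrac12]$ and $[\tfrac12,1]$ as in Proposition \ref{proposition:2}.
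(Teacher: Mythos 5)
Your proposal is correct and follows essentially the same route as the paper: decompose via Corollary \ref{corollary:1}, identify the push-forwards of $\bm{x}_0$ under Eq.(\ref{equation:7}) as two-piece uniform laws with breakpoint at the image of $0$, and evaluate each 1-dimensional Wasserstein term split at the median, your conditional-moment computation $\mathbb{E}[(Y-Z)^2\mid x\lessgtr 0]$ under the comonotone coupling being exactly equivalent to the paper's explicit integration of $(F^{-1}(z)-G^{-1}(z))^2$ over $[0,\tfrac12]$ and $[\tfrac12,1]$. The sign bookkeeping you flag as the main obstacle is resolved in the paper simply by restricting $\bm{r}_{\sigma}\odot\bm{h}_{\sigma_{1}}\geq 0$, $\bm{r}_{\sigma}\odot\bm{h}_{\sigma_{2}}\geq 0$, $\bm{t}_{\sigma_{1}}\geq 0$, $\bm{t}_{\sigma_{2}}\geq 0$, which is the same ``arrange the slopes positive'' step you invoke.
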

\begin{proof}
We first show that for two 1-dimensional normal distribution
\begin{align*}
   p(x)=
   \begin{cases}
   U[a_{1},b_{1}]=U[\mu_{1}-\sqrt{3}\sigma_{1},\mu_{1}] &\text{if } a_{1}\leq x\leq b_{1}\\
   U[b_{1},c_{1}]=U[\mu_{1},\mu_{1}+\sqrt{3}\sigma_{2}] &\text{if } b_{1}< x\leq c_{1}
   \end{cases},\\
   q(x)=
   \begin{cases}
   U[a_{2},b_{2}]=U[\mu_{2}-\sqrt{3}\sigma_{3},\mu_{2}] &\text{if } a_{2}\leq x\leq b_{2}\\
   U[b_{2},c_{2}]=U[\mu_{2},\mu_{2}+\sqrt{3}\sigma_{4}] &\text{if } b_{2}< x\leq c_{2}
   \end{cases}
\end{align*}
the Wasserstein distance is equal to $$W(p,q)=(\mu_{1}-\mu_{2})^2+\frac{1}{2}(\sigma_{1}-\sigma_{3})^2+\frac{1}{2}(\sigma_{1}-\sigma_{3})^2+\sqrt{\frac{3}{4}}(\mu_{1}-\mu_{2})(\sigma_{2}-\sigma_{1}+\sigma_{3}-\sigma_{4})$$ 1-dimensional Wasserstein distance is equal to
$$W(p,q)=\int_{0}^{1}|F^{-1}(z)-G^{-1}(z)|\mathrm{d}z$$
where $F^{-1}(z)$ and $G^{-1}(z)$ are the inverse cumulative distribution function of $p(\bm{x})$ and $q(\bm{x})$, $i.e.$,
\begin{align*}
   F^{-1}(z)=
   \begin{cases}
   a_{1}+(b_{1}-a_{1})2z & \text{if }0\leq z\leq \frac{1}{2}\\
   2b_{1}-c_{1}+(c_{1}-b_{1})2z & \text{if }\frac{1}{2}< z\leq 1
   \end{cases}\\
   G^{-1}(z)=
   \begin{cases}
   a_{2}+(b_{2}-a_{2})2z & \text{if }0\leq z\leq \frac{1}{2}\\
   2b_{2}-c_{2}+(c_{2}-b_{2})2z & \text{if }\frac{1}{2}< z\leq 1
   \end{cases}
\end{align*}
Thus we have that
\begin{align*}
    W(p,q)=&\int_{0}^{1}|F^{-1}(z)-G^{-1}(z)|^2\mathrm{d}z\\
    =&\int_{0}^{\frac{1}{2}}((a_{1}-a_{2})+2z(b_{1}-b_{2}-a_{1}+a_{2}))^2\mathrm{d}z\\+&\int_{\frac{1}{2}}^{1}((2b_{1}-2b_{2}-c_{1}+c_{2})+2z(c_{1}-c_{2}-b_{1}+b_{2}))^2\mathrm{d}z\\
    =&\int_{0}^{\frac{1}{2}}(a_{1}-a_{2})^2\mathrm{d}z+\int_{\frac{1}{2}}^{1}(2b_{1}-2b_{2}-c_{1}+c_{2})^2\mathrm{d}z\\
    +&\int_{0}^{\frac{1}{2}}4z^2(b_{1}-b_{2}-a_{1}+a_{2})^2\mathrm{d}z++\int_{\frac{1}{2}}^{1}4z^2(c_{1}-c_{2}-b_{1}+b_{2})^2\mathrm{d}z\\
    +&\int_{0}^{\frac{1}{2}}4z(a_{1}-a_{2})(b_{1}-b_{2}-a_{1}+a_{2})\mathrm{d}z\\
    +&\int_{\frac{1}{2}}^{1}4z(2b_{1}-2b_{2}-c_{1}+c_{2})(c_{1}-c_{2}-b_{1}+b_{2})\mathrm{d}z\\
    =&\frac{1}{2}(a_{1}-a_{2})^2+\frac{1}{2}(2b_{1}-2b_{2}-c_{1}+c_{2})^2\\
    +&\frac{1}{6}(b_{1}-b_{2}-a_{1}+a_{2})^2+\frac{7}{6}(c_{1}-c_{2}-b_{1}+b_{2})^2\\
    +&\frac{1}{2}(a_{1}-a_{2})(b_{1}-b_{2}-a_{1}+a_{2})+\frac{3}{2}(2b_{1}-2b_{2}-c_{1}+c_{2})(c_{1}-c_{2}-b_{1}+b_{2})\\
    =&\frac{1}{6}((a_{1}-a_{2})^2+2(b_{1}-b_{2})^2+(c_{1}-c_{2})^2+(b_{1}-b_{2})(a_{1}-a_{2}+c_{1}-c_{2}))\\
    =&(\mu_{1}-\mu_{2})^2+\frac{1}{2}(\sigma_{1}-\sigma_{3})^2+\frac{1}{2}(\sigma_{1}-\sigma_{3})^2+\sqrt{\frac{3}{4}}(\mu_{1}-\mu_{2})(\sigma_{2}-\sigma_{1}+\sigma_{3}-\sigma_{4})
\end{align*}
By Corollary \ref{corollary:1}, we have that for two distribution
\begin{align*}
   p(\bm{x})=
   \begin{cases}
   U[\bm{a}_{1},\bm{b}_{1}]=U[\bm{\mu}_{1}-\sqrt{3}\bm{\sigma}_{1},\bm{\mu}_{1}] & \text{if }\bm{a}_{1}\leq \bm{x}\leq \bm{b}_{1}\\
   U[\bm{b}_{1},\bm{c}_{1}]=U[\bm{\mu}_{1},\bm{\mu}_{1}+\sqrt{3}\bm{\sigma}_{2}] & \text{if }\bm{b}_{1}< \bm{x}\leq \bm{c}_{1}
   \end{cases},\\
   q(\bm{x})=
   \begin{cases}
   U[\bm{a}_{2},\bm{b}_{2}]=U[\bm{\mu}_{2}-\sqrt{3}\bm{\sigma}_{3},\bm{\mu}_{2}] & \text{if }\bm{a}_{2}\leq \bm{x}\leq \bm{b}_{2}\\
   U[\bm{b}_{2},\bm{c}_{2}]=U[\bm{\mu}_{2},\bm{\mu}_{2}+\sqrt{3}\bm{\sigma}_{4}] & \text{if }\bm{b}_{2}< \bm{x}\leq \bm{c}_{2}
   \end{cases}
\end{align*}
the Wasserstein distance is equal to
\begin{align*}
W(p(\bm{x}),q(\bm{x}))=\sum_{i=1}^{n}W(p(\bm{x}_i),q(\bm{x}_i))=&\|\bm{\mu}_{1}-\bm{\mu}_{2}\|^2+\frac{1}{2}\|\bm{\sigma_{1}}-\bm{\sigma_{3}}\|^2+\frac{1}{2}\|\bm{\sigma_{2}}-\bm{\sigma_{4}}\|^2\\+&\sqrt{\frac{3}{4}}(\bm{\mu}_{1}-\bm{\mu}_{2})^T(\bm{\sigma}_{2}-\bm{\sigma}_{1}+\bm{\sigma}_{3}-\bm{\sigma}_{4})
\end{align*}
For every triplet $(\bm{h}, \bm{r}, \bm{t})$, we have that
\begin{align*}
   f_{\bm{r}}\circ f_{\bm{h}}(\bm{x})=
   \begin{cases}
   \bm{r}_{\sigma}\odot\bm{h}_{\sigma_{1}}\odot\bm{x}+\bm{r}_{\sigma}\odot\bm{h}_{\mu}+\bm{r}_{\mu} & \text{if }\bm{x}\leq \bm{0}\\
   \bm{r}_{\sigma}\odot\bm{h}_{\sigma_{2}}\odot\bm{x}+\bm{r}_{\sigma}\odot\bm{h}_{\mu}+\bm{r}_{\mu} & \text{if }\bm{x}>\bm{0}
   \end{cases}, 
   f_{\bm{t}}(\bm{x})=
   \begin{cases}
   \bm{t}_{\sigma_{1}}\odot\bm{x}+\bm{t}_{\mu} & \text{if }\bm{x}\leq \bm{0}\\
   \bm{t}_{\sigma_{2}}\odot\bm{x}+\bm{t}_{\mu} & \text{if }\bm{x}>\bm{0}
   \end{cases}
\end{align*}
Since $\bm{x}_{0}\sim U[-\sqrt{3},\sqrt{3}]^n$, by Eq.(\ref{equation:1}), we have that
\begin{align*}
   &f_{\bm{r}}\circ f_{\bm{h}}(\bm{x}_{0})\sim
   \begin{cases}
   U[\bm{r}_{\sigma}\odot\bm{h}_{\mu}+\bm{r}_{\mu}-\sqrt{3}|\bm{r}_{\sigma}\odot\bm{h}_{\sigma_{1}}|,\bm{r}_{\sigma}\odot\bm{h}_{\mu}+\bm{r}_{\mu}] & \text{if }x\leq \bm{r}_{\sigma}\odot\bm{h}_{\mu}+\bm{r}_{\mu}\\
   U[\bm{r}_{\sigma}\odot\bm{h}_{\mu}+\bm{r}_{\mu},\bm{r}_{\sigma}\odot\bm{h}_{\mu}+\bm{r}_{\mu}+\sqrt{3}|\bm{r}_{\sigma}\odot\bm{h}_{\sigma_{1}}|] & \text{if }x> \bm{r}_{\sigma}\odot\bm{h}_{\mu}+\bm{r}_{\mu}
   \end{cases}\\
   &f_{\bm{t}}(\bm{x}_{0})\sim
   \begin{cases}
   U[\bm{t}_{\mu}-\sqrt{3}|\bm{t}_{\sigma_{1}}|,\bm{t}_{\mu}] &\text{if } x\leq \bm{t}_{\mu}\\
   U[\bm{t}_{\mu},\bm{t}_{\mu}+\sqrt{3}|\bm{t}_{\sigma_{1}}|] &\text{if } x>\bm{t}_{\mu}
   \end{cases}
\end{align*}
Here, we restrict $\bm{r}_{\sigma}\odot\bm{h}_{\sigma_{1}}\geq 0,\bm{r}_{\sigma}\odot\bm{h}_{\sigma_{2}}\geq 0,\bm{t}_{\sigma_{1}}\geq 0$ and $\bm{t}_{\sigma_{2}}\geq 0$. Then the scoring function is equal to
\begin{align*}
f(\bm{h}, \bm{r},\bm{t})=&-W(f_{\bm{r}}\circ f_{\bm{h}}(\bm{x_{0}}), f_{\bm{t}}(\bm{x_{0}}))\\
=&-\|\bm{r}_{\sigma}\odot\bm{h}_{\mu}+\bm{r}_{\mu}-\bm{t}_{\mu}\|_2^2-\frac{1}{2}\||\bm{r}_{\sigma}\odot\bm{h}_{\sigma_{1}}|-|\bm{t}_{\sigma_{1}}|\|_2^2-\frac{1}{2}\||\bm{r}_{\sigma}\odot\bm{h}_{\sigma_{2}}|-|\bm{t}_{\sigma_{2}}|\|_2^2\\
&-\sqrt{\frac{3}{4}}(\bm{r}_{\sigma}\odot\bm{h}_{\mu}+\bm{r}_{\mu}-\bm{t}_{\mu})^T(|\bm{r}_{\sigma}\odot\bm{h}_{\sigma_{2}}|-|\bm{r}_{\sigma}\odot\bm{h}_{\sigma_{1}}|+|\bm{t}_{\sigma_{1}}|-|\bm{t}_{\sigma_{2}}|)
\end{align*}
\end{proof}
\begin{proposition}
If $\bm{x}_0\sim \mathcal{N}(\bm{0},\bm{I})$, the invertible functions are  Eq.(\ref{equation:7}) and similarity metric is Eq.(\ref{equation:9}), then the scoring function is 
\begin{align*}
f(\bm{h}, \bm{r},\bm{t})=&-\|\bm{r}_{\sigma}\odot\bm{h}_{\mu}+\bm{r}_{\mu}-\bm{t}_{\mu}\|_2^2-\frac{1}{2}\||\bm{r}_{\sigma}\odot\bm{h}_{\sigma_{1}}|-|\bm{t}_{\sigma_{1}}|\|_2^2-\frac{1}{2}\||\bm{r}_{\sigma}\odot\bm{h}_{\sigma_{2}}|-|\bm{t}_{\sigma_{2}}|\|_2^2\notag\\
&-\sqrt{\frac{2}{\pi}}(\bm{r}_{\sigma}\odot\bm{h}_{\mu}+\bm{r}_{\mu}-\bm{t}_{\mu})^T(|\bm{r}_{\sigma}\odot\bm{h}_{\sigma_{2}}|-|\bm{r}_{\sigma}\odot\bm{h}_{\sigma_{1}}|+|\bm{t}_{\sigma_{1}}|-|\bm{t}_{\sigma_{2}}|)
\end{align*}
\end{proposition}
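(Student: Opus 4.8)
The plan is to follow verbatim the three-step template used for the two preceding propositions, the only new ingredient being that the pushed-forward laws are two-piece (``split'') normals rather than two-piece uniforms. Step~(i): reduce the $n$-dimensional Wasserstein distance to a sum of one-dimensional ones via Corollary~\ref{corollary:1}, which applies because (as in Eq.(\ref{equation:7})) the flows act coordinatewise, so both $f_{\bm r}\circ f_{\bm h}(\bm x_0)$ and $f_{\bm t}(\bm x_0)$ have product densities. Step~(ii): evaluate the one-dimensional Wasserstein distance between the two split normals by the inverse-CDF formula $W(p,q)=\int_0^1(F^{-1}(z)-G^{-1}(z))^2\,\mathrm{d}z$. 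Step~(iii): substitute the concrete parameters and use $f(\bm h,\bm r,\bm t)=-W(f_{\bm r}\circ f_{\bm h}(\bm x_0),f_{\bm t}(\bm x_0))$.

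For Step~(ii), I would first record that applying $f_{\bm g}$ of Eq.(\ref{equation:7}) to $\bm x_0\sim\mathcal N(\bm 0,\bm I)$ and using Eq.(\ref{equation:1}) yields, coordinatewise, a split normal with mode $\bm r_\sigma\odot\bm h_\mu+\bm r_\mu$, left scale $|\bm r_\sigma\odot\bm h_{\sigma_1}|$, right scale $|\bm r_\sigma\odot\bm h_{\sigma_2}|$ for $f_{\bm r}\circ f_{\bm h}(\bm x_0)$, and mode $\bm t_\mu$, left scale $|\bm t_{\sigma_1}|$, right scale $|\bm t_{\sigma_2}|$ for $f_{\bm t}(\bm x_0)$. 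For a single coordinate, writing $p=$ split-normal$(\mu_1,\sigma_1,\sigma_2)$ and $q=$ split-normal$(\mu_2,\sigma_3,\sigma_4)$, the quantile functions are $F^{-1}(z)=\mu_1+\sqrt2\,\sigma_1\erf^{-1}(2z-1)$ on $z\le\tfrac12$ and $\mu_1+\sqrt2\,\sigma_2\erf^{-1}(2z-1)$ on $z>\tfrac12$ (the break lies exactly at the median since $\mathbb P(\bm x_0\le\bm 0)=\tfrac12$), and similarly for $G^{-1}$. I would split $\int_0^1$ at $z=\tfrac12$, expand each squared integrand, and reduce everything to $\int_0^{1/2}\erf^{-1}(2z-1)\,\mathrm{d}z$, $\int_{1/2}^1\erf^{-1}(2z-1)\,\mathrm{d}z$, $\int_0^{1/2}\erf^{-1}(2z-1)^2\,\mathrm{d}z$ and $\int_{1/2}^1\erf^{-1}(2z-1)^2\,\mathrm{d}z$; the substitution $u=2z-1$ and the antiderivatives $\int\erf^{-1}(x)\,\mathrm{d}x=-\tfrac1{\sqrt\pi}e^{-\erf^{-1}(x)^2}$ and $\int\erf^{-1}(x)^2\,\mathrm{d}x=\tfrac x2-\tfrac1{\sqrt\pi}\erf^{-1}(x)e^{-\erf^{-1}(x)^2}$ (already derived just above) give $-\tfrac1{2\sqrt\pi}$, $\tfrac1{2\sqrt\pi}$, $\tfrac14$, $\tfrac14$ respectively, the boundary terms at $\pm1$ vanishing because $e^{-\erf^{-1}(x)^2}$ dominates the linear factor. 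Collecting terms yields
\[
W(p,q)=(\mu_1-\mu_2)^2+\tfrac12(\sigma_1-\sigma_3)^2+\tfrac12(\sigma_2-\sigma_4)^2+\sqrt{\tfrac2\pi}\,(\mu_1-\mu_2)(\sigma_2-\sigma_1+\sigma_3-\sigma_4),
\]
which is exactly the expression from the preceding (uniform) proposition with the coefficient $\sqrt{3/4}$ replaced by $\sqrt{2/\pi}$, the difference traceable entirely to $\int_0^{1/2}\erf^{-1}(2z-1)\,\mathrm{d}z=-\tfrac1{2\sqrt\pi}$ versus its uniform analogue.

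To finish, I would sum the coordinatewise contributions via Corollary~\ref{corollary:1} and substitute $\mu_1-\mu_2=\bm r_\sigma\odot\bm h_\mu+\bm r_\mu-\bm t_\mu$, $\sigma_1=|\bm r_\sigma\odot\bm h_{\sigma_1}|$, $\sigma_2=|\bm r_\sigma\odot\bm h_{\sigma_2}|$, $\sigma_3=|\bm t_{\sigma_1}|$, $\sigma_4=|\bm t_{\sigma_2}|$, then negate to obtain the stated $f(\bm h,\bm r,\bm t)$. The main obstacle I anticipate is the bookkeeping in Step~(ii): justifying convergence of the improper $\erf^{-1}$-integrals at $z=0,1$, and keeping orientations straight, since a negative scale makes $f_{\bm g}$ swap its left and right pieces. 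As in the uniform proof I would therefore restrict, without loss of generality thanks to the absolute values in the final formula, to $\bm r_\sigma\odot\bm h_{\sigma_1},\bm r_\sigma\odot\bm h_{\sigma_2},\bm t_{\sigma_1},\bm t_{\sigma_2}\ge\bm 0$, carry out the computation in that case, and then read off the general statement.
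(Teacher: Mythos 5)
Your proposal matches the paper's own proof essentially step for step: it pushes $\bm{x}_0$ through Eq.(\ref{equation:7}) to obtain coordinatewise split normals, evaluates the one-dimensional Wasserstein distance by splitting the quantile-function integral at $z=\tfrac12$ and using the antiderivatives of $\erf^{-1}(x)$ and $\erf^{-1}(x)^2$ (your values $-\tfrac{1}{2\sqrt{\pi}},\tfrac{1}{2\sqrt{\pi}},\tfrac14,\tfrac14$ are correct), sums coordinates via Corollary~\ref{corollary:1}, and imposes the same nonnegativity restriction on the scale parameters before substituting and negating. The argument is correct and no genuinely different route is taken.
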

\begin{proof}
We first show that for two 1-dimensional distributions
\begin{align*}
   p(x)=
   \begin{cases}
   \mathcal{N}(\mu_{1},\sigma_{1}) & \text{if }x\leq \mu_{1}\\
   \mathcal{N}(\mu_{1},\sigma_{2}) & \text{if }x> \mu_{1}
   \end{cases},
   q(x)=
   \begin{cases}
   \mathcal{N}(\mu_{2},\sigma_{3}) & \text{if }x\leq \mu_{2}\\
   \mathcal{N}(\mu_{2},\sigma_{4}) & \text{if }x>\mu_{2}
   \end{cases}
\end{align*}
the Wasserstein distance is equal to $$W(p,q)=(\mu_{1}-\mu_{2})^2+\frac{1}{2}(\sigma_{1}-\sigma_{3})^2+\frac{1}{2}(\sigma_{2}-\sigma_{4})^2+\sqrt{\frac{2}{\pi}}(\mu_{1}-\mu_{2})(\sigma_{2}-\sigma_{1}+\sigma_{3}-\sigma_{4})$$
1-dimensional Wasserstein distance is equal to
$$W(p,q)=\int_{0}^{1}|F^{-1}(z)-G^{-1}(z)|\mathrm{d}z$$
where $F^{-1}(z)$ and $G^{-1}(z)$ are the inverse cumulative distribution function of $p(x)$ and $q(x)$, $i.e.$,
\begin{align*}
   F^{-1}(z)=
   \begin{cases}
   \mu_{1}+\sqrt{2}\sigma_{1}\erf^{-1}(2z-1) & \text{if }0\leq z\leq \frac{1}{2}\\
   \mu_{1}+\sqrt{2}\sigma_{2}\erf^{-1}(2z-1) & \text{if }\frac{1}{2}< z\leq 1
   \end{cases}\\
   G^{-1}(z)
   \begin{cases}
   \mu_{2}+\sqrt{2}\sigma_{3}\erf^{-1}(2z-1) & \text{if }0\leq z\leq \frac{1}{2}\\
   \mu_{2}+\sqrt{2}\sigma_{4}\erf^{-1}(2z-1) & \text{if }\frac{1}{2}< z\leq 1
   \end{cases}
\end{align*}
Therefore
\begin{align*}
    W(p,q)=&\int_{0}^{\frac{1}{2}}|F^{-1}(z)-G^{-1}(z)|^2\mathrm{d}z+\int_{\frac{1}{2}}^{1}|F^{-1}(z)-G^{-1}(z)|^2\mathrm{d}z\\
    =&\int_{0}^{\frac{1}{2}}((\mu_{1}-\mu_{2})+\sqrt{2}\erf^{-1}(2z-1)(\sigma_{1}-\sigma_{3}))^2\mathrm{d}z\\
    +&\int_{\frac{1}{2}}^{1}((\mu_{1}-\mu_{2})+\sqrt{2}\erf^{-1}(2z-1)(\sigma_{2}-\sigma_{4}))^2\mathrm{d}z\\
    =&\int_{0}^{1}(\mu_{1}-\mu_{2})^2\mathrm{d}z+\int_{0}^{\frac{1}{2}}2\erf^{-1}(2z-1)^2(\sigma_{1}-\sigma_{3})^2\mathrm{d}z\\
    +&\int_{0}^{\frac{1}{2}}2\sqrt{2}(\mu_{1}-\mu_{2})\erf^{-1}(2z-1)(\sigma_{1}-\sigma_{3})\mathrm{d}z+\int_{\frac{1}{2}}^{1}2\erf^{-1}(2z-1)^2(\sigma_{2}-\sigma_{4})^2\mathrm{d}z\\
    +&\int_{\frac{1}{2}}^{1}2\sqrt{2}(\mu_{1}-\mu_{2})\erf^{-1}(2z-1)(\sigma_{2}-\sigma_{4})\mathrm{d}z\\
    =&(\mu_{1}-\mu_{2})^2+(\sigma_{1}-\sigma_{3})^2\int_{-1}^{0}\erf^{-1}(z)^2\mathrm{d}z
    +\sqrt{2}(\mu_{1}-\mu_{2})(\sigma_{1}-\sigma_{3})\int_{-1}^{0}\erf^{-1}(z)\mathrm{d}z\\
    +&(\sigma_{2}-\sigma_{4})^2\int_{0}^{1}\erf^{-1}(z)^2\mathrm{d}z+\sqrt{2}(\mu_{1}-\mu_{2})(\sigma_{2}-\sigma_{4})\int_{0}^{1}\erf^{-1}(z)\mathrm{d}z\\
    =&(\mu_{1}-\mu_{2})^2+\frac{1}{2}(\sigma_{1}-\sigma_{3})^2+\frac{1}{2}(\sigma_{2}-\sigma_{4})^2+\sqrt{\frac{2}{\pi}}(\mu_{1}-\mu_{2})(\sigma_{2}-\sigma_{1}+\sigma_{3}-\sigma_{4})
\end{align*}
By Corollary \ref{corollary:1}, we have that for two distributions
\begin{align*}
   p(x)=
   \begin{cases}
   \mathcal{N}(\bm{\mu}_{1},\bm{\sigma}_{1}) & \text{if }x\leq \bm{\mu}_{1}\\
   \mathcal{N}(\bm{\mu}_{1},\bm{\sigma}_{2}) & \text{if }x> \bm{\mu}_{1}
   \end{cases},
   q(x)=
   \begin{cases}
   \mathcal{N}(\bm{\mu}_{2},\bm{\sigma}_{3}) & \text{if }x\leq \bm{\mu}_{2}\\
   \mathcal{N}(\bm{\mu}_{2},\bm{\sigma}_{4}) & \text{if }x> \bm{\mu}_{2}
   \end{cases}
\end{align*}
the Wasserstein distance is equal to
\begin{align*}
W(p(\bm{x}),q(\bm{x}))=\sum_{i=1}^{n}W(p(\bm{x}_i),q(\bm{x}_i))=&\|\bm{\mu}_{1}-\bm{\mu}_{2}\|^2+\frac{1}{2}\|\bm{\sigma_{1}}-\bm{\sigma_{3}}\|^2+\frac{1}{2}\|\bm{\sigma_{2}}-\bm{\sigma_{4}}\|^2\\+&\sqrt{\frac{2}{\pi}}(\bm{\mu}_{1}-\bm{\mu}_{2})^T(\bm{\sigma}_{2}-\bm{\sigma}_{1}+\bm{\sigma}_{3}-\bm{\sigma}_{4})
\end{align*}
For every triplet $(\bm{h}, \bm{r}, \bm{t})$, we have that
\begin{align*}
   f_{\bm{r}}\circ f_{\bm{h}}(\bm{x})=
   \begin{cases}
   \bm{r}_{\sigma}\odot\bm{h}_{\sigma_{1}}\odot\bm{x}+\bm{r}_{\sigma}\odot\bm{h}_{\mu}+\bm{r}_{\mu} & \text{if }\bm{x}\leq \bm{0}\\
   \bm{r}_{\sigma}\odot\bm{h}_{\sigma_{2}}\odot\bm{x}+\bm{r}_{\sigma}\odot\bm{h}_{\mu}+\bm{r}_{\mu} & \text{if }\bm{x}>\bm{0}
   \end{cases}, 
   f_{\bm{t}}(\bm{x})=
   \begin{cases}
   \bm{t}_{\sigma_{1}}\odot\bm{x}+\bm{t}_{\mu} & \text{if }\bm{x}\leq \bm{0}\\
   \bm{t}_{\sigma_{2}}\odot\bm{x}+\bm{t}_{\mu} & \text{if }\bm{x}>\bm{0}
   \end{cases}
\end{align*}
Since $\bm{x}_{0}\sim \mathcal{N}(\bm{0},\bm{I})$, by Eq.(\ref{equation:1}), we have that
\begin{align*}
   &f_{\bm{r}}\circ f_{\bm{h}}(\bm{x}_{0})\sim
   \begin{cases}
   \mathcal{N}(\bm{r}_{\sigma}\odot\bm{h}_{\mu}+\bm{r}_{\mu},|\bm{r}_{\sigma}\odot\bm{h}_{\sigma_{1}}|) & \text{if }x\leq \bm{r}_{\sigma}\odot\bm{h}_{\mu}+\bm{r}_{\mu}\\
   \mathcal{N}(\bm{r}_{\sigma}\odot\bm{h}_{\mu}+\bm{r}_{\mu},|\bm{r}_{\sigma}\odot\bm{h}_{\sigma_{2}}|) & \text{if }x> \bm{r}_{\sigma}\odot\bm{h}_{\mu}+\bm{r}_{\mu}
   \end{cases}\\
   &f_{\bm{t}}(\bm{x}_{0})\sim
   \begin{cases}
   \mathcal{N}(\bm{t}_{\mu},|\bm{t}_{\sigma_{1}}|) & \text{if }x\leq \bm{t}_{\mu}\\
   \mathcal{N}(\bm{t}_{\mu},|\bm{t}_{\sigma_{2}}|) &\text{if } x>\bm{t}_{\mu}
   \end{cases}
\end{align*}
Here, we restrict $\bm{r}_{\sigma}\odot\bm{h}_{\sigma_{1}}\geq 0,\bm{r}_{\sigma}\odot\bm{h}_{\sigma_{2}}\geq 0,\bm{t}_{\sigma_{1}}\geq 0$ and $\bm{t}_{\sigma_{2}}\geq 0$. Then the scoring function is equal to
\begin{align*}
f(\bm{h}, \bm{r},\bm{t})=&-W(f_{\bm{r}}\circ f_{\bm{h}}(\bm{x_{0}}), f_{\bm{t}}(\bm{x_{0}}))\\
=&-\|\bm{r}_{\sigma}\odot\bm{h}_{\mu}+\bm{r}_{\mu}-\bm{t}_{\mu}\|_2^2-\frac{1}{2}\||\bm{r}_{\sigma}\odot\bm{h}_{\sigma_{1}}|-|\bm{t}_{\sigma_{1}}|\|_2^2-\frac{1}{2}\||\bm{r}_{\sigma}\odot\bm{h}_{\sigma_{2}}|-|\bm{t}_{\sigma_{2}}|\|_2^2\\
&-\sqrt{\frac{2}{\pi}}(\bm{r}_{\sigma}\odot\bm{h}_{\mu}+\bm{r}_{\mu}-\bm{t}_{\mu})^T(|\bm{r}_{\sigma}\odot\bm{h}_{\sigma_{2}}|-|\bm{r}_{\sigma}\odot\bm{h}_{\sigma_{1}}|+|\bm{t}_{\sigma_{1}}|-|\bm{t}_{\sigma_{2}}|)
\end{align*}
\end{proof}
\begin{proposition}
Let $k>0$, if $\bm{x}_{k}\sim U[-\frac{\sqrt{3}}{k},\frac{\sqrt{3}}{k}]^n$ or $\bm{x}_{k}\sim \mathcal{N}(\bm{0},\frac{\bm{I}}{k^2})$, the invertible functions are  Eq.(\ref{equation:7}) and similarity metric is Eq.(\ref{equation:9}), denote the scoring function as $f_k(\bm{h}, \bm{r},\bm{t})$, then $\bm{x}_k$ tends to a Dirac delta distribution as $k$ tends to infinity and
\begin{align*}
\lim_{k\rightarrow \infty}f_k(\bm{h}, \bm{r},\bm{t})=&\lim_{k\rightarrow \infty}-\|\bm{r}_{\sigma}\odot\bm{h}_{\mu}+\bm{r}_{\mu}-\bm{t}_{\mu}\|_2^2-\frac{1}{k^2}\||\bm{r}_{\sigma}\odot\bm{h}_{\sigma}|-|\bm{t}_{\sigma}|\|_2^2\notag\\
=&-\|\bm{r}_{\sigma}\odot\bm{h}_{\mu}+\bm{r}_{\mu}-\bm{t}_{\mu}\|_2^2
\end{align*}
\end{proposition}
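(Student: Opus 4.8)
The statement has two parts --- the limiting behavior of $\bm{x}_k$ and the value of $\lim_{k\rightarrow\infty}f_k$ --- and I would prove them in that order. For the first part I would make precise that ``tends to a Dirac delta distribution'' means weak convergence $\bm{x}_k\Rightarrow\delta_{\bm{0}}$, which is the right notion here since the densities of $\bm{x}_k$ diverge and do not converge pointwise. When $\bm{x}_{k}\sim U[-\frac{\sqrt{3}}{k},\frac{\sqrt{3}}{k}]^n$ the support lies inside the Euclidean ball of radius $\sqrt{3n}/k$ about the origin, so $\mathbb{E}[\varphi(\bm{x}_k)]\rightarrow\varphi(\bm{0})$ for every bounded continuous $\varphi$; when $\bm{x}_{k}\sim \mathcal{N}(\bm{0},\frac{\bm{I}}{k^2})$ the characteristic function $e^{-\|\bm{s}\|_2^2/(2k^2)}$ tends to $1$ for every $\bm{s}$, so the conclusion follows from L\'evy's continuity theorem. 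Either way $\bm{x}_k\Rightarrow\delta_{\bm{0}}$.

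For the second part I would obtain the closed form of $f_k$ by the same computation used for Proposition~\ref{proposition:2}, with the scale of the initial random variable changed from $1$ to $1/k$; the stated formula, which involves only $\bm{h}_{\sigma}$ and $\bm{t}_{\sigma}$, corresponds to the affine functions of Eq.~(\ref{equation:5}). Since $f_{\bm{r}}\circ f_{\bm{h}}(\bm{x})=\bm{r}_{\sigma}\odot\bm{h}_{\sigma}\odot\bm{x}+\bm{r}_{\sigma}\odot\bm{h}_{\mu}+\bm{r}_{\mu}$ is affine, pushing $\bm{x}_k$ through it gives a uniform (resp.\ normal) random vector with independent coordinates whose mean is $\bm{r}_{\sigma}\odot\bm{h}_{\mu}+\bm{r}_{\mu}$ and whose $i$-th standard deviation is $\frac{1}{k}|\bm{r}_{\sigma}\odot\bm{h}_{\sigma}|_i$; likewise $f_{\bm{t}}(\bm{x}_k)$ has mean $\bm{t}_{\mu}$ and $i$-th standard deviation $\frac{1}{k}|\bm{t}_{\sigma}|_i$. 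Both are products of their one-dimensional marginals, so Corollary~\ref{corollary:1} applies, and inserting the one-dimensional identity $W(p,q)=(\mu_1-\mu_2)^2+(\sigma_1-\sigma_2)^2$ established in the proof of Proposition~\ref{proposition:2} (valid verbatim for the uniform and for the normal parameterizations by mean and standard deviation) coordinate by coordinate and summing yields
\[
 f_k(\bm{h},\bm{r},\bm{t})=-\|\bm{r}_{\sigma}\odot\bm{h}_{\mu}+\bm{r}_{\mu}-\bm{t}_{\mu}\|_2^2-\frac{1}{k^2}\big\||\bm{r}_{\sigma}\odot\bm{h}_{\sigma}|-|\bm{t}_{\sigma}|\big\|_2^2,
\]
which is the middle expression in the claim.

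Finally, for fixed $\bm{h},\bm{r},\bm{t}$ the quantity $\big\||\bm{r}_{\sigma}\odot\bm{h}_{\sigma}|-|\bm{t}_{\sigma}|\big\|_2^2$ is a finite constant, so the second term is $O(1/k^2)$ and the limit equals $-\|\bm{r}_{\sigma}\odot\bm{h}_{\mu}+\bm{r}_{\mu}-\bm{t}_{\mu}\|_2^2$, as claimed. The proof is mostly bookkeeping; the only step needing care is propagating the $1/k$ factor through the affine pushforward and checking that the pushed-forward variables stay inside the uniform/normal family, so that the one-dimensional Wasserstein formula and Corollary~\ref{corollary:1} used in the proof of Proposition~\ref{proposition:2} can be reused without change --- together with reading ``Dirac distribution'' as a weak limit rather than a limit of densities.
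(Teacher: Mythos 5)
Your proposal is correct and follows essentially the same route as the paper: the paper notes that $k\bm{x}_k\sim U[-\sqrt{3},\sqrt{3}]^n$ and invokes Proposition \ref{proposition:2} with $\bm{h}_{\sigma},\bm{t}_{\sigma}$ replaced by $\tfrac{1}{k}\bm{h}_{\sigma},\tfrac{1}{k}\bm{t}_{\sigma}$, which is exactly your pushforward computation yielding the $\tfrac{1}{k^2}$ factor, followed by the same trivial limit. Your additional care (making the Dirac convergence precise as weak convergence, and reading the hypothesis as the affine functions of Eq.(\ref{equation:5}) rather than Eq.(\ref{equation:7}), consistent with the stated formula) only tightens what the paper leaves implicit.
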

\begin{proof}
Since $k\bm{x}_{k}\sim U[-\sqrt{3},\sqrt{3}]^n$, by Proposition \ref{proposition:2}, we have that
\begin{align*}
f_k(\bm{h}, \bm{r},\bm{t})=&-\|\bm{r}_{\sigma}\odot\bm{h}_{\mu}+\bm{r}_{\mu}-\bm{t}_{\mu}\|_2^2-\||\bm{r}_{\sigma}\odot\frac{1}{k}\bm{h}_{\sigma}|-|\frac{1}{k}\bm{t}_{\sigma}|\|_2^2\\
=&-\|\bm{r}_{\sigma}\odot\bm{h}_{\mu}+\bm{r}_{\mu}-\bm{t}_{\mu}\|_2^2-\frac{1}{k^2}\||\bm{r}_{\sigma}\odot\bm{h}_{\sigma}|-|\bm{t}_{\sigma}|\|_2^2
\end{align*}
then 
\begin{align*}
\lim_{k\rightarrow \infty}f_k(\bm{h}, \bm{r},\bm{t})
=-\|\bm{r}_{\sigma}\odot\bm{h}_{\mu}+\bm{r}_{\mu}-\bm{t}_{\mu}\|_2^2
\end{align*}
\end{proof}
\begin{proposition}
Scoring functions Eq.(\ref{equation:10}), Eq.(\ref{equation:11}) and Eq.(\ref{equation:12}) can learn symmetry, antisymmetry, inverse and composition rules.
\end{proposition}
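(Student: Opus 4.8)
The plan is to run the standard rule-inference argument (as used for RotatE): interpret ``the model can learn that $r$ satisfies a rule'' as ``there exist embeddings at which the scoring function attains its maximum precisely on the triples prescribed by the rule'', and then exhibit concrete relation embeddings doing this, handling Eq.(\ref{equation:10}), Eq.(\ref{equation:11}) and Eq.(\ref{equation:12}) uniformly. The key is the abstraction already provided by Eq.(\ref{equation:2}): the preceding proofs write each scoring function as $f(\bm h,\bm r,\bm t)=-W\big(f_{\bm r}\circ f_{\bm h}(\bm x_0),\,f_{\bm t}(\bm x_0)\big)$, and by Corollary~\ref{corollary:1} together with the coordinatewise action of each $f_{\bm g}$ on the independent-coordinate base variable $\bm x_0$, this $W$ is a sum of nonnegative one-dimensional Wasserstein distances. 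Hence $f\le 0$, with $f(\bm h,\bm r,\bm t)=0$ exactly when $f_{\bm r}\circ f_{\bm h}(\bm x_0)$ and $f_{\bm t}(\bm x_0)$ are equal in distribution (note that once the means/breakpoints agree, the cross terms in Eq.(\ref{equation:11})--(\ref{equation:12}) vanish, and matching the remaining $|\sigma|$-terms is just equality of the two split distributions). I will therefore say $(h,r,t)$ \emph{holds} when $f_{\bm r}\circ f_{\bm h}(\bm x_0)\stackrel{d}{=}f_{\bm t}(\bm x_0)$.

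\textbf{Three elementary facts used throughout.} (i) If $X\stackrel{d}{=}Y$ then $g(X)\stackrel{d}{=}g(Y)$ for any deterministic $g$. (ii) The model's maps are closed under the compositions we need and contain the maps we need: for Eq.(\ref{equation:10}) every $f_{\bm g}$ is an invertible affine coordinatewise map, so these maps form a group; for Eq.(\ref{equation:11})--(\ref{equation:12}) each $f_{\bm h},f_{\bm t}$ is two-piece piecewise linear with breakpoint at $\bm 0$ while $f_{\bm r}$ is linear, and ``linear $\circ$ two-piece-PWL'' is again two-piece PWL with breakpoint at $\bm 0$, ``linear $\circ$ linear'' is linear, and a linear coordinatewise map with nonzero entries is invertible with linear inverse. (iii) The identity map is realizable as $f_{\bm r}$ with $\bm r_\sigma=\bm 1,\bm r_\mu=\bm 0$.

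\textbf{The four constructions.} For \textbf{symmetry}, take $\bm r$ with $f_{\bm r}=\mathrm{id}$; then $f(\bm h,\bm r,\bm t)=-W(f_{\bm h}(\bm x_0),f_{\bm t}(\bm x_0))=f(\bm t,\bm r,\bm h)$ by symmetry of $W$, so $(h,r,t)$ holds iff $(t,r,h)$ holds (for Eq.(\ref{equation:10}) one may instead exhibit a nontrivial symmetric family via reflections $\bm r_\sigma=-\bm 1$ with $\bm r_\mu$ free). For \textbf{antisymmetry}, take $f_{\bm r}(\bm x)=\bm x+\bm r_\mu$ with $\bm r_\mu\ne\bm 0$; if $(h,r,t)$ holds, equality of the location parameters gives $\bm h_\mu+\bm r_\mu=\bm t_\mu$, and $(t,r,h)$ holding would give $\bm t_\mu+\bm r_\mu=\bm h_\mu$, whence $2\bm r_\mu=\bm 0$, a contradiction, so $(t,r,h)$ does not hold (and such pairs exist: take $\bm t$ agreeing with $\bm h$ up to the mean shift $\bm r_\mu$). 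For \textbf{inverse}, given $\bm r_1$ set $f_{\bm r_2}=f_{\bm r_1}^{-1}$, realizable by~(ii); applying $f_{\bm r_1}^{-1}$, resp.\ $f_{\bm r_1}$, and using~(i), $f_{\bm r_1}\circ f_{\bm h}(\bm x_0)\stackrel{d}{=}f_{\bm t}(\bm x_0)$ iff $f_{\bm h}(\bm x_0)\stackrel{d}{=}f_{\bm r_2}\circ f_{\bm t}(\bm x_0)$, i.e.\ $(h,r_1,t)$ holds iff $(t,r_2,h)$ holds. For \textbf{composition}, given $\bm r_1,\bm r_2$ set $f_{\bm r_3}=f_{\bm r_2}\circ f_{\bm r_1}$, realizable by~(ii); if $(h,r_1,\widetilde t)$ and $(\widetilde t,r_2,t)$ hold, then by~(i), $f_{\bm r_3}\circ f_{\bm h}(\bm x_0)=f_{\bm r_2}\big(f_{\bm r_1}\circ f_{\bm h}(\bm x_0)\big)\stackrel{d}{=}f_{\bm r_2}\big(f_{\widetilde{\bm t}}(\bm x_0)\big)\stackrel{d}{=}f_{\bm t}(\bm x_0)$, so $(h,r_3,t)$ holds.

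\textbf{Main obstacle.} No single step is deep; the care is concentrated in point~(ii) for the piecewise-linear models Eq.(\ref{equation:11})--(\ref{equation:12}): one must verify that the relation embeddings $\bm r_2$ and $\bm r_3$ built above genuinely lie in the model's parameterization --- which is exactly why $f_{\bm r}$ is kept linear rather than two-piece (composing two two-piece maps would need three pieces), and why the constructed scale vectors ($\bm 1/\bm r_{1\sigma}$ and $\bm r_{2\sigma}\odot\bm r_{1\sigma}$) still have nonzero entries --- together with confirming that $f=0$ really characterizes equality in distribution for the cross-term-bearing formulas. Everything else reduces to the observation that Eq.(\ref{equation:2}) turns relation composition into function composition and that equality in distribution is preserved by deterministic maps.
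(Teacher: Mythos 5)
Your proposal is correct, and it reaches the same underlying conclusion as the paper, but by a noticeably different route. The paper first disposes of Eq.(\ref{equation:11}) and Eq.(\ref{equation:12}) by observing they reduce to Eq.(\ref{equation:10}) when $\bm{h}_{\sigma_1}=\bm{h}_{\sigma_2}$ and $\bm{t}_{\sigma_1}=\bm{t}_{\sigma_2}$, and then works entirely coordinate-wise on the closed form of Eq.(\ref{equation:10}): it writes down the exact-fit (zero-score) conditions $\bm{r}_{\sigma}\odot\bm{h}_{\mu}+\bm{r}_{\mu}=\bm{t}_{\mu}$, $|\bm{r}_{\sigma}\odot\bm{h}_{\sigma}|=|\bm{t}_{\sigma}|$ for each triple appearing in a rule and eliminates the entity parameters to obtain identities on the relation parameters alone (e.g.\ $\bm{r}_{\sigma}\odot\bm{r}_{\sigma}=\bm{1}$ and $\bm{r}_{\sigma}\odot\bm{r}_{\mu}+\bm{r}_{\mu}=\bm{0}$ for symmetry, $\bm{r}_{2\sigma}\odot\bm{r}_{1\sigma}=\bm{1}$ for inversion, $\bm{r}_{1\sigma}\odot\bm{r}_{2\sigma}=\bm{r}_{3\sigma}$, $\bm{r}_{2\sigma}\odot\bm{r}_{1\mu}+\bm{r}_{2\mu}=\bm{r}_{3\mu}$ for composition); these are precisely the statistics the appendix later checks empirically. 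You instead argue at the level of the flows: score zero iff the two pushforwards of $\bm{x}_0$ agree in distribution, and then you construct the relation maps directly (identity or reflection for symmetry, a pure translation for antisymmetry, the functional inverse for inversion, the functional composite for composition), using closure of the parameterized class and the fact that equality in distribution is preserved by deterministic maps; this also lets you treat Eq.(\ref{equation:11})--(\ref{equation:12}) directly rather than via the reduction. Your version is the constructive (sufficiency) direction and makes the group-theoretic reading of Eq.(\ref{equation:2}) explicit, which is arguably the cleaner justification of ``can learn''; the paper's version is the elimination (necessity) direction and yields the explicit parameter identities that are reused experimentally. Two small caveats you correctly flag but should state: the characterization ``score $=0$ iff equal in distribution'' for Eq.(\ref{equation:11})--(\ref{equation:12}) relies on the sign restrictions ($\bm{r}_{\sigma}\odot\bm{h}_{\sigma_1}\geq\bm{0}$, etc.) under which those formulas are genuinely negative Wasserstein distances, and the scale identities really only pin down relation scales up to sign (i.e.\ $|\bm{r}_{2\sigma}\odot\bm{r}_{1\sigma}|=\bm{1}$), a point the paper's proof also glosses over.
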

\begin{proof}
Since Eq.(\ref{equation:11}) or Eq.(\ref{equation:12}) reduces to Eq.(\ref{equation:10}) if $\bm{h}_{\sigma_{1}}=\bm{h}_{\sigma_{2}}$ and $\bm{t}_{\sigma_{1}}=\bm{t}_{\sigma_{2}}$, we only proof for Eq.(\ref{equation:10}):
$$f(\bm{h}, \bm{r}, \bm{t})=-\|\bm{r}_{\sigma}\odot\bm{h}_{\mu}+\bm{r}_{\mu}-\bm{t}_{\mu}\|_2^2-\||\bm{r}_{\sigma}\odot\bm{h}_{\sigma}|-|\bm{t}_{\sigma}|\|_2^2$$
\textbf{Symmetry Rules}: If $(h,r,t)\in \mathcal{F}\wedge(t,r,h)\in \mathcal{F}$ hold, we have
\begin{align*}
  &\bm{r}_{\sigma}\odot\bm{h}_{\mu}+\bm{r}_{\mu}=\bm{t}_{\mu},
|\bm{r}_{\sigma}\odot\bm{h}_{\sigma}|=|\bm{t}_{\sigma}|\\
  &\bm{r}_{\sigma}\odot\bm{t}_{\mu}+\bm{r}_{\mu}=\bm{h}_{\mu},
|\bm{r}_{\sigma}\odot\bm{t}_{\sigma}|=|\bm{h}_{\sigma}|
\end{align*}
then we have
\begin{align*}
  &\bm{r}_{\sigma}\odot\bm{r}_{\sigma}=\bm{1}\\
  &\bm{r}_{\sigma}\odot\bm{r}_{\mu}+\bm{r}_{\mu}=0
\end{align*}
\textbf{Antisymmetry Rules}:  If $(h,r,t)\in \mathcal{F}\wedge\neg(t,r,h)\in \mathcal{F}$ hold, we have
\begin{align*}
  &\bm{r}_{\sigma}\odot\bm{h}_{\mu}+\bm{r}_{\mu}=\bm{t}_{\mu},
|\bm{r}_{\sigma}\odot\bm{h}_{\sigma}|=|\bm{t}_{\sigma}|\\
  &\bm{r}_{\sigma}\odot\bm{t}_{\mu}+\bm{r}_{\mu}\neq\bm{h}_{\mu} \text{, or }
|\bm{r}_{\sigma}\odot\bm{t}_{\sigma}|\neq|\bm{h}_{\sigma}|
\end{align*}
then we have
\begin{align*}
  &\bm{r}_{\sigma}\odot\bm{r}_{\sigma}\neq\bm{1}\text{, or }\\
  &\bm{r}_{\sigma}\odot\bm{r}_{\mu}+\bm{r}_{\mu}\neq0
\end{align*}
\textbf{Inverse Rules}: If $(h,r_1,t)\in \mathcal{F}\wedge(t,r_2,h)\in \mathcal{F}$ hold, we have
\begin{align*}
  &\bm{r}_{1\sigma}\odot\bm{h}_{\mu}+\bm{r}_{1\mu}=\bm{t}_{\mu},
|\bm{r}_{1\sigma}\odot\bm{h}_{\sigma}|=|\bm{t}_{\sigma}|\\
  &\bm{r}_{2\sigma}\odot\bm{t}_{\mu}+\bm{r}_{2\mu}=\bm{h}_{\mu},
|\bm{r}_{2\sigma}\odot\bm{t}_{\sigma}|=|\bm{h}_{\sigma}|
\end{align*}
then we have
\begin{align*}
  &\bm{r}_{2\sigma}\odot\bm{r}_{1\sigma}=\bm{1}\\
  &\bm{r}_{2\sigma}\odot\bm{r}_{1\mu}+\bm{r}_{2\mu}=0
\end{align*}
\textbf{Composition Rules}: If $(h,r_{1},\widetilde{t})\in \mathcal{F}\wedge (\widetilde{t},r_{2},t)\in \mathcal{F} \wedge (h,r_{3},t)\in \mathcal{F}$ hold, we have
\begin{align*}
  &\bm{r}_{1\sigma}\odot\bm{h}_{\mu}+\bm{r}_{1\mu}=\bm{\widetilde{t}}_{\mu},
|\bm{r}_{1\sigma}\odot\bm{h}_{\sigma}|=|\bm{\widetilde{t}}_{\sigma}|\\
  &\bm{r}_{2\sigma}\odot\bm{\widetilde{t}}_{\mu}+\bm{r}_{2\mu}=\bm{t}_{\mu},
|\bm{r}_{2\sigma}\odot\bm{\widetilde{t}}_{\sigma}|=|\bm{t}_{\sigma}|\\
  &\bm{r}_{3\sigma}\odot\bm{h}_{\mu}+\bm{r}_{3\mu}=\bm{t}_{\mu},
|\bm{r}_{3\sigma}\odot\bm{h}_{\sigma}|=|\bm{t}_{\sigma}|
\end{align*}
then we have
\begin{align*}
  &\bm{r}_{1\sigma}\odot\bm{r}_{2\sigma}=\bm{r}_{3\sigma}\\
  &\bm{r}_{2\sigma}\odot\bm{r}_{1\mu}+\bm{r}_{2\mu}=\bm{r}_{3\mu}
\end{align*}
\end{proof}

\section{Experimental Details}
\label{appendix:4}
\paragraph{Datasets}
We evaluate our model on three popular knowledge graph completion datasets, WN18RR \cite{dettmers2018convolutional}, FB15k-237 \cite{toutanova2015representing} and YAGO3-10 \cite{dettmers2018convolutional}. WN18RR is a subset of WN18, with inverse relations removed. WN18 is extracted from WordNet, a database containing lexical relations between words. FB15k-237 is a subset of FB15k, with inverse relations removed. FB15k is extracted from Freebase, a large database of real world facts. YAGO3-10 is a subset of YAGO3 that only contains entities with at least 10 relations. The statistics of the datasets are shown in Table \ref{table:4}.
\begin{table}[ht]
\caption{The statistics of the datasets.}
\label{table:4}
\vskip 0.15in
\begin{center}
\begin{small}
\begin{tabular}{llllll}
\toprule
Dataset &\#entity &\#relation &\#train &\#valid &\#test\\
\midrule
WN18RR    &40,943 &11 &86,835 &3,034 &3,134\\
FB15k-237 &14,541 &237 &272,115 &17,535 &20,466\\
YGAO3-10  &123,188 &37 &1,079,040 &5,000 &5,000\\
\bottomrule
\end{tabular}
\end{small}
\end{center}
\vskip -0.1in
\end{table}

\paragraph{Evaluation Metrics}
MR=$\frac{1}{N}\sum_{i=1}^{N}\rank_{i}$, where $\rank_{i}$ is the rank of $i$th triplet in the test set and $N$ is the number of the triplets. Lower MR indicates better performance.

MRR=$\frac{1}{N}\sum_{i=1}^{N}\frac{1}{\rank_{i}}$. Higher MRR indicates better performance.

$\text{Hits@N}=\frac{1}{N}\sum_{i=1}^{N}\mathbb{I}({\rank_{i}\leq N}$), where $\mathbb{I}(\cdot)$ is the indicator function. $\text{Hits@N}$ is the ratio of the ranks that no more than $N$, Higher $\text{Hits@N}$ indicates better performance.
\paragraph{Hyper-parameters}
We used Adam \cite{kingma2014adam} with exponential decay as the optimizer. We set the embedding dimension to 1024 for all models. We search the learning rate in $\{0.0005, 0.001, 0.003, 0.005, 0.01\}$, decay rate in $\{0.9, 0.93, 0.95, 1.0\}$, batch size in $\{128, 256, 512, 1024\}$, margin in $\{0, 1, 2, 3,4,5,6,7,8,9,10,11,12,13,14,15\}$. We first search the best margin, then search other hyperparameters. Denote Eq.(\ref{equation:15}) as NFE-3. We further show the result of NFE-3 in Table \ref{table:5}. See Table \ref{table:6}, Table \ref{table:7} and Table \ref{table:8} for the best hyper-parameters we searched.

\begin{table}[ht]
\caption{The results of NFE-3 on WN18RR, FB15k-237 and YAGO3-10 datasets.}
\label{table:5}
\vskip 0.15in
\begin{center}
\begin{small}
\begin{tabular}{llll}
\toprule
Dataset &MRR    &H@1    &H@10\\
\midrule
WN18RR    &0.440  &0.400  &0.521\\
FB15k-237 &0.340   &0.246  &0.530\\
YGAO3-10  &0.550   &0.470  &0.695\\
\bottomrule
\end{tabular}
\end{small}
\end{center}
\vskip -0.1in
\end{table}

\begin{table}[ht]
\caption{The hyper-parameters of NFE-1.}
\label{table:6}
\vskip 0.15in
\begin{center}
\begin{small}
\begin{tabular}{lllll}
\toprule
Dataset &learning rate &decay rate &batch size &margin\\
\midrule
WN18RR    &0.005    &0.9    &128 &1\\
FB15k-237 &0.001   &0.93   &1024 &2 \\
YGAO3-10  &0.001   &0.95  &1024 &8 \\
\bottomrule
\end{tabular}
\end{small}
\end{center}
\vskip -0.1in
\end{table}

\begin{table}[ht]
\caption{The hyper-parameters of NFE-2.}
\label{table:7}
\vskip 0.15in
\begin{center}
\begin{small}
\begin{tabular}{lllll}
\toprule
Dataset &learning rate &decay rate &batch size &margin\\
\midrule
WN18RR    &0.003    &0.93   &128 &0\\
FB15k-237 &0.001    &0.9   &256 &1\\
YGAO3-10  &0.001   &0.95  &1024 &7\\
\bottomrule
\end{tabular}
\end{small}
\end{center}
\vskip -0.1in
\end{table}

\begin{table}[ht]
\caption{The hyper-parameters of NFE-3.}
\label{table:8}
\vskip 0.15in
\begin{center}
\begin{small}
\begin{tabular}{lllll}
\toprule
Dataset &learning rate &decay rate &batch size &margin\\
\midrule
WN18RR    &0.01    &0.93    &512 &14\\
FB15k-237 &0.001  &0.9   &256  &2\\
YGAO3-10  &0.001   &0.93 &1024 & 11\\
\bottomrule
\end{tabular}
\end{small}
\end{center}
\vskip -0.1in
\end{table}

\paragraph{Logical Rules}
We have proved that NFE-1 is able to learn logical rules in Proposition \ref{proposition:6}. We compute the relevant statistics to verify whether NFE-1 can learn symmetry, antisymmetry, inverse and composition rules.

\textbf{Symmetry Rules}:
We investigate the embeddings of relations from a 1024-dimensional NFE-1 trained on WN18RR dataset.
Figure \ref{figure:3} shows the histogram of the statistic $(|\bm{r}_{\sigma}\odot\bm{r}_{\sigma}-\bm{1}|,|\bm{r}_{\sigma}\odot\bm{r}_{\mu}+\bm{r}_{\mu})|$ from a symmetry relation $\emph{similar\_to}$. We can
find that most of the values are close to 0. This shows that NFE-1 can learn symmetry rules.

\textbf{Antisymmetry Rules}:
We investigate the embeddings of relations from a 1024-dimensional NFE-1 trained on WN18RR dataset.
Figure \ref{figure:4} shows the histogram of the statistic $(|\bm{r}_{\sigma}\odot\bm{r}_{\sigma}-\bm{1}|,|\bm{r}_{\sigma}\odot\bm{r}_{\mu}+\bm{r}_{\mu}|)$ from a antisymmetry relation $\emph{\_instance\_hypernym}$. We can
find that many of the values are not close to 0. This shows that NFE-1 can learn antisymmetry rules.

\textbf{Inverse Rules}:
We investigate the embeddings of relations from a 1024-dimensional NFE-1 trained on WN18RR dataset.
Figure \ref{figure:5} shows the histogram of the statistic $(|\bm{r}_{2\sigma}\odot\bm{r}_{1\sigma}-\bm{1}|,|\bm{r}_{2\sigma}\odot\bm{r}_{1\mu}+\bm{r}_{2\mu}|)$ from a relation $\emph{similar\_to}$ and its inverse relation $\emph{inverse\_similar\_to}$. We can
find that most of the values are close to 0. This shows that NFE-1 can learn inverse rules.

\textbf{Composition Rules}:
We investigate the embeddings of relations from a 1024-dimensional NFE-1 trained on FB15k-237 dataset.
Figure \ref{figure:6} shows the histogram of the statistic $(|\bm{r}_{1\sigma}\odot\bm{r}_{2\sigma}-\bm{r}_{3\sigma}|,|\bm{r}_{2\sigma}\odot\bm{r}_{1\mu}+\bm{r}_{2\mu}-\bm{r}_{3\mu}|)$ of a relation $\emph{/award/award\_nominee/award\_nominations./award/award\_nomination/nominated\_for}$, a relation $\emph{/award/award\_category/winners./award/award\_honor/award\_winner}$ and their composition relation
$\emph{/award/award\_category/nominees./award/award\_nomination/nominated\_for}$. We can
find that most of the values are almost 0. This shows that NFE-1 can learn composition rules.

\begin{figure}[t]
\begin{center}
\centerline{\includegraphics[width=0.8\textwidth]{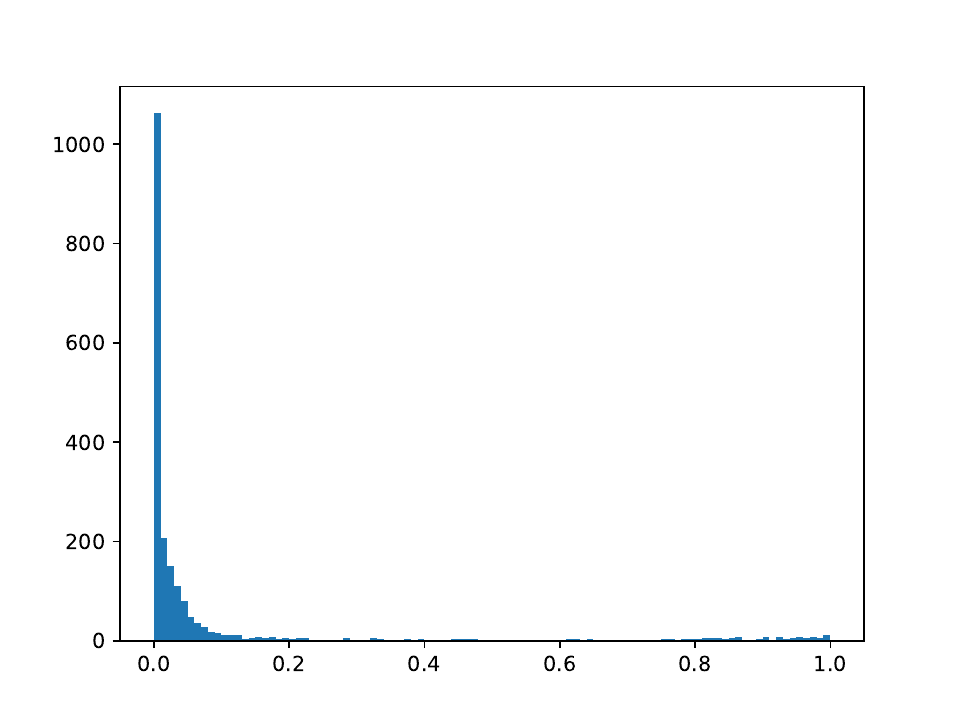}}
\caption{The histogram of the statistic $(|\bm{r}_{\sigma}\odot\bm{r}_{\sigma}-\bm{1}|,|\bm{r}_{\sigma}\odot\bm{r}_{\mu}+\bm{r}_{\mu}|)$.}
\label{figure:3}
\end{center}
\end{figure}

\begin{figure}[t]
\begin{center}
\centerline{\includegraphics[width=0.8\textwidth]{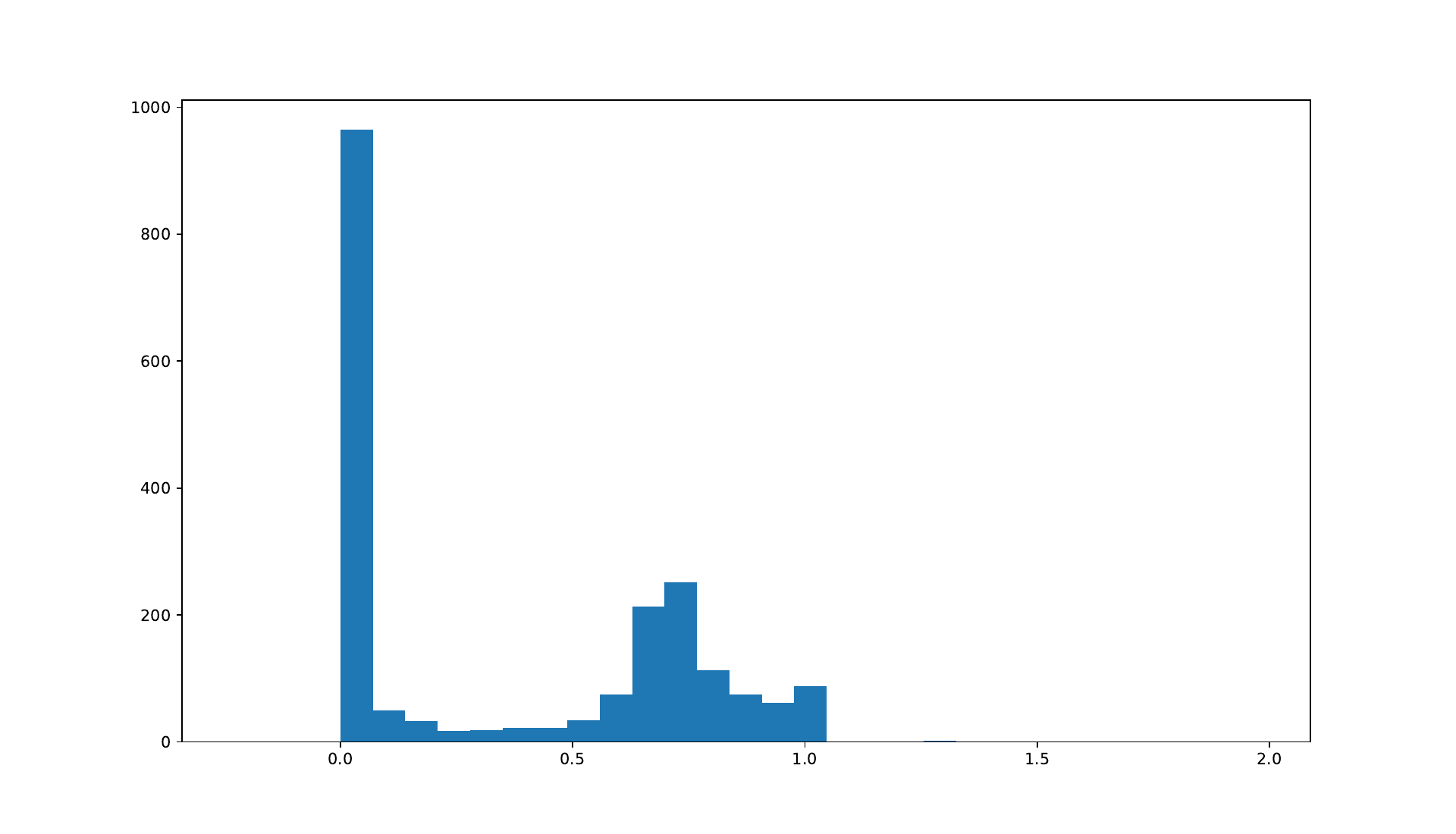}}
\caption{The histogram of the statistic $(|\bm{r}_{\sigma}\odot\bm{r}_{\sigma}-\bm{1}|,|\bm{r}_{\sigma}\odot\bm{r}_{\mu}+\bm{r}_{\mu}|)$.}
\label{figure:4}
\end{center}
\end{figure}

\begin{figure}[t]
\begin{center}
\centerline{\includegraphics[width=0.8\textwidth]{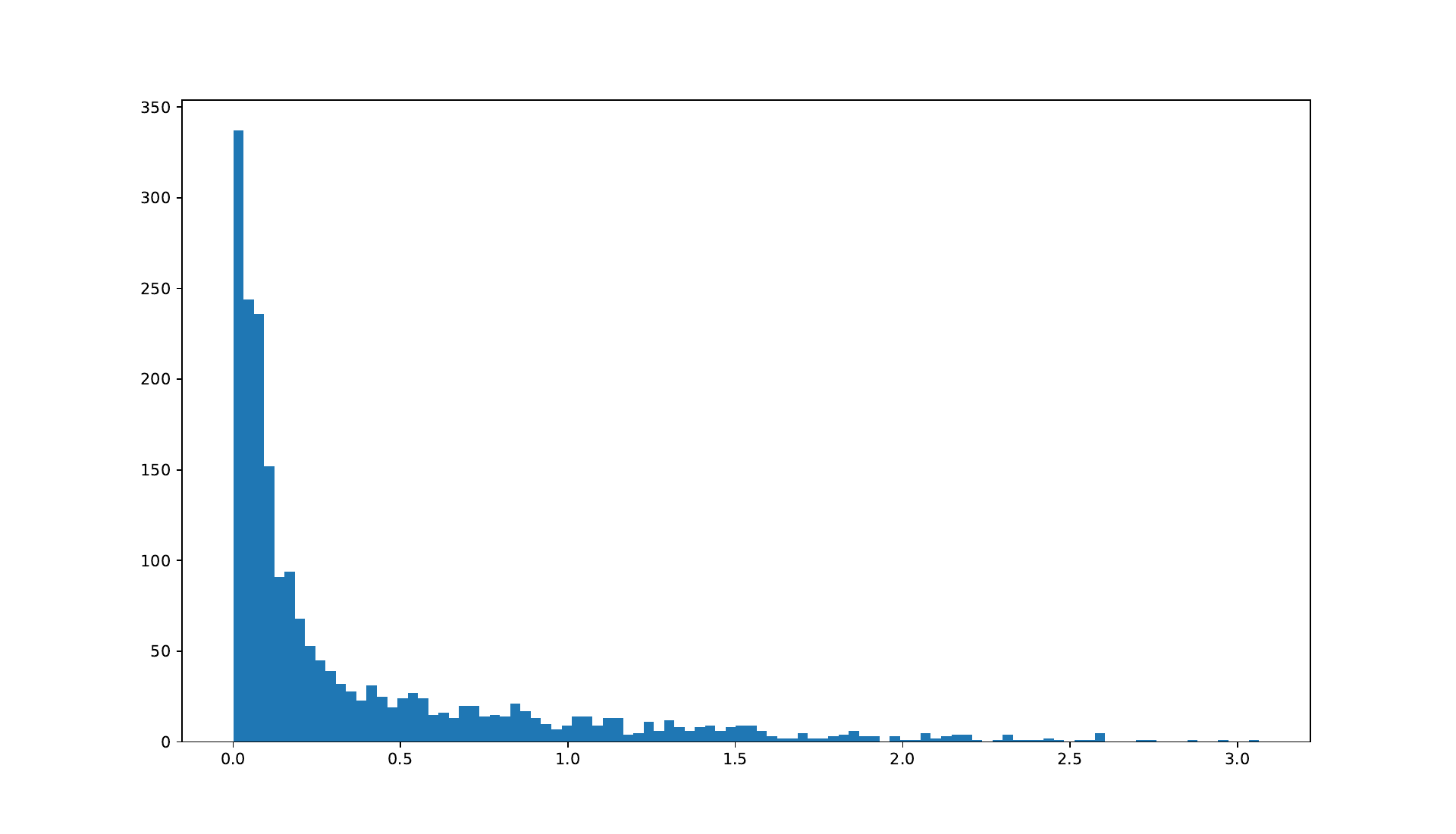}}
\caption{The histogram of the statistic $(|\bm{r}_{2\sigma}\odot\bm{r}_{1\sigma}-\bm{1}|,|\bm{r}_{2\sigma}\odot\bm{r}_{1\mu}+\bm{r}_{2\mu}|)$.}
\label{figure:5}
\end{center}
\end{figure}

\begin{figure}[t]
\begin{center}
\centerline{\includegraphics[width=0.8\textwidth]{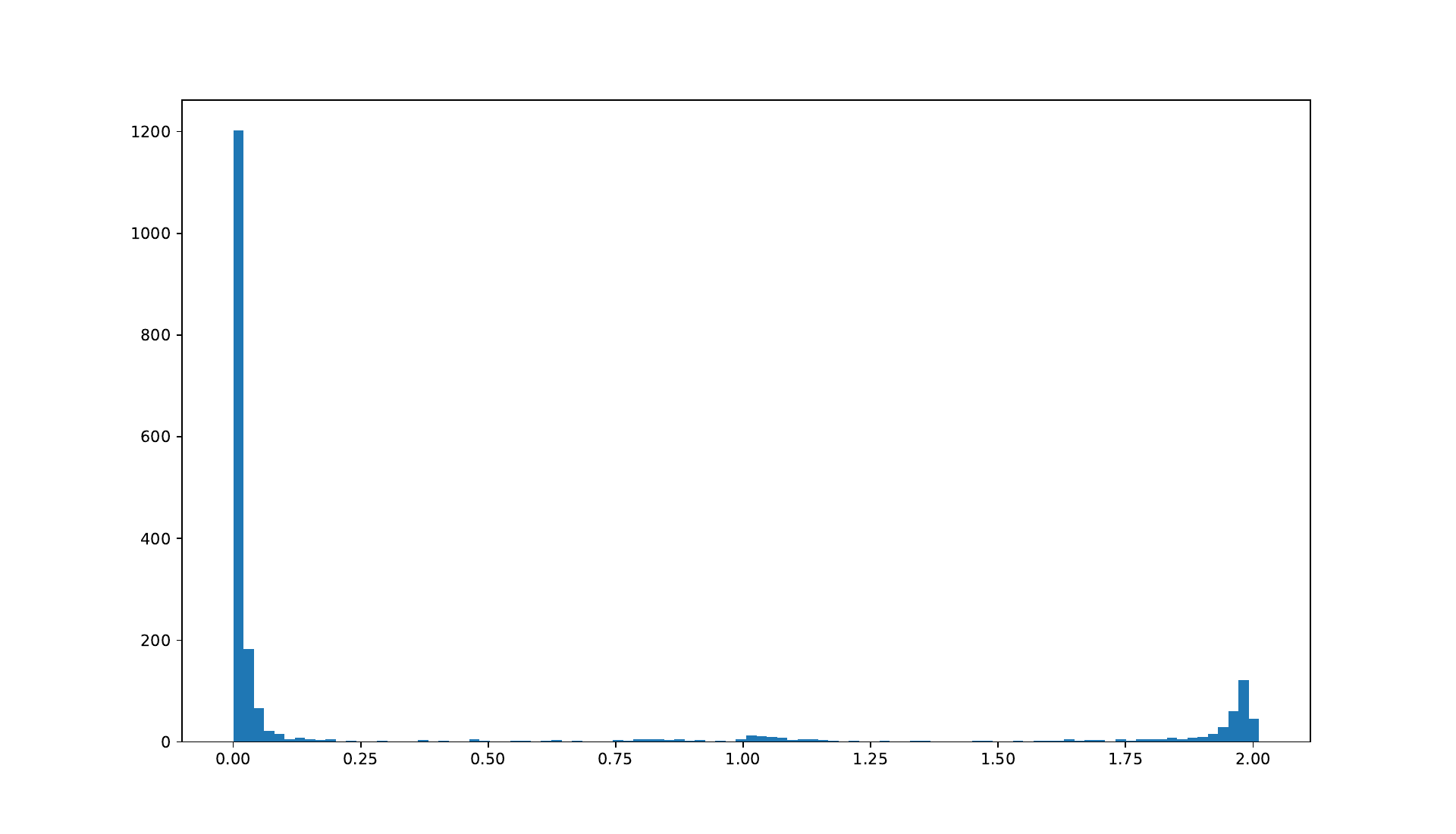}}
\caption{The histogram of the statistic $(|\bm{r}_{1\sigma}\odot\bm{r}_{2\sigma}-\bm{r}_{3\sigma}|,|\bm{r}_{2\sigma}\odot\bm{r}_{1\mu}+\bm{r}_{2\mu}-\bm{r}_{3\mu}|)$.}
\label{figure:6}
\end{center}
\end{figure}


\end{document}